\documentclass[11pt]{article}

\usepackage[margin=1in]{geometry}
\usepackage{microtype}
\usepackage{graphicx}
\usepackage{subcaption}
\usepackage{booktabs}
\usepackage{hyperref}
\usepackage{amsmath, amssymb, mathtools, amsthm}
\usepackage{setspace}
\usepackage{bbm}
\usepackage{amsfonts}
\usepackage{scalerel}
\usepackage[title]{appendix}
\usepackage[capitalize,noabbrev]{cleveref}
\usepackage[disable,textsize=tiny]{todonotes}

\usepackage{macros}

\DeclareMathOperator*{\argmax}{arg\,max}
\DeclareMathOperator*{\argmin}{arg\,min}

\theoremstyle{plain}
\newtheorem{theorem}{Theorem}[section]
\newtheorem{conjecture}{Conjecture}[section]
\newtheorem{proposition}[theorem]{Proposition}
\newtheorem{lemma}[theorem]{Lemma}

\theoremstyle{definition}

\theoremstyle{remark}
\newtheorem{remark}[theorem]{Remark}

\title{Contextual Graph Matching with Correlated Gaussian Features}

\author{
Mohammad Hassan Ahmad Yarandi\\
Sharif University of Technology, Tehran, Iran\\
\texttt{mh.ahmad.yarandi@gmail.com}
\and
Luca Ganassali\\
Université Paris-Saclay, CNRS, Inria\\
Laboratoire de mathématiques d’Orsay, Orsay, France\\
\texttt{luca.ganassali@universite-paris-saclay.fr}
}

\date{}

\begin{document}

\maketitle

\begin{abstract}
We investigate contextual graph matching in the Gaussian setting, where both edge weights and node features are correlated across two networks. 
We derive precise information-theoretic thresholds for exact recovery, and identify conditions under which almost exact recovery is possible or impossible, in terms of graph and feature correlation strengths, the number of nodes, and feature dimension. Interestingly, whereas an all-or-nothing phase transition is observed in the standard graph-matching scenario, the additional contextual information introduces a richer structure: thresholds for exact and almost exact recovery no longer coincide. Our results provide the first rigorous characterization of how structural and contextual information interact in graph matching, and establish a benchmark for designing efficient algorithms.
\end{abstract}

\section{Introduction and related work}
Database matching is a statistical inference problem which consists of identifying alignments between anonymous databases. In the most basic model, there are two databases with corresponding entities that are correlated through an underlying statistical model. The goal is to infer the correspondence using only observations drawn from the model. 


We first review the two well-studied types of database matching problems, and then introduce a third variant, which combines elements of the first two and serves as the main focus of this paper.

\subsection{Feature-based database matching} 

In feature-based database matching~\cite{cullina2018fundamental,dai2023fundamental,zeynep2024database},  a database consists of $n$ users, each user having a feature vector in $\dR^d$.
Consider two databases $\mathbf{X}, \mathbf{Y} \in \dR^{n \times d}$ referring to the same underlying set of users, where pairs of feature vectors of the form $(\mathbf{X}_{i,j}, \mathbf{Y}_{\pi^*(i),j})_{j=1,\ldots,d}$, associated with the same user, are correlated across databases. The one-to-one map $\pi^*$ represents the correspondence between users, which is unknown.
The objective is to recover $\pi^*$ based on the observation of $(\mathbf{X}, \mathbf{Y})$.

For example, a user’s features across two platforms -- such as age, preferences, and ratings -- are often correlated.
By leveraging the information available in one of the databases, one may be able to de-anonymize the second one~\cite{narayanan2008robust}.

One of the standard statistical models for this problem is the Gaussian model. In the Gaussian setting, all pairs of corresponding features across users, i.e., $(\mathbf{X}_{i,j}, \mathbf{Y}_{\pi^*(i),j})$, are drawn i.i.d. from a joint Gaussian distribution with zero mean, unit variance, and correlation coefficient $\eta \in [-1,1]$. 

Regarding the literature on feature-based database matching, \cite{cullina2018fundamental} is among the earliest works to establish a sharp information-theoretic threshold for exact recovery in the case where features take values from a finite alphabet. In the Gaussian setting, several works have investigated the fundamental limits of the problem for different criteria, including exact and almost exact recovery~\cite{dai2019database,dai2023gaussian}, as well as correlation detection~\cite{zeynep2022detecting}. Notably, for exact recovery in the Gaussian setting, \cite{dai2019database} establishes a sharp threshold for the correlation coefficient $\eta$ at $\eta_c$, defined by
\begin{equation*}
\log\left(\frac{1}{1-\eta_c^2}\right) = 4\frac{\log(n)}{d},
\end{equation*}
below which exact recovery of $\pi^*$ is impossible and above which it is achievable. In the limit as $\eta_c \to 0$, the threshold simplifies to $\eta_c = 2\sqrt{\frac{\log(n)}{d}}$.

\subsection{Graph matching}
Graph matching is a well-known instance of database matching ~\cite{fan2019spectral,ding2021efficient,ganassali2022graph}.
Consider two graphs $\mG_1$ and $\mG_2$ with adjacency matrices $\bA$ and $\bB$, 
such that each pair $(\bA_{i,j}, \bB_{\pi^*(i),\pi^*(j)})$ represents the interactions between the same pair of entities across the two databases. 
Unlike the previous setting, where information was based on individual-level attributes (nodes), here the information is encoded in the interactions between users (edges). 


With the growing availability of graph-structured data, the challenge of recovering such correspondences has become increasingly significant, with applications in pattern recognition~\cite{berg2005shape}, bioinformatics~\cite{kazemi2016proper}, and social network analysis~\cite{korula2013efficient}. 

The two most widely studied statistical models for graph matching are the Gaussian model and the \ER model. In the Gaussian setting, it is assumed that corresponding edges $(\bA_{i,j}, \bB_{\pi^*(i),\pi^*(j)})$ are drawn i.i.d. from a joint Gaussian distribution with standard Gaussian marginals and correlation coefficient $\rho \in [-1,1]$. In the \ER setting, $\mG_1$ and $\mG_2$ are modeled as \ER graphs, with edges following an i.i.d. correlated Bernoulli distribution specified by  $\dP\!\left(\bA_{i,j}=a,\, \bB_{\pi^*(i),\pi^*(j)}=b\right) = p_{ab}$ for $a,b \in \{0,1\},$  where the parameters $p_{ab}$ characterize the joint distribution of corresponding edges across the two graphs.

The information-theoretic limits of graph matching for correlated \ER graphs have been investigated under various recovery regimes, including exact recovery~\cite{pedarsani2011privacy,cullina2017exact,WJY22}, almost exact and partial recovery~\cite{cullina2020partial,ganassali2021impossibility,hall2023partial}, and correlation detection~\cite{wu2023testing}. 
Analogously,~\cite{ganassali2022sharp,WJY22} examine the information-theoretic limits of graph matching under the Gaussian model. In particular,~\cite{WJY22} establishes a sharp threshold for the correlation coefficient $\rho$ at $\rho_c$ given by
\begin{equation*}
\rho_c = 2\sqrt{\frac{\log(n)}{n}},
\end{equation*} below which no algorithm can recover any positive fraction of the correct matches, and above which the maximum-a-posteriori estimator succeeds in exact recovery.

Another thread of research focuses on the algorithmic aspects of the graph matching problem~\cite{GLM19,fan2019spectral,DMWX18,barak2019nearly,varma2025graph,mao2024testing}. We also mention recent works on multi-graph alignment \cite{ameen2025detecting,vassaux2025feasibility}, which considers the matching problem for $m \geq 2$ correlated graphs. 

\subsection{Graph matching with feature information: contextual graph matching} 
In many real-world applications, both node features and graph structure are often available, which motivates the modeling of both graph structure and database information simultaneously
\cite{yang2024exact,yang2025exact}. More precisely, consider two databases, $(\bX,\mG_1)$ and $(\bY,\mG_2)$, which contain information about the same set of users through the feature matrices $\bX,\bY\in\dR^{n\times d}$ and the interaction graphs $\mG_1,\mG_2$ with associated adjacency matrices $\bA,\bB$. 
Consequently, $(\mathbf{X}_{i,j}, \mathbf{Y}_{\pi^*(i),j})$ and $(\bA_{i,j}, \bB_{\pi^*(i),\pi^*(j)})$ represent the alignments in feature and graph information, respectively.
Utilizing both sources of information can enhance the identification of the underlying alignment, from both computational and information-theoretic perspectives.

Similar to the previous matching problems, two statistical models can be considered for contextual graph matching. In the Gaussian model, the graphs $\mG_1$ and $\mG_2$ are assumed to be correlated Gaussian graphs with correlation parameter $\rho$, while the feature matrices $\bX$ and $\bY$ consist of correlated Gaussian features with correlation parameter $\eta$. In the \ER model with Gaussian features, the graphs are assumed to be correlated \ER graphs with parameters $p_{ab}$ for $a,b\in\{0,1\}$, and the features are drawn from a joint Gaussian distribution with correlation coefficient $\eta$.

Although graph matching and feature-based matching problems have been extensively studied in the literature, the joint modeling of these two sources of information has received significantly less attention. To the best of our knowledge, the only existing works in this direction are the recent studies by~\cite{yang2024exact,yang2025exact}. In~\cite{yang2024exact} they characterized the information-theoretic limits of exact recovery in correlated \ER graphs with correlated Gaussian features, establishing a sharp threshold for exact recovery of the form
\[
np_{11}+\frac{d}{4}\log\left(\frac{1}{1-\eta^2}\right) = \log(n),
\]
under additional technical conditions. In~\cite{yang2025exact}, the setting is extended to
correlated stochastic block models with two communities. They derive conditions for both exact matching and exact community detection recovery in this model. 

However, the setting involving Gaussian graph models with correlated Gaussian features remains unexplored. This forms the focus of our investigation in this paper.


\section{Problem formulation and main results}\label{sec:problem}
\subsection{Notations}
Throughout this paper, we denote the set of all permutations of $ [n]=\{1,\ldots,n\} $ by $ \mS_n $. For brevity we denote by $\pi(i,j) := \{\pi(i),\pi(j)\}$ the non-oriented edge between $\pi(i)$ and $\pi(j)$ for some $\pi \in \mS_n$. 
For each permutation $ \pi \in \mS_n $, $ D_{\pi} $ and $ F_{\pi} $ (respectively, $ D^{E}_{\pi} $ and $ F^{E}_{\pi} $) denote the sets of unfixed and fixed points of the permutation $ \pi $ acting on the nodes (respectively, on the edges). More precisely,  
\begin{align*}
D_\pi&\vcentcolon=\left\{i\in[n]:\pi(i)\neq i\right\}, \; \;  F_\pi\vcentcolon=[n] \setminus D_\pi,\\
D^{E}_{\pi}&\vcentcolon=\big\{\{i,j\}\in E: \pi(i,j)\neq \{i,j\}\big\}, \; \;  F^{E}_{\pi}\vcentcolon=E\setminus D^{E}_{\pi} \, .
\end{align*} We partition $ \mS_n $ as follows:
\[\mS_n = \{\mathrm{id}\} \cup \bigcup_{t=2}^{n} \mS_{n,t},
\] where $\mS_{n,t}$ is the set of permutations of $\mS_n$ that differ from $\mathrm{id}$ by exactly $t$ unfixed points, that is $\mS_{n,t}:= \{\pi \in \mS_n, |D_\pi|=t\}$.

For $\pi,\pi' \in \cS_n$, define their overlap:
$$\ov(\pi,\pi') := |\{ i \in [n], \pi(i)=\pi'(i) \}| = |D_{\pi^{-1} \circ \pi'}| \, .$$

We say that a sequence of events $\mathcal{E}_n$ happens with high probability (w.h.p.) if 
$
\dP(\mathcal{E}_n) \overset{n\to\infty}{\longrightarrow} 1,
$
or, equivalently, $\dP(\mathcal{E}_n) = 1 - o(1)$. 

A sequence of estimators $(\hat{\pi}_n)_{n \geq 1}$ of $(\pi^*_n)_{n \geq 1}$ where for all $n \geq 1$, $\pi^*_n \in \mS_n$, is said to achieve
\begin{itemize}
    \item \emph{exact recovery} if $\hat{\pi}_n=\pi^*_n$ w.h.p.;
    \item \emph{almost exact recovery} if for all $\delta \in (0,1)$, $\ov(\hat{\pi}_n,\pi^*_n)>\delta$ w.h.p.;
    \item \emph{partial recovery} if there exists  $\delta \in (0,1)$ such that $\ov(\hat{\pi}_n,\pi^*_n)>\delta$ w.h.p.
\end{itemize} Throughout, we may omit the dependence on $n$.

\subsection{Problem formulation}

In our setting, the statistical model is defined as follows. Let $\mG_1$ and $\widetilde{\mG_2}$ be graphs with same node set $[n]$, with weighted adjacency matrices $\bA$ and $\widetilde{\bB}$, where the edge weight pairs $\{(\bA_{i,j}, \widetilde{\bB}_{i,j}) : 1 \leq i < j \leq n\}$ are i.i.d. standard Gaussian random variables with correlation $\rho$. Then, $\mG_2$ is obtained by permuting the vertices of $\widetilde{\mG}_2$ according to a uniform random permutation $\pi^*$ in $\cS_n$. That is, the joint distribution of edge weights is given by
\begin{equation}
\label{1}
\left(\bA_{i,j}, \bB_{\pi^*(i), \pi^*(j)}\right) \sim \mathcal{N}\left(\begin{bmatrix}
0 \\ 0
\end{bmatrix}, \begin{bmatrix}
1 & \rho \\ 
\rho & 1
\end{bmatrix}\right),
\end{equation}
where $\rho\in[-1,1]$.

For the feature information, we draw two matrices $\mathbf{X}, \widetilde{\mathbf{Y}} \in \dR^{n \times d}$ are such that each pair of corresponding entries $(\mathbf{X}_{i,j}, \widetilde{\mathbf{Y}}_{i,j})$ are also i.i.d. standard Gaussian random variables with correlation $\eta$. The matrix $\mathbf{Y}$ is then formed by permuting the rows of $\widetilde{\mathbf{Y}}$ using the \emph{same} permutation $\pi^*$ used for the graph structure. The feature-level data is modeled as
\begin{equation}
\label{2}
\left(\mathbf{X}_{i,j}, \mathbf{Y}_{\pi^*(i),j}\right) \sim \mathcal{N}\left(\begin{bmatrix}
0 \\ 0
\end{bmatrix}, \begin{bmatrix}
1 & \eta \\ 
\eta & 1
\end{bmatrix}\right),
\end{equation}
where $\eta\in[-1,1]$.
Using the permutation matrix $\mathbf{\Pi^*}$ corresponding to $\pi^*$, this probabilistic model can be also expressed as:
\begin{equation}
\label{statm}
\begin{aligned}
\bB &= \rho\mathbf{\Pi^*}^T\bA\mathbf{\Pi^*} + \sqrt{1-\rho^2}\mathbf{Z}, \\  
\mathbf{Y} &= \eta\mathbf{\Pi^*}^T\mathbf{X} + \sqrt{1-\eta^2}\mathbf{Z^{'}},
\end{aligned}
\end{equation}
where $\mathbf{Z}$ and $\mathbf{Z^{'}}$ are matrices with independent standard Gaussian entries.

In the above statistical model~(\ref{statm}), the goal 
is to infer the underlying permutation $\pi^*$ using an estimator $\hat{\pi}$ based on the observed data: $\hat{\pi}= \hat{\pi}(\bA, \bB,\mathbf{X}, \mathbf{Y})$.

\subsection{Main results}
\label{MR}

As explained in the introduction, we investigate the information-theoretic limits of contextual graph matching in the Gaussian setting.
Our first main result is regarding establishing a sharp information-theoretic threshold for exact recovery. 

\begin{theorem}[\textbf{Exact Recovery}]\label{exc} \,
\begin{itemize}
\item[$(i)$] (Achievability Result): if $d = \omega(\log n)$ and for sufficiently large $n$, the following condition holds:
\begin{equation*}
\frac{\rho^2n}{1-\rho^2} + \frac{\eta^2d}{1-\eta^2} \geq 4(1 + \eps) \log n, \end{equation*} for some $\eps > 0$, then there exists an estimator (namely, the MAP estimator) $\hat{\pi}:(\bA,\bB,\bX,\bY) \to \mathcal{S}_{n}$ such that
\begin{equation*}
\dP\left(\hat{\pi} = \pi^*\right) = 1 - o(1).
\end{equation*} \label{tha} 

\item[$(ii)$] (Converse Result): If $ d = \omega((\log n)^2) $ and
\begin{equation*}
\frac{\rho^2n}{1-\rho^2} + \frac{\eta^2d}{1-\eta^2} \leq 4\log n - \log\log n - \omega(1),
\end{equation*}
then, for any estimator $\hat{\pi}: (\bA, \bB, \bX, \bY) \to \mS_n$,
\begin{equation*}
\dP(\hat{\pi} = \pi^*) = o(1).
\end{equation*} \label{thc}
\end{itemize}
\end{theorem}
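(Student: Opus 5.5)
The plan is to run both directions through the MAP estimator. Since $\pi^*$ is uniform, the posterior is proportional to the likelihood, and the log-likelihood ratio between two candidates $\pi$ and $\mathrm{id}$ is
\[
\frac{\rho}{1-\rho^2}\sum_{i<j}\bigl(\bA_{i,j}\bB_{\pi(i,j)}-\bA_{i,j}\bB_{i,j}\bigr)+\frac{\eta}{1-\eta^2}\sum_{i,k}\bigl(\bX_{i,k}\bY_{\pi(i),k}-\bX_{i,k}\bY_{i,k}\bigr).
\]
So the MAP estimator maximizes $\frac{\rho}{1-\rho^2}\langle \bA,\Pi^T\bB\Pi\rangle+\frac{\eta}{1-\eta^2}\langle \bX,\Pi\bY\rangle$. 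By symmetry we may take $\pi^*=\mathrm{id}$.

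\textbf{Achievability.} We use a union bound $\dP(\hat\pi\neq \mathrm{id})\le\sum_{t\ge2}\sum_{\pi\in\mS_{n,t}}\dP(L(\pi)\ge L(\mathrm{id}))$, where $L$ denotes the objective above.

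For fixed $\pi\in\mS_{n,t}$, the difference $L(\pi)-L(\mathrm{id})$ involves only the $|D^E_\pi|$ moved edges and the $t$ moved rows. It is a quadratic form in Gaussians, so a Chernoff bound applies: $\dP(\cdot)\le\inf_\lambda\dE e^{\lambda(L(\pi)-L(\mathrm{id}))}$. The moment generating function factorizes over the cycles of $\pi$ acting on nodes (for features) and on edges (for the graph). For a $k$-cycle of correlated Gaussian pairs the MGF is an explicit determinant, and at the optimal $\lambda=1/2$ each moved coordinate contributes roughly a factor $(1+\rho^2/(4(1-\rho^2)))^{-1/2}$, up to boundary effects from 2-cycles. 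This gives a bound of the form
\[
\dP(L(\pi)\ge L(\mathrm{id}))\le \exp\Bigl(-\tfrac{|D^E_\pi|}{4}\log\bigl(1+\tfrac{\rho^2}{1-\rho^2}\bigr)c_1-\tfrac{td}{4}\log\bigl(1+\tfrac{\eta^2}{1-\eta^2}\bigr)c_2\Bigr),
\]
with constants $c_i\to1$ in the relevant regime. Using $|D^E_\pi|\ge t(n-t)+\binom t2-O(t)\approx tn(1-t/(2n))$ and $|\mS_{n,t}|\le n^t$, each term is at most $\exp(t\log n-\tfrac t4(\cdot))$.

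For small $t$ (say $t\le\delta n$) the hypothesis $\frac{\rho^2 n}{1-\rho^2}+\frac{\eta^2 d}{1-\eta^2}\ge4(1+\eps)\log n$ makes the exponent $\le-\eps' t\log n$, which is summable in $t$. I expect the main obstacle to be the linearization $\log(1+x)\approx x$: it is only valid when $x$ is small, and large $t$ loses the factor $1-t/(2n)$ on the graph part. This is where $d=\omega(\log n)$ enters. If the feature term dominates, $\eta$ may be small so that $\log(1+x)\approx x$ holds, while for large $\eta$ the bound is trivially fine. If instead the graph term dominates with $\rho^2 n\ge c\log n$, then the $n^2t$-scale graph signal handles $t\ge\delta n$ crudely with $(n!)\,e^{-ctn\rho^2}$. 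I would split into these regimes and treat 2-cycles (transpositions) carefully, since they give the tightest constants.

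\textbf{Converse.} Since MAP is optimal under the uniform prior, it suffices to show that MAP fails w.h.p. The plan is to exhibit many transpositions $(i\,j)$ with $L((i\,j))>L(\mathrm{id})$. For a transposition, the difference is approximately Gaussian with mean $-\mu$ and variance of order $\mu$, where $\mu\approx\frac{\rho^2(2n)}{1-\rho^2}+\frac{\eta^2 2d}{1-\eta^2}$ up to normalization. The probability that it is positive is therefore $\approx n^{-2}\sqrt{\log n}\,e^{\omega(1)}$ under the hypothesis. The $\log\log n$ and $\omega(1)$ corrections come from the Gaussian tail prefactor, which requires a sharp (local CLT / Cramér-type) lower tail estimate; the assumption $d=\omega((\log n)^2)$ is what controls the non-Gaussian feature sums. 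The expected number of bad transpositions among $\binom n2$ then diverges. A second-moment argument over pairs of disjoint transpositions, which are nearly independent once we condition on the shared row terms, or a restriction to a set of $n/2$ disjoint candidate pairs via a planted-structure conditioning as in \cite{WJY22}, then shows that at least one bad transposition exists w.h.p. This second-moment step, handling the dependence through shared edges $\bA_{i,\cdot}$, is the other delicate point.
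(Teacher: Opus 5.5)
Your small-$t$ argument is sound. The Chernoff bound at $\lambda=1/2$ is exactly a Bhattacharyya coefficient and gives $\dP(L(\pi)\ge L(\mathrm{id}))\le(1-\rho^2)^{|D^E_\pi|/4}(1-\eta^2)^{td/4}$ for every $\pi\in\mS_{n,t}$, a clean alternative to the paper's conditioning on $(\bB,\bY)$. (Your in-text per-coordinate factor should read $(1-\rho^2)^{1/4}$, as in your display.) Your converse is essentially the paper's second-moment argument over transpositions.

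The genuine gap is the large-$t$ range of the achievability part, and it cannot be handled ``crudely''. The graph signal $\frac{\rho^2}{1-\rho^2}|D^E_\pi|\asymp\rho^2tn$ is of the same order $t\log n$ as $\log|\mS_{n,t}|$, so constants matter. For a single $\pi$ the tail $e^{-\mu_\pi/4}$, with $\mu_\pi=\frac{\rho^2}{1-\rho^2}|D^E_\pi|+\frac{\eta^2}{1-\eta^2}td$, is essentially sharp: given $(\bB,\bY)$, $V(\pi)$ is Gaussian with mean $\approx\mu_\pi$ and variance $\approx2\mu_\pi$. So no refinement of the per-$\pi$ MGF can help. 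Since $|D^E_\pi|\approx t(n-t/2)$, at $t=n$ the union bound over the $\approx n!/e$ derangements requires $\frac12\frac{\rho^2n}{1-\rho^2}+\frac{\eta^2d}{1-\eta^2}\ge4\log n$. With $\eta=0$ and $\frac{\rho^2n}{1-\rho^2}=4(1+\eps)\log n$ this forces $\eps\ge1$; more precisely, the union bound then fails for every $t\gtrsim\frac{2\eps}{1+\eps}n$. The same breakdown occurs whenever the graph carries a non-vanishing share of the $4\log n$ budget and $\eps$ is small. Your case split and the assumption $d=\omega(\log n)$ only handle the linearization $\log\frac1{1-x}\approx\frac{x}{1-x}$; they do nothing about this factor-$2$ loss.

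The missing idea is to exploit the strong correlation among the events $\{V(\pi)<0\}$ for permutations with the same moved set. The self terms $V^*_G(\pi)=\sum_{e\in D^E_\pi}A_eB_e$ and $V^*_F(\pi)$ depend on $\pi$ essentially only through $D_\pi$ (up to $O(t)$ edges). They can therefore be concentrated uniformly at the price of a union bound over $\binom nt\le2^n$ sets rather than $|\mS_{n,t}|$ permutations (Lemma~\ref{lem:main1}); for $t=\Theta(n)$ the resulting fluctuations are $o(\mu_\pi)$. With the self terms frozen at their means, only the cross terms $V_G(\pi),V_F(\pi)$ remain random, and they carry only half of the conditional variance. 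The per-$\pi$ exponent therefore doubles to $\frac{\mu_\pi}{2}\ge\frac t4\bigl(\frac{\rho^2n}{1-\rho^2}+\frac{2\eta^2d}{1-\eta^2}\bigr)(1-o(1))\ge(1+\eps)t\log n\,(1-o(1))$, which beats $|\mS_{n,t}|\le n^t$ for every $t\ge\alpha_0n$. The paper implements this as a first-moment bound on $\sum_{\pi:|D_\pi|>\alpha_0n}e^{-V(\pi)}\mathbf 1_{\mathcal H_1\cap\mathcal H_2}$ (Lemma~\ref{lem:main2} and \eqref{rtrt2}), followed by Markov's inequality and $Z\ge1$.

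For the converse, make two points explicit. Conditioning on $(\bB,\bY)$ makes each $V(\pi)$ exactly Gaussian, so no local CLT is needed. The $O(n^3)$ pairs of transpositions sharing a vertex have conditional correlation $\approx1/4$, so they contribute only $n^{3-16/5+o(1)}=o(1)$ to the second moment.
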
 We then provide a sufficient condition for possibility of almost exact recovery. 
\begin{theorem}[\textbf{Achievability of almost exact recovery}]\label{almexc}
If $d = \omega(\log n)$ and for sufficiently large $n$, the following condition holds:
\begin{equation*}
\frac{\rho^2n}{1-\rho^2} + \frac{2\eta^2d}{1-\eta^2} \geq 4(1 + \eps) \log n,
\end{equation*} for some $\eps > 0$, then there exists an estimator $\hat{\pi}:(\bA,\bB,\bX,\bY) \to \mathcal{S}_{n}$ such that for all $\delta\in (0,1)$,
\begin{equation*}
\dP\left(\ov(\hat{\pi}, \pi^*)>\delta\right) = 1 - o(1).
\end{equation*} \label{ach_alm}
\end{theorem}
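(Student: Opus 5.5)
We propose to analyze the MAP estimator, which under the uniform prior on $\pi^*$ coincides with the maximum likelihood estimator. Writing out the Gaussian likelihoods in~\eqref{statm}, the MAP estimator maximizes
\[
\mathcal{L}(\pi) := \frac{\rho}{1-\rho^2}\sum_{i<j} \bA_{i,j}\bB_{\pi(i),\pi(j)} + \frac{\eta}{1-\eta^2}\sum_{i\in[n]}\langle \bX_{i,\cdot},\bY_{\pi(i),\cdot}\rangle ,
\]
since the diagonal terms $\|\bA\|^2,\|\bB\|^2,\|\bX\|^2,\|\bY\|^2$ do not depend on $\pi$. Without loss of generality we take $\pi^*=\mathrm{id}$. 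Almost exact recovery follows if, for every fixed $\delta\in(0,1)$,
\[
\dP\bigl(\exists \pi:\ \ov(\pi,\mathrm{id})\le\delta n,\ \mathcal{L}(\pi)\ge \mathcal{L}(\mathrm{id})\bigr)=o(1).
\]
We therefore union bound over $t=|D_\pi|\ge (1-\delta)n$, using $|\mS_{n,t}|\le n^t$. For each fixed $\pi\in\mS_{n,t}$ we bound $\dP(\mathcal{L}(\pi)-\mathcal{L}(\mathrm{id})\ge 0)$ by a Chernoff bound $\dE[e^{\lambda(\mathcal{L}(\pi)-\mathcal{L}(\mathrm{id}))}]$, optimized over $\lambda$. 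By independence this moment generating function factorizes into an edge part and a feature part. Each part further factorizes along the cycles of $\pi$ (node cycles for the features, edge-orbit cycles for the graph), and each cycle contributes a Gaussian determinant computed as in~\cite{WJY22,dai2019database}.

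The key quantitative point explains the factor $2$ in front of the feature term, and why only almost exact recovery is claimed. In the exact recovery analysis of \cref{exc}, the bottleneck comes from small $t$, essentially transpositions. There a node-cycle of length $2$ yields a feature exponent of about $\frac{\eta^2 d}{4(1-\eta^2)}$ per unfixed node, i.e.\ only half of what a long cycle would give. For $t\ge(1-\delta)n$, by contrast, we split according to the number $n_1$ of $2$-cycles and the remaining nodes lying in longer cycles. A cycle of length $\ell\ge3$ contributes a feature exponent of roughly $\ell\cdot\frac{\eta^2 d}{2(1-\eta^2)}\cdot(1-o(1))$, valid when $d=\omega(\log n)$ so that the per-coordinate terms concentrate, rather than $\ell\cdot\frac{\eta^2 d}{4(1-\eta^2)}$. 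Meanwhile, the number of permutations with many $2$-cycles is only about $n^{t/2}$, not $n^t$. Concretely, we aim for a per-$\pi$ bound of the form
\[
\exp\Bigl(-\tfrac{t}{4}\Bigl[\tfrac{\rho^2 n}{1-\rho^2}(1-\delta')\Bigr]-\tfrac{\eta^2 d}{2(1-\eta^2)}\cdot\bigl(\text{\# nodes in cycles of length}\ge3\bigr)-\tfrac{\eta^2 d}{4(1-\eta^2)}\cdot 2n_1\Bigr).
\]
For the graph part we use that, for $t$ linear in $n$, the number of moved edges is about $tn(1-\delta/2)$; its exponent is the standard one from~\cite{WJY22}, namely $\frac{\rho^2}{1-\rho^2}\cdot\frac{|D^E_\pi|}{4}$ up to $1+o(1)$. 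Combining with the count $n^{t-n_1}\cdot(\text{choices})$, the sum over $n_1$ and $t$ is $o(1)$ under $\frac{\rho^2 n}{1-\rho^2}+\frac{2\eta^2 d}{1-\eta^2}\ge 4(1+\eps)\log n$. The $2$-cycles are paid for by the reduced entropy $n^{n_1}$ against $2n_1\cdot\frac{\eta^2 d}{4(1-\eta^2)}$, which balances at exactly this threshold.

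The main obstacle is obtaining the sharp per-cycle Chernoff exponent for the feature term, uniformly in cycle length, with the edge and feature contributions handled jointly via a single $\lambda$. We would first try to take $\lambda=\tfrac12$, the symmetric choice for likelihood ratios. This gives $\dE[\sqrt{\text{LR}}]$, i.e.\ a Bhattacharyya-type bound whose cycle determinants are explicit, namely $\prod_k(1-\eta^2\cos\text{-type eigenvalues})$. We would then check that $\log$ of these determinants gives exponents of $\ell\frac{\eta^2d}{2(1-\eta^2)}(1-o(1))$ for $\ell\ge3$, and half that per node for $\ell=2$. If $\lambda=\tfrac12$ loses constants because of the non-small-$\eta$ regime, we would fall back on truncating to a high-probability event where $\|\bX_{i,\cdot}\|^2\approx d$, using $d=\omega(\log n)$, and applying Gaussian concentration directly.
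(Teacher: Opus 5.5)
Your proposal has a genuine gap, and it sits exactly at your ``key quantitative point'': cycles of length $\ell\ge3$ do \emph{not} earn a feature exponent $\ell\frac{\eta^2 d}{2(1-\eta^2)}$.

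Take $\rho=0$ for simplicity, and any $\pi\in\mathcal{S}_{n,t}$ of any cycle type. Conditionally on $\mathbf{Y}$, the difference $\mathcal{L}(\mathrm{id})-\mathcal{L}(\pi)$ is Gaussian with mean $\frac{\eta^2}{1-\eta^2}\sum_{i\in D_\pi,j}(Y_{i,j}^2-Y_{\pi(i),j}Y_{i,j})\approx\frac{\eta^2 td}{1-\eta^2}$ and variance $\frac{\eta^2}{1-\eta^2}\sum_{i\in D_\pi,j}(Y_{i,j}-Y_{\pi(i),j})^2\approx\frac{2\eta^2td}{1-\eta^2}$. So already for typical $\mathbf{Y}$ one gets $\mathbb{P}(\mathcal{L}(\pi)\ge\mathcal{L}(\mathrm{id}))\ge\exp\bigl(-\frac{\eta^2td}{4(1-\eta^2)}(1+o(1))\bigr)$. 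Every cycle length therefore costs only $\approx\frac{\eta^2d}{4}$ per moved node, exactly like a $2$-cycle.

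Your own $\lambda=\frac12$ determinant shows the same thing. For every cyclic $P$ of length $\ell\ge2$, $\mathrm{tr}\bigl(\frac{\eta^2}{4}(I+P)(I+P)^{\top}\bigr)=\frac{\ell\eta^2}{2}$, which gives a log-Bhattacharyya coefficient $\approx-\frac{\ell\eta^2}{2}+\frac{\ell\eta^2}{4}=-\frac{\ell\eta^2}{4}$ per coordinate. Because the probability itself is this large, no choice of $\lambda$ can help.

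The graph count is also off. For $t\ge(1-\delta)n$ one has $|D^E_\pi|\approx t(n-\frac t2)\le\frac{1+\delta}{2}tn$, not $tn(1-\frac\delta2)$. So the per-$\pi$ graph exponent is only about $\frac{\rho^2tn}{8(1-\rho^2)}$ near $t=n$.

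Since $\log|\mathcal{S}_{n,t}|=t\log n(1-o(1))$ for $t=\Theta(n)$, your union bound diverges, for instance when $\rho=0$ and $2(1+\varepsilon)\log n\le\frac{\eta^2d}{1-\eta^2}\le3\log n$. That range lies inside the theorem's regime. A first moment over the events $\{\mathcal{L}(\pi)\ge\mathcal{L}(\mathrm{id})\}$ can at best reproduce an exact-recovery-type condition, and splitting by cycle type cannot produce the factor $2$.

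The missing idea is decoupling, which is what the paper does. The events above are strongly correlated through $V^*_G(\pi)=\sum_{D^E_\pi}A_{i,j}B_{i,j}$ and $V^*_F(\pi)=\sum_{i\in D_\pi,j}X_{i,j}Y_{i,j}$, which depend on $\pi$ essentially only through $D_\pi$.

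Lemma~\ref{lem:main1} pins these terms to their means simultaneously for all $\pi$, at the cost of a union bound over only $\binom nt$ sets. Its errors, $O(t\sqrt{n\log(en/t)})$ and $O(t\sqrt{d\log(en/t)})$, are negligible for $t=\Theta(n)$. Combined with $\mathcal{H}_2$, which controls the conditional variances of $V_G,V_F$ given $\mathbf{B},\mathbf{Y}$, this gives the truncated bound \eqref{rtrt2}: $\mathbb{E}[e^{-V(\pi)}\mathbf{1}_{\mathcal{H}_1\cap\mathcal{H}_2}]\le\exp\bigl(-\frac12(\frac{\rho^2}{1-\rho^2}t(n-\frac t2)+\frac{\eta^2}{1-\eta^2}td)(1-o(1))\bigr)$. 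That is half of $\mathbb{E}[V(\pi)]$ rather than a quarter, uniformly in cycle type.

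Then $n-\frac t2\ge\frac n2$ gives the exponent $\frac t4(\frac{\rho^2n}{1-\rho^2}+\frac{2\eta^2d}{1-\eta^2})\ge(1+\varepsilon)t\log n$. This beats $|\mathcal{S}_{n,t}|\le n^t$, and Markov's inequality plus $Z\ge1$ finish the proof. So the factor $2$ is not a $2$-cycle effect: truncation doubles both exponents, and $|D^E_\pi|\approx\frac{tn}{2}$ at $t\approx n$ halves the graph one.

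Your fallback does not supply this either. Controlling $\|\mathbf{X}_{i,\cdot}\|^2$ does not touch the cross term. Even row-wise control of $\langle\mathbf{X}_{i,\cdot},\mathbf{Y}_{i,\cdot}\rangle$ only gives an error $t\sqrt{d\log n}$, which is of the same order as $\eta td$ when $\eta^2d\asymp\log n$. What is needed is the set-level concentration of Lemma~\ref{lem:main1}.
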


Finally, we provide an information-theoretic lower bound for almost exact recovery when the signal is weaker than in \cref{almexc}, and further show that no more than 50\% of the nodes can be correctly matched under this bound.

\begin{theorem}[Impossibility of recovering more than $50\%$ of the nodes]\label{partial}
If $ d = \omega\left(\left(\log n\right)^2\right) $ and
\begin{equation*}
\frac{\rho^2n}{1-\rho^2} + \frac{2\eta^2d}{1-\eta^2} \leq 2(1 - \eps) \log n,
\end{equation*}
then, for any estimator $\hat{\pi}: (\bA, \bB, \bX, \bY) \to \mS_n$ and for all $\delta\in(0.5,1)$,
\begin{equation*}
\dP\left(\ov(\hat{\pi}, \pi^*)>\delta\right) = o(1).
\end{equation*} 
\end{theorem}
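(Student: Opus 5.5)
We argue through the posterior distribution. Since $\pi^*$ is uniform, the MAP estimator maximizes the overlap probability among all estimators in the following sense. For any estimator, $\dE[\ov(\hat\pi,\pi^*)]/n \le \dE\big[\max_{\pi}\dE[\ov(\pi,\pi^*)\mid \bA,\bB,\bX,\bY]\big]/n$. It therefore suffices to show that w.h.p.\ the posterior $\mu(\pi)\propto \exp\big(\frac{\rho}{1-\rho^2}\sum_{i<j}\bA_{ij}\bB_{\pi(i)\pi(j)}+\frac{\eta}{1-\eta^2}\sum_{i,k}\bX_{ik}\bY_{\pi(i)k}\big)$ puts vanishing mass on permutations having overlap at least $\delta n$ with any fixed candidate. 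Following the planted-versus-null approach of \cite{WJY22,ganassali2021impossibility}, we bound the posterior mass of $\{\pi:\ov(\pi,\pi^*)\ge \delta n\}$ by comparing the contribution of near-$\pi^*$ permutations with the partition function $Z=\sum_\pi \exp(\cdot)$.

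\textbf{Step 1 (lower bound on the partition function).} Let $L(\pi)$ be the likelihood ratio of $\pi$ against the null model, in which both pairs are independent. We compute the first moment $\dE_{\text{null}}[Z/n!]=1$ and control the log-normalized free energy $\frac1n\log Z$ by a second-moment or truncated second-moment argument. The second moment involves the cycle structure of $\sigma=\pi^{-1}\pi'$. Node cycles give the factor $\prod_{\text{cycles}}(1-\eta^{2\ell})^{-d}$, and edge orbits give the analogous factor with $\rho$. The dominant contribution comes from fixed points. A fixed fraction $t$ of fixed points contributes $\exp\big(t n\cdot(\tfrac{d}{2}\log\tfrac1{1-\eta^2}) + t^2\tfrac{n^2}{2}\cdot\tfrac12\log\tfrac1{1-\rho^2}\big)$, while the entropy cost is $n\cdot t\log n$. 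Under the hypothesis $\frac{\rho^2 n}{1-\rho^2}+\frac{2\eta^2 d}{1-\eta^2}\le 2(1-\eps)\log n$, with $d=\omega((\log n)^2)$ ensuring $\eta^2\to0$ and allowing us to linearize $\log(1-\eta^2)$, the exponent $t n\big[\tfrac{\eta^2 d}{2}+\tfrac{t\rho^2 n}{4}-\log n\big]$ is negative for all $t\in(0,1]$. This gives $\dE[Z^2]\le (1+o(1))\dE[Z]^2$ after truncating the atypical noise realizations, hence $Z\ge \dE Z\, e^{-o(n\log n)}$ w.h.p.

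\textbf{Step 2 (upper bound on the mass near the truth).} Under the planted model,
\[
\log L(\pi)-\log L(\pi^*)\approx -\tfrac{\eta^2 d}{2}|D_\pi| -\tfrac{\rho^2}{2}|D^E_\pi| + \text{fluctuations}.
\]
For $\ov(\pi,\pi^*)=tn$, the gain of $\pi^*$-overlapping configurations relative to the null free energy is at most $\exp\big(tn(\tfrac{\eta^2d}{2}+\tfrac{t\rho^2 n}{4}) - tn\log n(1+o(1))\big)$. Here $\log n$ is the entropy of choosing the overlapping part. Using a Chernoff/union bound over $\pi$ with the planted likelihood, the posterior mass of $\{\ov\ge \delta n\}$ is then at most $\sum_{t\ge\delta}\exp\big(tn[\tfrac{\eta^2 d}{2}+\tfrac{t\rho^2 n}{4}-\log n+o(\log n)]\big)$. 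The key point, and the reason for the $50\%$ threshold, is a refinement: under the posterior, one can only certify that overlaps above $1/2$ are suppressed. Indeed, the relevant comparison is between $\pi^*$-neighbourhoods and the typical posterior sample, whose overlaps with any fixed point are symmetric. If $\hat\pi$ had overlap $>\delta>1/2$ with $\pi^*$ with non-vanishing probability, then two independent posterior samples would (by the Nishimori identity, since $(\pi^*,\text{data})$ and $(\pi,\text{data})$ with $\pi\sim\mu$ are equal in law) have mutual overlap $\ge 2\delta-1>0$ with non-vanishing probability. The factor $2$ in front of the feature term corresponds precisely to the second-moment exponent of such a pair.

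\textbf{Step 3 (conclusion and main obstacle).} Combining these, w.h.p.\ $\mu(\{\pi:\ov(\pi,\sigma)\ge(2\delta-1)n\})\to0$ uniformly in $\sigma$, which contradicts the Nishimori consequence above. The main obstacle is Step 1. The truncated second moment must handle the non-fixed-point cycle contributions, meaning that short cycles of $\sigma$ produce $(1-\eta^{2\ell})^{-d}$ factors that could blow up. It must also handle the atypical-noise events needed to make the second moment tight, for instance by conditioning on $\|\bX_i\|^2,\|\bY_j\|^2\approx d$ and on edge-weight norms. Here the condition $d=\omega((\log n)^2)$ is used to make the Gaussian feature inner products concentrate well enough for the linearization $\log\frac1{1-\eta^2}\approx\frac{\eta^2}{1-\eta^2}$ and for the error terms to be $o(n\log n)$. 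The cycle-counting follows the generating-function computation of \cite{WJY22}, with the extra node-cycle factor absorbed using $\eta^2=O(\log n/d)=o(1/\log n)$.
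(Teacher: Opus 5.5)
Your overall architecture is sound: write the posterior as a ratio, lower-bound the partition function, bound the mass near $\pi^*$ by a first moment, then use a Nishimori/triangle-inequality argument to get the $1/2$. However, the step you single out as the main obstacle does not work as written, and it is aimed at the wrong measure.

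\textbf{Step 1.} The second-moment constants are halved. Since $\mathbb{E}_Q[\ell_\eta(X,Y)^2]=(1-\eta^2)^{-1}$, a common fixed point of $\pi,\pi'$ contributes $(1-\eta^2)^{-d}$, and a common edge contributes $(1-\rho^2)^{-1}$. More seriously, looking only at a constant fraction $t$ of fixed points misses permutations $\sigma$ with $o(n)$ fixed points. With $F(\sigma)$ asymptotically Poisson$(1)$, the quantity $\mathbb{E}_\sigma[(1-\eta^2)^{-dF(\sigma)}]$ tends to infinity as soon as $\eta^2 d\to\infty$. The hypothesis allows this, e.g.\ $\eta^2 d=\tfrac12\log n$. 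So $\mathbb{E}_Q[Z^2]\le(1+o(1))\mathbb{E}_Q[Z]^2$ is false for the untruncated $Z$. A Paley--Zygmund ratio of $e^{o(n)}$ does not give ``w.h.p.''\ without an extra concentration argument.

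More fundamentally, Step 1 is a statement under the null $Q$, but the posterior must be controlled under the planted law $P$. When $\eta^2 d\to\infty$ the two laws are asymptotically singular. Indeed, $\sum_k(\sum_i X_{ik})(\sum_j Y_{jk})$ has mean $\eta nd$ under $P$, mean $0$ under $Q$, and standard deviation $\asymp n\sqrt d$ under both. So no high-probability statement under $Q$ transfers to $P$. Nor can any truncation keeping $1-o(1)$ of the planted mass make the $\chi^2$-divergence vanish.

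\textbf{Step 2.} The exponents $\tfrac{\eta^2 d}{2}$ and $\tfrac{t\rho^2 n}{4}$ are typical values, not what a Chernoff or union bound over $\pi$ produces. The plain first moment $\mathbb{E}_{P_{\pi^*}}[L(\pi)]=\mathbb{E}_Q[L(\pi)L(\pi^*)]$ has twice these exponents. Reaching yours requires restricting to typical events, as the paper does via Lemmas~\ref{lem:main1}--\ref{lem:main2} and \eqref{rtrt2}, plus concentration of $\log L(\pi^*)$.

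\textbf{Step 3.} The claim that the bound holds ``uniformly in $\sigma$'' is not proved; a union over $n!$ centres is unaffordable. It is also not needed.

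\textbf{The missing observation.} Step 1 is free under the planted law. With a uniform prior, $Z/n!=\frac{1}{n!}\sum_\pi L(\pi)=dP/dQ$ exactly. Hence $\mathbb{P}(Z<\varepsilon\, n!)=\mathbb{E}_Q[(Z/n!)\mathbf{1}\{Z<\varepsilon n!\}]\le\varepsilon$, so $Z\ge n!/\log n$ w.h.p.

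The cycle computation you did in Step 1 then belongs in Step 2. For $\pi$ with overlap $F$ with $\pi^*$,
\[
\mathbb{E}_Q[L(\pi)L(\pi^*)]\le\exp\big(F(\tfrac{\eta^2 d}{1-\eta^2}+\tfrac F2\tfrac{\rho^2}{1-\rho^2})+O(nd\eta^4+n^2\rho^4+n\rho^2)\big)\le e^{(1-\varepsilon)F\log n+o(n\log n)}
\]
under the hypothesis. Non-trivial cycles and edge orbits are negligible because $\eta,\rho\to0$. There are at most $n!/F!$ such $\pi$, so Markov gives $\mu(\{\pi:\mathrm{ov}(\pi,\pi^*)\ge sn\})\le e^{-\varepsilon sn\log n(1-o(1))}$ w.h.p.\ for each fixed $s>0$.

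Your Nishimori argument then concludes: apply it once more to replace the first posterior sample by $\pi^*$, and take $s=2\delta-1$. This is the same mechanism as the paper's reduction $B(\pi,r)\subset B(\mathrm{id},2r)$. The hypothesis $\frac{\rho^2 n}{1-\rho^2}+\frac{2\eta^2 d}{1-\eta^2}\le 2(1-\varepsilon)\log n$ is exactly what this crude first moment needs.

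\textbf{Comparison with the paper.} Once repaired, your route genuinely differs from the paper's. The paper normalizes by the planted weight, $e^{-V(\pi)}=L(\pi)/L(\pi^*)$. This makes the numerator a restricted first moment, but forces the hard Lemma~\ref{lem:main3} (Lipschitz concentration plus a conditional second moment over derangements). Normalizing by $Q$ instead makes the denominator trivial. In fact the paper's $Z$ equals your $Z/L(\pi^*)$, and $\log L(\pi^*)$ concentrates around $\mathrm{KL}(P_{\pi^*}\|Q)\le\frac{1-\varepsilon}{2}n\log n$. So your normalization even recovers Lemma~\ref{lem:main3} in two lines.
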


\begin{figure}[t]
\centering
\begin{tikzpicture}[scale=1.1]

\draw[->] (0,0) -- (4.7,0) node[right] {$\frac{\rho^2 n}{\log n}$};
\draw[->] (0,0) -- (0,4.7) node[above] {$\frac{\eta^2 d}{\log n}$};

\foreach \x in {1,2,4}
  \draw (\x,0) -- (\x,-0.08) node[below] {$\x$};

\foreach \y in {1,2,4}
  \draw (0,\y) -- (-0.08,\y) node[left] {$\y$};

\fill[red!20] (0,0) -- (2,0) -- (0,1) -- cycle;
\fill[green!25] (0,2) -- (4,0) -- (4,4) -- (0,4) -- cycle;
\fill[blue!15] (0,4.5) -- (0,4) -- (4,0) -- (4.5,0) -- (4.5,4.5) -- cycle;

\draw[thick, blue] (0,4) -- (4,0);
\draw[thick, green!60!black] (0,2) -- (4,0);
\draw[thick, red!70!black, dashed] (0,1) -- (2,0);

\end{tikzpicture}
\caption{
Recovery phase diagram in the regime where $\rho, \eta \to 0$.
The blue line corresponds to the exact recovery threshold (\cref{exc}),
the green line to the almost exact recovery threshold (\cref{almexc}),
and the red region to the information-theoretic impossibility of achieving
overlap greater than \(50\%\) (\cref{partial}).
}
\label{fig:phase_proved}
\end{figure}

\subsection{Discussion}
\paragraph{No all-or-nothing phase transition occurs.}  An illustration of the above results is provided in the phase diagram of  \cref{fig:phase_proved}, which highlights a clear separation between the regimes of exact recovery and almost exact recovery.
In particular, our results show that the thresholds for exact and almost exact recovery do not coincide.
Indeed, there exists a nontrivial region (the green region in \cref{fig:phase_proved}) where exact recovery is information-theoretically impossible, while almost exact recovery remains achievable with high probability.

This phenomenon sharply contrasts with the all-or-nothing phase transition observed in the graph matching setting \cite{WJY22} where either exact recovery is possible or partial recovery is impossible, with no intermediate regime.
In our setting, the presence of information on the users fundamentally alters the recovery landscape, leading to a richer phase structure (see the double point at $(x,y)=(4,0)$ in \cref{fig:phase_proved}).

\paragraph{On the $d = \omega(\log n)$ assumption.}
The conditions \(d=\omega(\log n)\) or \(d=\omega((\log n)^2)\) appearing in our results are common in the literature of database matching ~\cite{dai2019database,dai2023gaussian}. They are primarily technical and stem from the concentration arguments used in the proofs.
We believe that these assumptions are not fundamental, and that the same information-theoretic thresholds should hold under much milder conditions on \(d\), but with different proof techniques.

Note however that in the regime where both \(\rho\) and \(\eta\) tend to zero and \(d = O(\log n)\), the comparison of $\rho^2 n + \eta^2 d$ to $\log n$ amounts to comparing the graph term \(\rho^2 n\) to $\log n$.
In this regime, the contribution of feature information is negligible at the level of recovery thresholds, and the problem essentially reduces to the graph-only setting.
This suggests that, in the low correlation regime, the role of features becomes information-theoretically relevant only when \(d\) grows faster than \(\log n\), making our assumptions natural.

\paragraph{Conjecture regarding the threshold for partial recovery.} We conjecture the bound in \cref{almexc} to be the sharp bound for almost exact and partial recovery, which we formalize in the following conjecture. 
\begin{conjecture}\label{conjecture}
If $\frac{\rho^2n}{1-\rho^2} + \frac{2\eta^2d}{1-\eta^2} \leq 4(1 - \eps) \log n$, than for any estimator $\hat{\pi}: (\bA, \bB, \bX, \bY) \to \mS_n$ and for all $\delta \in (0,1)$, $\dP\left(\ov(\hat{\pi}, \pi^*)>\delta\right) = o(1)$. 
\label{con_alm}
\end{conjecture} Conjecture \ref{conjecture} together with the proved results are summed up in the phase diagram of \cref{fig:phase_conjectured}.

\paragraph{Open questions.}
Beyond the information-theoretic characterization provided in this work, several questions remain open.
A first direction is to complete the phase diagram by closing the remaining gaps between achievability and impossibility, in particular for partial recovery.

A second, and more challenging, direction concerns computational aspects.
The estimators considered here are instances of the Quadratic Assignment Problem, which is NP-hard in the worst case (see e.g. \cite{fan2019spectral}), whereas the feature-only setting reduces to a linear assignment problem and admits efficient algorithms.
Understanding whether and how node features can mitigate the computational hardness of graph matching remains an intriguing open problem.


\begin{figure}[t]
\centering
\begin{tikzpicture}[scale=1.1]

\draw[->] (0,0) -- (4.7,0) node[right] {$\frac{\rho^2 n}{\log n}$};
\draw[->] (0,0) -- (0,4.7) node[above] {$\frac{\eta^2 d}{\log n}$};

\foreach \x in {1,2,4}
  \draw (\x,0) -- (\x,-0.08) node[below] {$\x$};
\foreach \y in {1,2,4}
  \draw (0,\y) -- (-0.08,\y) node[left] {$\y$};


\fill[red!20] (0,0) -- (4,0) -- (0,2) -- cycle;


\fill[green!25] (0,2) -- (4,0) -- (4,4) -- (0,4) -- cycle;

\fill[blue!15]
  (0,4.5) -- (0,4) -- (4,0) -- (4.5,0) -- (4.5,4.5) -- cycle;

\draw[thick, blue] (0,4) -- (4,0);              
\draw[thick, red!60!black] (0,2) -- (4,0);    

\end{tikzpicture}

\caption{Complete recovery phase diagram under \cref{conjecture}, in the regime where $\rho, \eta \to 0$. The blue and green regions are unchanged from \cref{fig:phase_proved}. The red region corresponds to infeasibility of partial recovery. The point at $(x,y)=(4,0)$ is now a triple point.}
\label{fig:phase_conjectured}
\end{figure}

\subsection{Paper organization}
In \cref{sec:optimal-est}, we introduce the optimal estimators for all notions of recovery and establish some of their concentration properties. These properties are then useful in
\cref{sec:exact}, where we give the main ideas of the proof of the exact recovery result (\cref{exc}). Finally, in \cref{sec:almost-exact}, we give the full proof \luca{à voir si on ne fait pas qu'une seule section ??} our almost exact and partial recovery results (\cref{almexc} and \cref{partial}). Additional proofs are deferred to the Appendix.

\section{Optimal estimator}\label{sec:optimal-est}

As done by \cite{vassaux2025feasibility}, 
we look at the problem through the lens of Bayesian inference where our optimal estimators are the minimizers of some expected loss.
To this end we define our loss functions as follows, for $\pi, \pi' \in \mS_n$ and $r \in [0,1)$,
\begin{align*}
d(\pi,\pi')& :=1-\ov(\pi,\pi'), \\
l_r(\pi,\pi')& :=\mathbf{1}_{\{d(\pi,\pi')>r\}} \, .
\end{align*}
The expected loss of an estimator $\hat{\pi}(D)$ where $D$ is the observed data $D=(\bA,\bB,\bX,\bY)$ is defined by
\begin{align*}
L_r(\hat{\pi})=\dE\left[l_r(\hat{\pi}(D),\pi^*)\right] = \dP\left(d(\hat{\pi}(D),\pi^*)>r\right).
\end{align*}
Equality $(a)$ shows the concrete correspondence between the Bayesian point of view and our problem definition in \cref{sec:problem}.
More precisely, $r=0$ gives $L_0(\hat{\pi})=\dP\left(\hat{\pi}\neq\pi^*\right)$, which is the exact recovery criterion and if $r\in(0,1)$ then $L_r(\hat{\pi})$ for \emph{all} (resp. \emph{some}) $r \in (0,1)$ is the criterion for almost exact (resp. partial) recovery. 

Next, we denote by $\dP_{\textrm{post}}:=\dP_{\pi^*|D}=$ the posterior distribution of $\pi^*$ after the  observation of data $D$.


Based on this Bayesian expression of the recovery problems, the optimal estimator (for partial/almost exact/exact recovery) is the estimator among all valid estimators that minimizes $L_r(\hat{\pi})$ for the corresponding $r$. Note that
\begin{align}
\min_{\hat{\pi}} L_r(\hat{\pi})&=\min_{\hat{\pi}} \dE_{D}\left[\dE_{\pi^*|D}\left[l_r(\hat{\pi}(D),\pi^*)\right]\right]\nonumber\\ \label{opt-est}
&\geq  \dE_{D}\left[\min_{\pi\in\mS_n}\dE_{\pi^*|D}\left[l_r(\pi(D),\pi^*)\right]\right].
\end{align} 
Since $\hat{\pi}(D)=\argmin_{\pi\in\mS_n}\dE_{\pi^*|D}\left[l_r(\pi(D),\pi^*)\right] = \argmin_{\pi\in\mS_n}\dE_{post}\left[l_r(\pi(D),\pi^*)\right]$ is a valid estimator, \eqref{opt-est} implies that this estimator is optimal. If we denote by $B(\pi,r)$ the closed ball of radius $r$ at $\pi$ for metric $d$, based on the distance metric we defined earlier, one can rewrite the optimal estimator for $L_r$ as
\begin{align*}
\hat{\pi}_{\textrm{opt}}=\argmin_{\pi\in\mS_n} \dP_{\textrm{post}}\left(B(\pi,r)^c\right)=\argmax_{\pi\in\mS_n} \dP_{\textrm{post}}\left(B(\pi,r)\right),
\end{align*} 
where $B(\pi,r)^c$ is the complement of $B(\pi,r)$ in $\mS_n$. Furthermore, the optimal expected loss is
\begin{align*}
L^{(\textrm{opt})}_r=1-\dE_{\dP_{D}}\left[\max_{\pi\in\mS_n} \dP_{\textrm{post}}\left(B(\pi,r)\right)\right].
\end{align*}
These expression for optimal estimator and optimal expected loss help us to rewrite the recovery criteria in the following forms.
\begin{proposition}
\label{crit}
\begin{enumerate}[(i)]
    \item Exact recovery is possible iff $\argmax_{\pi\in\mS_n} \dP_{\textrm{post}}\left(\pi\right)$, which is the Maximum A Posteriori (MAP) estimator, equals $\pi^*$  with probability at least $1-o(1)$ and impossible iff $\pi\notin\argmax_{\pi\in\mS_n} \dP_{\textrm{post}}\left(\pi\right)$ with probability at least $1-o(1)$.       
    \item Almost exact recovery is possible iff $\forall r\in(0,1): \ \max_{\pi\in\mS_n} \dP_{\textrm{post}}\left(B(\pi,r)\right)\stackrel{\dP_D}{\longrightarrow}1$ and impossible iff $\exists r\in(0,1): \ \max_{\pi\in\mS_n} \dP_{\textrm{post}}\left(B(\pi,r)\right)\stackrel{\dP_D}{\longrightarrow}0$.
    \item Partial recovery is possible iff $\exists r\in(0,1): \ \max_{\pi\in\mS_n} \dP_{\textrm{post}}\left(B(\pi,r)\right)\stackrel{\dP_D}{\longrightarrow}1$ and impossible iff $\forall r\in(0,1): \ \max_{\pi\in\mS_n} \dP_{\textrm{post}}\left(B(\pi,r)\right)\stackrel{\dP_D}{\longrightarrow}0$.
\end{enumerate}
\end{proposition}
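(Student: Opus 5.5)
The plan is to use the Bayesian reformulation above: the optimal loss satisfies
\[
L^{(\textrm{opt})}_r \;=\; 1-\dE_{D}\big[M_r(D)\big],
\qquad
M_r(D):=\max_{\pi\in\mS_n}\dP_{\textrm{post}}\big(B(\pi,r)\big)\in[0,1].
\]
Since $M_r$ is a bounded random variable, $\dE_D[M_r]\to 1$ holds iff $M_r\to 1$ in $\dP_D$-probability. Similarly, $\dE_D[M_r]\to 0$ holds iff $M_r\to 0$ in probability. Each item of the proposition is then this observation, combined with the correspondence between $L_r$ and the recovery criteria.

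For (i) we take $r=0$. Then $B(\pi,0)=\{\pi\}$, so the optimal estimator $\argmax_\pi\dP_{\textrm{post}}(B(\pi,0))$ is exactly the MAP estimator, and $L_0(\hat\pi)=\dP(\hat\pi\neq\pi^*)$. By \eqref{opt-est}, any estimator has $L_0\ge L_0^{(\textrm{opt})}$. Hence exact recovery is possible for some estimator iff it holds for the MAP estimator, i.e. iff the MAP equals $\pi^*$ with probability $1-o(1)$. In the other direction, exact recovery fails in the strong sense ($\dP(\hat\pi=\pi^*)=o(1)$ for every $\hat\pi$) iff it fails for the MAP. We read the statement's ``$\pi\notin\argmax$'' as $\pi^*\notin\argmax$. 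On the event that $\pi^*$ is not a maximizer, every tie-breaking choice of MAP misses $\pi^*$; conversely, if the MAP succeeds with probability $o(1)$, then $\pi^*\notin\argmax$ holds w.h.p. (up to ties, handled by any fixed measurable tie-breaking).

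For (ii) and (iii), fix $r\in(0,1)$. The probability $\dP(\ov(\hat\pi,\pi^*)>\delta)$ equals $1-L_r(\hat\pi)$ up to the identification $r=1-\delta$, with the non-strict versus strict inequality absorbed because the criteria quantify over all (or some) $\delta$. By optimality, $\sup_{\hat\pi}(1-L_r(\hat\pi))=\dE_D[M_r]$. Therefore ``there is an estimator achieving overlap $>1-r$ w.h.p.'' is equivalent to $\dE_D[M_r]\to1$, which is equivalent to $M_r\to1$ in probability. Likewise, ``every estimator has success probability $o(1)$'' is equivalent to $\dE_D[M_r]\to0$, which is equivalent to $M_r\to 0$ in probability. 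Quantifying over all $r$ gives almost exact recovery, and over some $r$ gives partial recovery. For impossibility we negate the quantifier accordingly: the statement says ``$\exists r$'' for almost exact and ``$\forall r$'' for partial.

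The main subtlety is not a real obstacle, only bookkeeping. We must check that the optimal estimator may depend on $r$, which is allowed. For almost exact recovery with ``for all $\delta$'', we use a single estimator per $r$. To get a single estimator that works for all $\delta$ simultaneously, we would apply a diagonal argument over a sequence $r_k\downarrow0$: choose $r_{k(n)}$ with $k(n)\to\infty$ slowly enough that $1-\dE[M_{r_{k(n)}}]\to 0$. We would also state explicitly that the equivalences for ``impossible'' mean ``success probability $o(1)$'' rather than merely ``not $1-o(1)$''.
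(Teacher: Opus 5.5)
Your proposal is correct and follows the same route as the paper. The paper obtains the proposition directly from the identity $L^{(\textrm{opt})}_r=1-\dE_D[M_r]$ for the posterior-ball maximizer together with $M_r\in[0,1]$, and your diagonal argument over $r_k\downarrow 0$ (needed because almost exact recovery asks for a single estimator sequence working for every $\delta$) and your strict/non-strict bookkeeping fill in details the paper leaves implicit.

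One small correction in (i). A fixed tie-breaking rule does not turn ``MAP succeeds with probability $o(1)$'' into ``$\pi^*\notin\argmax_{\pi}\dP_{\textrm{post}}(\pi)$ w.h.p.'', because $\dP(\pi^*\in\argmax\mid D)=|\argmax|\cdot M_0(D)$. What you need instead is that the maximizer is almost surely unique. This holds here because the data have a density and $V(\pi)-V(\pi')$ is a nonzero quadratic polynomial for $\pi\neq\pi'$ (e.g.\ when $\eta\neq 0$). For $\rho=\eta=0$ the posterior is uniform and that equivalence genuinely fails.
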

These are the ultimate equivalent recovery criteria that we investigate to prove our main theorems. As the important part of the criteria in proposition~\ref{crit} is the posterior distribution we need to obtain it in a more explicit way.

The posterior distribution in this problem can be written as:
\begin{align*}
p\left(\pi|\bA,\bB,\bX,\bY\right)& =\frac{p\left(\bA,\bB,\bX,\bY|\pi\right)p\left(\pi\right)}{p\left(\bA,\bB,\bX,\bY\right)} \\
& \stackrel{(a)}{=}\frac{p\left(\bA,\bB|\pi\right)p\left(\bX,\bY|\pi\right)p\left(\pi\right)}{p\left(\bA,\bB,\bX,\bY\right)}
\end{align*}
where the equality $(a)$ holds since the only information that is shared between \(\left(\bA,\bB\right)\) and \(\left(\bX,\bY\right)\) is \(\pi\), and they are independent given \(\pi\). Now, based on \eqref{1} and \eqref{2} we have
\begin{align*}
& p\left(\pi|\bA,\bB,\bX,\bY\right) \\
& = C \exp\bigg[-\frac{1}{2(1-\rho^2)} \sum\limits_{1\leq i<j\leq n} \left( B_{\pi(i,j)} - \rho A_{i,j} \right)^2 \\
& \quad \quad -\frac{1}{2(1-\eta^2)} \sum\limits_{\substack{1\leq i \leq n\\1\leq j \leq d}} \left( Y_{\pi(i)j} - \eta X_{i,j} \right)^2\bigg],
\end{align*}
where \(C\) is a constant that is independent of permutation \(\pi\). 
Without loss of generality, we can assume that
$\pi^{*} = \mathrm{id}$. 
Simplifying the posterior distribution, we get
\begin{align}\label{eq:Ppost}
& \dP_{\textrm{post}}(\pi)=\frac{1}{Z}\exp\bigg(-\frac{\rho}{1-\rho^2}\left(V^*_{G}(\pi)-V_{G}(\pi)\right) \nonumber \\
& \quad \quad \quad \quad \quad \quad -\frac{\eta}{1-\eta^2}\left(V^*_{F}(\pi)-V_{F}(\pi)\right)\bigg),
\end{align}
where $Z$ denotes the normalizing constant, and
{\small
\begin{align*} V^*_{G}(\pi)&=\sum\limits_{\substack{1\leq i<j \leq n\\ \pi(i,j)\neq \{i,j\}}}B_{i,j}A_{i,j},\\
V^*_{F}(\pi)& =\sum\limits_{\substack{1\leq i \leq n, \ 1\leq j\leq d\\ \pi(i)\neq i}}Y_{i,j}X_{i,j},\\ V_{G}(\pi)&=\sum\limits_{\substack{1\leq i<j \leq n\\ \pi(i,j)\neq\{i,j\}}}B_{\pi(i,j)}A_{i,j}, \\ V_{F}(\pi)& =\sum\limits_{\substack{1\leq i \leq n, \ 1\leq j\leq d\\ \pi(i)\neq i}}Y_{\pi(i),j}X_{i,j} \, . 
\end{align*}
} It is worth noting that this probability measure is a Gibbs distribution corresponding to the following Hamiltonian
{\small
\begin{equation}
\label{L1}
V(\pi):=\frac{\rho}{1-\rho^2}\left(V^*_{G}(\pi)-V_{G}(\pi)\right)+\frac{\eta}{1-\eta^2}\left(V^*_{F}(\pi)-V_{F}(\pi)\right).
\end{equation}
}

As stated in Proposition \ref{crit}, for our almost exact and partial recovery results, we need to 
upper bound the posterior distribution with high probability. 
To do so, we derive separate bounds for the normalizing constant $Z$ and the exponential components in \eqref{eq:Ppost}, following the spirit of \cite{vassaux2025feasibility}. 
These bounds are proposed in the next three subsections. 

\subsection{Concentration bound for $V^*_{G}$ and $V^*_{F}$}
We begin with the exponential component $V^*_{G}$ and $V^*_{F}$ and show that they are highly concentrated around their expectation with high probability. The following result is proved in Appendix~\ref{appA}.

\begin{lemma}
\label{lem:main1}
Consider the event $\mathcal{H}_1$ defined by
{\small
\begin{multline}
\mathcal{H}_1 := \\
\Bigg\{\forall t\in[n], \forall \pi\in\mS_{n,t}: \left|V^*_{G}(\pi)-\dE\left[V^*_{G}(\pi)\right]\right| \leq Ct\sqrt{n\log\left(\tfrac{en}{t}\right)}, \\  \left|V^*_{F}(\pi)-\dE\left[V^*_{F}(\pi)\right]\right| \leq Ct\sqrt{\max\{d,\log\left(\tfrac{en}{t}\right)\}\log\left(\tfrac{en}{t}\right)}\Bigg\},\label{H1}
\end{multline}
} where $C>0$ is a universal constant. Then the event $\mathcal{H}_1$ occurs with probability at least $1-o(1)$.
\end{lemma}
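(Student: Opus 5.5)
The plan is a union bound over the sets of unfixed points, combined with Bernstein's inequality for sums of products of correlated Gaussians. Throughout, $\pi^*=\mathrm{id}$. Each summand $A_{i,j}B_{i,j}$ (resp. $X_{i,j}Y_{i,j}$) is sub-exponential with $\psi_1$-norm bounded by a universal constant, and distinct summands are independent. Bernstein's inequality then gives, for a sum $W$ of $m$ such terms and any $x>0$,
\[
\dP(|W-\dE W|>x)\le 2\exp\bigl(-c\min\{x^2/m,\,x\}\bigr),
\]
with $c>0$ universal.

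\textbf{Feature term.} $V^*_F(\pi)=\sum_{i\in D_\pi}\sum_{j\le d}X_{i,j}Y_{i,j}$ depends on $\pi$ only through the set $S=D_\pi$, with $|S|=t$. So it suffices to union bound over the $\binom{n}{t}\le (en/t)^t$ subsets $S$ of size $t$. Each such sum has $m=td$ terms. Write $L=\log(en/t)$ and take $x=Ct\sqrt{\max\{d,L\}L}$.
\begin{itemize}
\item If $d\ge L$, then $x^2/(td)=C^2tL$ and $x\ge CtL$.
\item If $d<L$, then $x=CtL$ and $x^2/(td)=C^2tL^2/d\ge C^2 tL$.
\end{itemize}
In both cases, for $C\ge 1$, the Bernstein exponent is at least $cC\,tL$. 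The total failure probability is then at most
\[
\sum_{t=2}^n 2\exp\bigl(tL-cCtL\bigr)\le\sum_{t\ge2}2\exp\bigl(-(cC-1)\,t\log(en/t)\bigr).
\]
Since $t\log(en/t)\ge t$ and $t\log(en/t)\ge 2\log(en/2)$ for $2\le t\le n$, this is $o(1)$ once $C$ is a large enough universal constant.

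\textbf{Graph term.} The obstacle is that $D^E_\pi$ is not a function of $D_\pi$ alone. An edge $\{i,j\}$ with $i\in D_\pi$ is fixed exactly when $\pi$ swaps $i$ and $j$. Union bounding over all possible edge sets would cost a factor of order $t^{t/2}$ (the choices of 2-cycles), which is too much when $t$ is of order $n$. Instead I decompose
\[
V^*_G(\pi)=W(S)-T(\pi),
\]
where $W(S)=\sum_{\{i,j\}\cap S\ne\emptyset}A_{i,j}B_{i,j}$ and $T(\pi)$ is the sum of $A_{i,j}B_{i,j}$ over the at most $t/2$ transposition edges of $\pi$.

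For $W(S)$, the number of terms is $m\le tn$. With $x=Ct\sqrt{nL}$ we get $x^2/m\ge C^2tL$ and $x\ge CtL$, because $n\ge L$. The same union bound over $S$ as for the feature term then applies.

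For $T(\pi)$, I do not union bound at all. On the event $\max_{i<j}|A_{i,j}B_{i,j}|\le 4\log n$, which holds w.h.p. by a union bound over $n^2$ sub-exponential variables, we have $|T(\pi)|\le 2t\log n$ deterministically. Also $|\dE T(\pi)|\le |\rho|\,t/2$. Both are at most $t\sqrt{n}\le t\sqrt{nL}$ for large $n$, so they are absorbed into the constant $C$.

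Intersecting the three high-probability events yields $\mathcal{H}_1$ with probability $1-o(1)$. The main delicate point is the transposition issue handled above; everything else is routine Bernstein plus a $\binom{n}{t}$ union bound.
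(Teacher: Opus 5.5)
Your argument is correct. For the feature term it is the paper's proof: a union bound over the $\binom{n}{t}\le(en/t)^t$ sets $D_\pi$, with Bernstein for the independent products $X_{i,j}Y_{i,j}$ playing the role of the paper's Hanson--Wright bound (for these block-diagonal forms it is the same inequality). For the graph term you take a different route.

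The paper fixes a reference $\tilde\pi\in\mathcal{P}_D$ and writes $V^*_G(\pi)=V^*_G(\tilde\pi)+V^*_G(\pi\setminus\tilde\pi)-V^*_G(\tilde\pi\setminus\pi)$. It then controls the short correction sums (at most $|D|/2$ edges each) by Hanson--Wright plus a union bound over all $|\mathcal{P}_D|\le|D|^{|D|}$ permutations. This is affordable because $\varepsilon=Ct\sqrt{n\log(en/t)}\gg t\log t$.

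You instead use the exact split $V^*_G(\pi)=W(D_\pi)-T(\pi)$, which holds because an edge meeting $D_\pi$ is fixed iff it is a $2$-cycle of $\pi$. All $\pi$-dependence beyond $D_\pi$ then sits in at most $t/2$ terms. You bound these deterministically on the single event $\{\max_{i<j}|A_{i,j}B_{i,j}|=O(\log n)\}$, giving an $O(t\log n)=o(t\sqrt{n})$ error with no union bound over permutations. This is shorter.

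Your route also avoids a delicate step in the paper's auxiliary lemma. There, probabilities conditioned on $\{|V^*_G(\tilde\pi)-\dE V^*_G(\tilde\pi)|\le\varepsilon/2\}$ are replaced by unconditional ones. But $V^*_G(\tilde\pi\setminus\pi)$ uses edges of $D^E_{\tilde\pi}$, so it is not independent of that event. This is easily repaired by a plain union bound. In this Gaussian setting the paper's approach buys nothing extra.

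One minor point: your threshold $4\log n$ is borderline. We have $\dP(|A_{i,j}B_{i,j}|>4\log n)\le 2\dP(|A_{i,j}|>2\sqrt{\log n})=O(n^{-2}/\sqrt{\log n})$, and this is sharp when $|\rho|=1$. So the union over $\binom{n}{2}$ edges is $o(1)$ only thanks to the Gaussian Mills-ratio factor, not from a generic sub-exponential tail as you state. Any larger multiple of $\log n$ makes that step robust at no cost.
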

\begin{remark}
It is worth noting that in \cite{vassaux2025feasibility} the authors establish a concentration bound for $V^*_{G}$ of order $O\!\big(n^{3/2}(\log n)^{1/4}\big)$. The bound in the present lemma is strictly tighter:
for instance, if $t=\Theta(1)$, our bound is $O\big(\sqrt{n\log n}\big)$, which is substantially smaller than $O\!\big(n^{3/2}(\log n)^{1/4}\big)$. The reason is that we leverage the fact that $V^*_{G}(\pi)$ and $V^*_{F}(\pi)$ only depend on $\pi$ through the set $D_\pi$. Therefore, for a lot of permutations that have same set of unfixed points, the value of these components are similar. 
\end{remark}

\subsection{Upper Bound for $V_F$ and $V_G$}
Now, we bound the remaining terms in the exponential component, namely $V_F$ and $V_G$. Unlike $V^*_F$ and $V^*_G$, which are almost constant with high probability, the fluctuations of $V_F$ and $V_G$ are significantly larger than their means. We adopt a different approach: instead of directly bounding $V_F(\pi)$ and $V_G(\pi)$, we control their Laplace transforms.
The proof of the following lemma is given in Appendix~\ref{appB}.

\begin{lemma}
\label{lem:main2}
There exists an event $\mathcal{H}_2$, independent of $\left(A_{i,j}\right)_{1\leq i<j\leq n}$ and $\left(X_{i,j}\right)_{1\leq i \leq n, 1\leq j\leq d}$, which occurs with probability $1-o(1)$, such that for all $\pi \in \mS_n$ we have
{\small
\begin{align*}
& \dE\left[e^{\frac{\rho}{1-\rho^2} V_G(\pi)}\mathbf{1}_{\mathcal{H}_2}\right]\\
& \quad \quad \quad \leq  \exp\left({\frac{\rho^2}{2(1-\rho^2)}\left(\left|D^E_{\pi}\right|+C\left|D_{\pi}\right|\sqrt{n\log(n)}\right)}\right),\\
& \dE\left[e^{\frac{\eta}{1-\eta^2} V_F(\pi)}\mathbf{1}_{\mathcal{H}_2}\right] \\
&  \leq\exp\left({\frac{\eta^2}{2(1-\eta^2)}\left(d\left|D_{\pi}\right|+C\left|D_{\pi}\right|\sqrt{\max\left\{d,\log\left(n\right)\right\}\log\left(n\right)}\right)}\right),
\end{align*}}
where $C>0$ is a universal constant. 
\end{lemma}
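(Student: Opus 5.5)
The plan is to condition on $\bB$ and $\bY$ (with $\pi^*=\mathrm{id}$), integrate out $\bA$ and $\bX$, and define $\mathcal{H}_2$ purely in terms of $(\bB,\bY)$. This makes $\mathcal{H}_2$ independent of $\bA,\bX$ in the sense required. I treat the graph part; the feature part is identical with $d$ columns in place of the edge structure.

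\textbf{Step 1: conditional law of $\bA$.} By \eqref{statm}, conditionally on $\bB$ the entries satisfy $A_{i,j}=\rho B_{i,j}+\sqrt{1-\rho^2}\,W_{i,j}$ with $W$ i.i.d.\ standard Gaussian. Then
\[
V_G(\pi)=\sum_{\{i,j\}\in D^E_\pi}B_{\pi(i,j)}A_{i,j}
\]
is conditionally Gaussian. Its conditional mean is $\rho\,S_\pi$, where $S_\pi:=\sum_{\{i,j\}\in D^E_\pi}B_{\pi(i,j)}B_{i,j}$, and its conditional variance is $(1-\rho^2)Q_\pi$, where $Q_\pi:=\sum_{\{i,j\}\in D^E_\pi}B_{\pi(i,j)}^2$. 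Writing $\lambda=\rho/(1-\rho^2)$, this gives
\[
\dE\!\left[e^{\lambda V_G(\pi)}\,\middle|\,\bB\right]=\exp\!\left(\frac{\rho^2}{1-\rho^2}S_\pi+\frac{\rho^2}{2(1-\rho^2)}Q_\pi\right).
\]

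\textbf{Step 2: choice of $\mathcal{H}_2$.} It suffices that on $\mathcal{H}_2$, for every $\pi$,
\[
2S_\pi+Q_\pi\le |D^E_\pi|+C|D_\pi|\sqrt{n\log n}.
\]
Note that $Q_\pi=\sum_{e\in\pi(D^E_\pi)}B_e^2$, and $\pi$ maps $D^E_\pi$ bijectively onto itself, so $Q_\pi=\sum_{e\in D^E_\pi}B_e^2$. This depends on $\pi$ only through the set of edges touching $D_\pi$, as in the remark after \cref{lem:main1}.

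The term $S_\pi$ is harder, because it genuinely depends on $\pi$. To handle it, use $2B_{\pi(e)}B_e\le B_{\pi(e)}^2+B_e^2$, which gives $2S_\pi\le 2Q_\pi$. So it is enough to control $Q_\pi$.

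The problem with this is that $3Q_\pi$ has mean $3|D^E_\pi|$, which is too big. So the AM–GM bound is too lossy. I will instead exploit the fact that $S_\pi$ is centred.

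\textbf{Step 3: controlling $S_\pi$ and $Q_\pi$.} Decompose $D^E_\pi$ into orbits of $\pi$ acting on edges. Then $S_\pi=\sum_{\text{orbits}}\sum_k B_{e_k}B_{e_{k+1}}$, which is a Gaussian chaos of order $2$ with mean at most $O(|D_\pi|)$, coming from 2-cycles of edges. Its variance is $O(|D^E_\pi|)=O(|D_\pi|n)$. Hanson–Wright gives a deviation of $\sqrt{|D_\pi| n\,u}+u$ with probability $1-e^{-u}$. Similarly, $Q_\pi-|D^E_\pi|$ is a centred chi-square with the same type of tail.

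\textbf{Step 4: union bound.} For $|D_\pi|=t$ there are at most $n^{2t}$ permutations. Taking $u=Ct\log n$ gives a deviation of order $\sqrt{tn\cdot t\log n}=t\sqrt{n\log n}$. The extra term $u=t\log n$ is smaller than this. Summing $n^{2t}e^{-Ct\log n}$ over $t$ yields $o(1)$, and this defines $\mathcal{H}_2$.

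Finally, multiply the conditional moment generating function by $\mathbf 1_{\mathcal{H}_2}$ and take expectations to get the claim. The features follow the same argument. There the chaos $\sum_{i\in D_\pi,j}Y_{\pi(i)j}Y_{ij}$ has variance $O(td)$. Hanson–Wright gives a deviation of $\sqrt{td\cdot u}+u$ with $u\asymp t\log n$, hence $t\sqrt{\max\{d,\log n\}\log n}$.

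\textbf{Main obstacle.} The hardest part is the uniform Hanson–Wright control of $S_\pi$ over all permutations. It needs the operator norm of the cyclic-shift matrix to be $O(1)$ and the Frobenius norm to be $O(\sqrt{|D^E_\pi|})$. The union bound must use $|D_\pi|$ rather than $|D^E_\pi|$ as the entropy count.
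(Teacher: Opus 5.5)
Your proposal is correct and follows essentially the same route as the paper. You condition on $(\mathbf{B},\mathbf{Y})$, evaluate the conditional Gaussian moment generating function, and take $\mathcal{H}_2$ to be a Hanson--Wright plus union-bound event over $\mS_{n,t}$ controlling $S_\pi$ and $Q_\pi$; these are precisely the cases $\pi'=\mathrm{id}$ and $\pi'=\pi$ of the cross terms in \cref{lem:sub2}, and the only cases the paper's argument uses here. One small correction: $\dE[S_\pi]=0$ exactly, since every summand has $\pi(e)\neq e$ (an edge $2$-cycle contributes $2B_{e_1}B_{e_2}$, still centred), but your weaker $O(|D_\pi|)$ bound on the mean is harmless.
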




\begin{remark}
In~\cite{vassaux2025feasibility}, the authors bound $\dE\!\left[e^{\beta V_G(\pi)}\mathbf{1}_{\mathcal{H}_2}\right]$ for small $\beta$ using a series expansion and retain only the leading terms.  
In contrast, our approach derives an upper bound on a high-probability event; since the impossibility result holds with high probability, conditioning on $\mathcal{H}_2$ does not weaken the argument.

An important advantage of our method is that it applies to all $\rho,\eta\in[0,1)$, whereas the approach of~\cite{vassaux2025feasibility} requires $\beta$ (and thus $\rho,\eta$) to be small. This is particularly crucial for the $V_F$ term, as when $d=o(\log n)$ the quantity $\frac{\eta}{1-\eta^2}$ may diverge, making our approach necessary in this regime.
\end{remark}

\subsection{Lower Bound for $Z$}
A trivial lower bound on $Z$ is given by
\begin{align*}
1=\exp\left(-V(\textrm{id})\right)\leq Z.
\end{align*}
In fact, this inequality is sufficient for the proof of possibility of almost exact recovery, because as we will show the numerator of $\dP_{\textrm{post}}(B(\pi,r)^c)$ is bounded from above by $\exp(-\Theta(n\log n))$ with high probability. Thus, $Z\geq 1$ is enough to ensure that $\max_{\pi\in\mS_n} \dP_{\textrm{post}}\left(B(\pi,r)\right)\stackrel{\dP_D}{\longrightarrow}0$.

However, the situation in the impossibility condition becomes more complicated. In this case, as we will see, we need to prove that $Z\geq \exp\left(\varepsilon n \log n (1-o(1))\right)$ in order to prove impossibility of almost exact recovery. More precisely, we show that under the impossibility criterion  
\begin{equation*}  
\frac{\rho^2}{1-\rho^2}n+\frac{2\eta^2}{1-\eta^2}d \,\leq\, 2(1-\varepsilon)\log n,  
\end{equation*}  
the value of $Z$ is sufficiently large. 

The proof of this fact proceeds in two steps. In the first step, we show that $\log(Z)$ is sharply concentrated around its mean $\dE[\log(Z)]$. This is a simple consequence of the fact that $\log(Z)$ is with high probability a Lipschitz function of Gaussian random variables in our problem. We obtain:
\begin{align*}
\log(Z)\geq\dE[\log(Z)]-o(n\log n)
\end{align*}
with probability at least $1-o(1)$.

In the second step, we establish a lower bound for $\dE[\log(Z)]$. More precisely, we show, using a conditional second moment method, that
\[
\dE[\log(Z)] \geq \frac{1+\varepsilon}{2} n \log n \,(1-o(1)).
\]
Combining this result with the first step completes the proof, yielding a high-probability lower bound for $Z$.

We obtain the following, whose proof is given in Appendix~\ref{appC}.

\begin{lemma}
\label{lem:main3}
If $d=\omega((\log(n))^2)$ and
\(
\frac{\rho^2}{1-\rho^2}n+\frac{2\eta^2}{1-\eta^2}d \,\leq\, 2(1-\varepsilon)\log n,
\)
then
\begin{equation*}
Z\geq \exp\left(\frac{1+\varepsilon}{2} n \log(n)(1+o(1))\right),
\end{equation*}
with probability at least $1-o(1)$.
\end{lemma}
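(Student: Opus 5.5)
Following the two-step outline above, we write $\log Z = \log \sum_{\pi\in\mS_n} e^{-V(\pi)}$ and prove two facts: $\log Z$ concentrates at scale $o(n\log n)$ around its mean, and $\dE[\log Z]\geq \frac{1+\eps}{2}n\log n\,(1-o(1))$.

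\textbf{Step 1 (concentration).} Condition on $(\bA,\bX)$ and view $\log Z$ as a function of the independent Gaussian noises $(\mathbf{Z},\mathbf{Z}')$ in \eqref{statm}. The gradient of $\log Z$ with respect to the noise is a Gibbs average. With respect to $\mathbf{Z}_{i,j}$, the $V_G$ part contributes
\[
\frac{\rho\sqrt{1-\rho^2}}{1-\rho^2}\,\dE_{\mathrm{post}}\big[A_{\pi^{-1}(i,j)}\big],
\]
up to the $V_G^*$ term, and the feature part is analogous. By Jensen, the squared norm of the gradient is at most
\[
\frac{\rho^2}{1-\rho^2}\max_\pi\|A\|^2+\frac{\eta^2}{1-\eta^2}\max_\pi\|X\|^2 .
\]
On the event that $\|A\|_F^2\le n^2$ and $\|X\|_F^2\le 2nd$, this gives Lipschitz constant
\[
L^2=O\Big(\frac{\rho^2 n^2}{1-\rho^2}+\frac{\eta^2 nd}{1-\eta^2}\Big)=O(n\log n),
\]
using the hypothesis. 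Gaussian concentration (after Kirszbraun extension off the event, or by a truncation argument) then yields fluctuations $O(L\sqrt{\log\log n})=o(n\log n)$. Finally, $\dE[\log Z\mid \bA,\bX]$ must itself concentrate over $(\bA,\bX)$. I would handle this in the same way by rewriting the model symmetrically, so that $(\bA,\bB)$ and $(\bX,\bY)$ are Lipschitz images of i.i.d.\ Gaussians with the same constant order.

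\textbf{Step 2 (lower bound on $\dE\log Z$).} We restrict the sum to permutations with few fixed points. The hypothesis says that the per-node ``information'' is at most $(1-\eps)\log n$; here $\frac{2\eta^2 d}{1-\eta^2}$ counts row contributions and $\frac{\rho^2 n}{1-\rho^2}$ edge contributions. Define
\[
Z_0=\sum_{\pi\in\mathcal{T}} e^{-V(\pi)},
\]
where $\mathcal{T}$ is the set of permutations with $|D_\pi|\ge n-o(n)$, so that $|\mathcal{T}|=n!(1-o(1))$, and we have $Z\ge Z_0$. Now take expectations conditionally on a good event for $(\bA,\bX)$ (the one from Lemma~\ref{lem:main1}, where $V^*_G$ and $V^*_F$ are essentially deterministic). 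By Lemma~\ref{lem:main1} together with a Laplace computation as in Lemma~\ref{lem:main2}, the first moment $\dE[e^{-V(\pi)}]$ behaves like
\[
\exp\Big(-\frac{\rho^2}{2(1-\rho^2)}|D^E_\pi|-\frac{\eta^2 d}{2(1-\eta^2)}|D_\pi|\,(1+o(1))\Big),
\]
since $V^*$ is roughly twice the variance term. Hence
\[
\log \dE Z_0\ge n\log n-\frac{n}{2}(1-\eps)\log n(1+o(1))=\frac{1+\eps}{2}n\log n\,(1+o(1)).
\]
Next we compute the second moment $\dE Z_0^2=\sum_{\pi,\pi'}\dE[e^{-V(\pi)-V(\pi')}]$. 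The correlation between $V(\pi)$ and $V(\pi')$ depends on the overlap structure of $\pi^{-1}\pi'$, and for pairs with $\ov(\pi,\pi')=o(n)$ the ratio to $(\dE e^{-V(\pi)})^2$ is $e^{o(n\log n)}$. A Paley--Zygmund inequality then shows that $Z_0\ge e^{-o(n\log n)}\dE Z_0$ with probability at least $e^{-o(n\log n)}$. This is exactly the ``conditional second moment'' flavour: the weak probability bound is upgraded via Step 1, because if $\dE\log Z$ were smaller by $\Omega(n\log n)$, concentration would contradict this event.

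\textbf{Main obstacle.} I expect the second-moment control to be hardest, in particular the contribution of pairs $(\pi,\pi')$ with large overlap. For those pairs the quadratic (edge) terms couple through the cycle structure of $\pi^{-1}\pi'$, and one needs a bound of the form
\[
\dE[e^{-V(\pi)-V(\pi')}]\le (\dE e^{-V(\pi)})^2\exp\big(c\,\ov(\pi,\pi')\log n\big)
\]
with $c<1$, so that the counting factor $n^{-\ov}$ wins. Here the assumption $d=\omega((\log n)^2)$ should be used to make the feature cross-terms Gaussian-like and to absorb the error terms of Lemmas~\ref{lem:main1} and~\ref{lem:main2} into $o(n\log n)$.
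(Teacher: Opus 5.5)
Your Step~1 is essentially the paper's argument: Gaussian Lipschitz concentration with $L^2=O(n\log n)$ on a norm event, whose exponential tail then upgrades a lower bound that holds only with probability $e^{-o(n\log n)}$. Your Step~2, however, is a genuinely different route.

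The paper never takes moments of $Z$. It conditions on $(\mathbf{B},\mathbf{Y})$ and counts the derangements $N=|\mathcal{T}_n|$ whose Gaussian noise part exceeds the level $(1-\varepsilon)n\log n$. It then runs a conditional Paley--Zygmund on $N$, using bivariate Gaussian tail bounds (Lemma~\ref{lem:cor}) and derangement-pair counting (Lemma~\ref{lem:comb}). Finally it bounds $Z$ below by $N$ times the Boltzmann weight at that level.

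You apply Paley--Zygmund to the truncated partition function directly. If you also condition on $(\mathbf{B},\mathbf{Y})$, then $V_G,V_F$ are linear in $(\mathbf{A},\mathbf{X})$ and every pair term is an explicit Gaussian Laplace transform. Write $\mu:=\frac{\rho^2n^2}{2(1-\rho^2)}+\frac{\eta^2nd}{1-\eta^2}\le(1-\varepsilon)n\log n$. A pair term divided by the product of first moments is then $e^{\mathrm{Cov}(\pi,\pi')+o(n\log n)}$. On $\mathcal{H}_2$, for overlap $m$, one has $\mathrm{Cov}(\pi,\pi')\le\frac{m}{n}\mu+o(n\sqrt{\log n})\le(1-\varepsilon)m\log n+o(n\sqrt{\log n})$; this is \eqref{eqcor}. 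So the obstacle you anticipate is already settled by Lemma~\ref{lem:sub2}, and $\sum_m e^{(1-\varepsilon)m\log n}/m!\le e^{n^{1-\varepsilon}}$ gives a ratio $e^{o(n\log n)}$.

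This is lighter than the paper's route: no tail asymptotics and no split at $n/\sqrt{\log n}$. It works only because the diagonal contribution $e^{\mu}/|\mathcal{T}|\le e^{-\varepsilon n\log n(1+o(1))}$ vanishes, which is exactly the lemma's hypothesis. Nearer the conjectured threshold ($\mu$ close to $2n\log n$) the plain second moment of $Z$ breaks down, and a level-set truncation like the paper's would be needed.

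Two points in your write-up must be fixed.

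First, the truncation cannot be a ``good event for $(\mathbf{A},\mathbf{X})$''. It has to pin $V^*_G(\pi)=\sum_{e\in D^E_\pi}A_eB_e$ and $V^*_F(\pi)$, so it is $\mathcal{H}_1\cap\mathcal{H}_2$, which involves both graphs and both feature matrices. If you literally condition on $(\mathbf{A},\mathbf{X})$, then $V(\pi)$ is Gaussian with mean $m_\pi\approx\mu$ and variance exactly $2m_\pi$. Hence $\mathbb{E}[e^{-V(\pi)}\mid\mathbf{A},\mathbf{X}]=1$ and $\mathbb{E}[e^{-2V(\pi)}\mid\mathbf{A},\mathbf{X}]=e^{2m_\pi}$. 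The diagonal alone then contributes $e^{2\mu}/|\mathcal{T}|=e^{\Theta(n\log n)}$ as soon as $\frac{\rho^2n}{1-\rho^2}+\frac{2\eta^2d}{1-\eta^2}\ge(1+\delta)\log n$, which the concentration step cannot absorb. Pinning $V^*$ is exactly what removes the large deviation (data looking planted at $\pi$) that dominates the untruncated moments. Your first-moment formula is the truncated one, so the plan is consistent once read this way.

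Second, Lemma~\ref{lem:main2} gives only an \emph{upper} bound, whereas Paley--Zygmund needs a \emph{lower} bound on the truncated first moment. This requires a change of measure. For $(\mathbf{B},\mathbf{Y})\in\mathcal{H}_2$, tilting by $e^{\frac{\rho}{1-\rho^2}V_G(\pi)+\frac{\eta}{1-\eta^2}V_F(\pi)}$ shifts $A_e$ by $\rho B_{\pi(e)}$ and $X_{i,j}$ by $\eta Y_{\pi(i),j}$. You must check that the pinning event keeps probability bounded below under this tilt. This is straightforward if, as in the paper, $\mathcal{T}$ is the set of derangements and you pin only $V^*(\pi)$ for $\pi\in\mathcal{T}$, to precision $o(n\log n)$. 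The paper's computation of $\mathbb{E}[N\mathbf{1}_{\mathcal{H}_1\cap\mathcal{H}_2}\mid\mathbf{B},\mathbf{Y}]$ needs the same check.
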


\section{Exact recovery}\label{sec:exact}
In this section, we give the main ideas in the proof of our exact recovery results (Theorem~\ref{exc}). 
\subsection{Achievability Result}
\begin{proof}[Proof of Theorem~\ref{exc}, $(i)$]
As mentioned in Proposition~\ref{crit}, the MAP estimator is the optimal estimator for exact recovery. Moreover, we can assume without loss of generality that $\pi^*=\textrm{id}$, so that
\begin{equation*}
\dP\left(\hat{\pi}\neq\pi\right)\geq\dP\left(\textrm{MAP fails}\right)=\dP\left(\exists\pi\in\mathcal{S}_n : V(\pi)<0\right).
\end{equation*}
To analyze the MAP estimator, we decompose $\mS_n$ into the orbits around the identity permutation, denoted by $\mS_{n,t}$ for $t\in\{2,\ldots,n\}$, and investigate the failure of the MAP estimator on each of the events
\begin{equation*}  
\mathcal{E}_t = \{\exists \pi \in S_{n,t}, V(\pi)<0 \}.  
\end{equation*} 
We show that $\dP(\textrm{MAP fails})= \dP \left( \bigcup_{t=2}^{n} \mathcal{E}_t \right)$ is of order $o(1)$. The simple first moment method enables to deal with the union of $\mathcal{E}_t$ only when $t$ is not too close to $n$, otherwise it fails due to correlations across events $\mathcal{E}_t$. Consequently, we distinguish the cases $2\leq t\leq \alpha_0 n$ (first moment method works well), for a well-chosen $\alpha_0 \in (0,1)$ and $t > \alpha_0 n$. In the second case, a tighter control of the Laplace transform of $V(\pi)$ for each $\pi \in \mS_{n,t}$ takes profit of these correlations and makes a layered first moment method work. 


We sum up these ideas in the following Lemma proved in \cref{app:proofs_exact}, which concludes the proof.
\begin{lemma}\label{lem:sec:exact:1}
Assume that conditions of \cref{exc}, $(i)$ hold. Then, 
$$\dP \left( \bigcup_{2\leq t \leq n} \mathcal{E}_t \right) = o(1) \, .$$
\end{lemma}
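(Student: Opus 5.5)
We bound $\dP(\mathcal{E}_t)$ separately for each $t$ and then sum. Write $\beta_G=\frac{\rho}{1-\rho^2}$ and $\beta_F=\frac{\eta}{1-\eta^2}$. For a fixed $\pi\in\mS_{n,t}$, the quantity $V(\pi)$ is a centered-in-law combination of Gaussian products: $V^*_G-V_G$ is a sum over $D^E_\pi$ of terms $A_{ij}(B_{ij}-B_{\pi(i,j)})$, and similarly $V^*_F-V_F$ is a sum over $D_\pi\times[d]$ of terms $X_{ij}(Y_{ij}-Y_{\pi(i)j})$. We compute the exact Laplace transform $\dE[e^{-\lambda V(\pi)}]$ by decomposing $\pi$ into cycles, on nodes and on edges. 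For a cycle of length $k$ the relevant quadratic form has a circulant covariance, so its moment generating function is an explicit determinant. Choosing $\lambda=1/2$, which is the Bhattacharyya/Chernoff optimum, we expect a per-coordinate bound of the form
\[
\dE\big[e^{-V(\pi)/2}\big]\le \exp\Big(-\tfrac{1}{4}\log\big(1+\tfrac{\rho^2}{1-\rho^2}\big)\,c_E\,|D^E_\pi| \;-\; \tfrac{1}{4}\log\big(1+\tfrac{\eta^2}{1-\eta^2}\big)\,c_F\,d\,|D_\pi|\Big).
\]
Here $c_E,c_F$ are constants that are at least $1-O(1/\text{cycle length})$, and they are worst for transpositions. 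For transpositions the computation is exact, and each pair contributes $(1-\rho^2)^{-1/2}$-type factors. With $|D^E_\pi|\ge t n/2 - t^2/2$, a union bound over $|\mS_{n,t}|\le n^t$ gives
\[
\dP(\mathcal{E}_t)\le \exp\Big(t\log n - \tfrac{t}{4}\big(\tfrac{\rho^2 n}{1-\rho^2}(1-\tfrac{t}{n})+\tfrac{\eta^2 d}{1-\eta^2}\big)(1-o(1))\Big).
\]
Under the hypothesis $\frac{\rho^2n}{1-\rho^2}+\frac{\eta^2d}{1-\eta^2}\ge 4(1+\eps)\log n$, this is at most $n^{-\eps' t}$ for $t\le\alpha_0 n$ with $\alpha_0$ small depending on $\eps$, and it sums to $o(1)$. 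There is one caveat: when $\rho$ or $\eta$ is not small, the log-type exponents must be compared with the linear ones. We handle this by using directly the exact MGF in the small-correlation regime, and otherwise by noting that the exponent grows at least as fast. The assumption $d=\omega(\log n)$ is used to make the feature term's fluctuations, governed by $\sqrt{d\log n}$, negligible compared with $\eta^2 d$ whenever the latter matters.

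For $t>\alpha_0 n$, the plain union bound loses because $|\mS_{n,t}|\approx n!$, while the graph term only gives $\approx t n\rho^2/4$ with a factor $(1-t/n)$ degradation. Also, $\tfrac{t}{4}\cdot 4(1+\eps)\log n$ barely beats $\log n!$. Here we use a layered argument. First we condition on the high-probability event that $V^*_G$ and $V^*_F$ are close to their means, which Lemma~\ref{lem:main1} provides with errors $O(t\sqrt{n\log n})$ and $O(t\sqrt{d\log n})$; these are $o$ of the means $\approx tn\rho^2$ and $t d\eta^2$ in the relevant regimes. Then, on the event $\mathcal{H}_2$, we bound $\sum_{\pi\in\mS_{n,t}}\dP(V_G+V_F\ge V^*_G+V^*_F)$ via the Laplace transform bounds of Lemma~\ref{lem:main2}, using the tilting parameter $\beta$ optimized rather than fixed. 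Because $V_G(\pi)$ only involves $|D^E_\pi|\approx\binom n2$ terms, the Gaussian tail gives exponent $\approx \frac{(\text{mean gap})^2}{2\,\mathrm{Var}}\approx \frac{t^2 n^2\rho^2/4}{n^2/2\cdot\ldots}$. Carrying this out carefully, we expect an exponent of order $t n\rho^2/4\cdot(1+o(1))+t d\eta^2/4$ without the $(1-t/n)$ loss, since fixed edges of $\pi$ number only $O(n)$ when $t\ge\alpha_0 n$. This beats $t\log n\ge \log|\mS_{n,t}|$ by the $\eps$ margin, and summing over $t$ finishes the argument.

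The main obstacle is the regime $t>\alpha_0 n$. There we must make sure that conditioning on $V^*$ concentration plus the Lemma~\ref{lem:main2} MGF bound yields a Chernoff exponent with constant exactly $1/4$, with no loss from the $C|D_\pi|\sqrt{n\log n}$ error terms and no loss from using $\beta_G$ rather than an optimized tilt. If the fixed tilt proves too lossy, we would fall back to computing the Laplace transform of $V_G(\pi)$ conditionally on $\bA$ (it is Gaussian given $\bA$) and bounding the conditional variance $\sum_{\pi}$-uniformly using the operator norm of $\bA$, which is $O(\sqrt n)$ w.h.p.
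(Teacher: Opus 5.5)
Your argument for $2\le t\le\alpha_0 n$ is correct and differs from the paper's. Your argument for $t>\alpha_0 n$ uses the paper's ingredients, but it leaves the decisive exponent unproved and gives an incorrect reason for it.

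\textbf{Small $t$: a valid alternative route.} The paper substitutes $\mathbf A=\rho\mathbf B+\sqrt{1-\rho^2}\mathbf Z$ and $\mathbf X=\eta\mathbf Y+\sqrt{1-\eta^2}\mathbf Z'$, then conditions on $(\mathbf B,\mathbf Y)$. It uses $\mathcal H_2$ to pin both the drift $\sum(B_{ij}^2-B_{\pi(i,j)}B_{ij})\approx t(n-t/2)$ and the conditional variance, and reads off a Gaussian tail $\exp(-\tfrac14(\cdots))$. You instead take the unconditional Chernoff bound at $\lambda=1/2$, which is the Bhattacharyya coefficient; it factorizes over edge and node orbits and needs no conditioning event.

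\begin{itemize}
\item Your hedged claim on $c_E,c_F$ holds with $c_E=c_F=1$. An orbit of length $k\ge2$ contributes $(1-\rho^2)^{k/2}\prod_{j=0}^{k-1}(1-\rho^2\cos^2(\pi j/k))^{-1/2}\le(1-\rho^2)^{k/4}$. This follows from concavity of $c\mapsto\log(1-\rho^2c)$ together with $\sum_j\cos^2(\pi j/k)=k/2$, with equality for $2$-cycles.
\item The log-versus-linear caveat is settled by a dichotomy. If $\rho^2/(1-\rho^2)\ge\delta$, the graph term alone is $\Omega(tn)$. If $\eta^2/(1-\eta^2)\ge\delta$, the feature term is $\Omega(td)=\omega(t\log n)$. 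Otherwise $\log(1+x)\ge(1-\delta)x$. This dichotomy, not fluctuation control, is where $d=\omega(\log n)$ enters your route.
\item Correct the slip $|D^E_\pi|\ge tn/2-t^2/2$ to $|D^E_\pi|\ge t(n-t/2)-t$, which is the bound your display actually needs.
\end{itemize}

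\textbf{Large $t$: the gap.} The regime $t>\alpha_0 n$ is where the lemma is decided, and here you only state an expected exponent. The stated mechanism is wrong. Fixed edges do not number $O(n)$: at $t=\alpha_0 n$ there are $\binom{n-t}{2}=\Theta(n^2)$ of them, and even at $t=n$ one only has $|D^E_\pi|\approx n^2/2=tn/2$. Your heuristic with variance of order $\binom n2$ gives an exponent $\approx t^2\rho^2/4$, which is far too small near $t=\alpha_0 n$.

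The $(1-t/(2n))$ loss is not avoided; it is compensated by a factor $2$ that your sketch does not identify. On $\mathcal H_1$, the weighted term $\beta_GV^*_G+\beta_FV^*_F$ is deterministically at least $\big(\tfrac{\rho^2}{1-\rho^2}|D^E_\pi|+\tfrac{\eta^2}{1-\eta^2}td\big)(1-o(1))$. Meanwhile Lemma~\ref{lem:main2} bounds $\mathbb{E}[e^{\beta_GV_G+\beta_FV_F}\mathbf 1_{\mathcal H_2}]$ by $\exp$ of half that quantity, times $(1+o(1))$ in the exponent. Hence
\[
\mathbb{P}\big(V(\pi)<0,\ \mathcal H_1\cap\mathcal H_2\big)\le\mathbb{E}\big[e^{-V(\pi)}\mathbf 1_{\mathcal H_1\cap\mathcal H_2}\big]\le\exp\Big(-\tfrac12\Big(\tfrac{\rho^2}{1-\rho^2}|D^E_\pi|+\tfrac{\eta^2}{1-\eta^2}td\Big)(1-o(1))\Big).
\]
The constant is $\tfrac12$, instead of the $\tfrac14$ governing $\mathbb{P}(V(\pi)<0)$ itself. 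The reason is that the fluctuations of $V^*$ are paid once, through a single uniform event, rather than per permutation. That event is cheap because $V^*$ essentially depends only on $D_\pi$.

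With $|D^E_\pi|\ge tn/2-t$, this gives an exponent $\tfrac t4\big(\tfrac{\rho^2n}{1-\rho^2}+\tfrac{\eta^2d}{1-\eta^2}\big)\ge(1+\varepsilon)t\log n$, which beats $|\mathcal S_{n,t}|\le n^t$. This is exactly the paper's computation. The paper phrases it as Markov's inequality on $Z\,\mathbb{P}_{\mathrm{post}}(B(\mathrm{id},\alpha_0)^c)$, which is equivalent to your union bound.

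Two further points:
\begin{itemize}
\item The tilt $(\beta_G,\beta_F)$ is already optimal for this conditional Gaussian tail, so neither tilt optimization nor the $\|\mathbf A\|_{\mathrm{op}}$ fallback is needed.
\item You must use Lemma~\ref{lem:main1} in its $\sqrt{n\log(en/t)}=O(\sqrt n)$ form for $t\ge\alpha_0 n$. With the $O(t\sqrt{n\log n})$ error you quote, $\beta_G\,Ct\sqrt{n\log n}$ is of order $t\log n$ when $\rho^2 n\asymp\log n$, which would swallow the $\varepsilon t\log n$ margin.
\end{itemize}
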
 
\end{proof}

\subsection{Converse Result}
\begin{proof}[Proof of Theorem~\ref{exc}, $(ii)$]
To prove the converse part, we show that below the threshold, there exists, with high probability, \( \pi \in \mS_n \setminus \{\textrm{id}\} \) such that \( V(\pi) < 0 \), demonstrating the failure of the MAP estimator, which is optimal for exact recovery. More precisely, we show that a significant number of permutations in \( \mS_{n,2} \), i.e., transpositions, satisfy \( V(\pi)< 0 \). We show this via the second moment method.

Suppose \( N \) is a random variable representing the number of transpositions that lead to the failure of the MAP estimator on some high probability event \( \mathcal{H}_2 \) (see Lemma \ref{lem:sub2}).
\begin{equation*}
N\vcentcolon=\sum_{\pi \in \mS_{n,2}}\mathbf{1}\{V(\pi)<0\}\mathbf{1}_{\mathcal{H}_2}.
\end{equation*}

Since \( N \) is a random variable with a positive mean and finite variance, it follows from the Paley-Zygmund inequality that for all \( 0 < c < 1 \),
\begin{equation*}
\dP(N \geq c \ \dE[N]) \geq(1-c)^2 \frac{\dE[N]^2}{\dE\left[N^2\right]}.
\end{equation*}
Hence, if we prove that \( \dE[N] \to \infty \) and \( \dE[N^2] \leq (1+o(1))\dE[N]^2 \), it follows that, with high probability, \( N = \omega(1) \), thus completing the proof. 

\begin{lemma}\label{lem:sec:exact:2}
Assume that conditions of \cref{exc}, $(ii)$ hold. Then, 
\( \dE[N] \to \infty \) and \( \dE[N^2] \leq (1+o(1))\dE[N]^2 \, .\)
\end{lemma}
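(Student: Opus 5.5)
We need to prove Lemma \ref{lem:sec:exact:2}: under the hypotheses of \cref{exc}(ii), $\dE[N]\to\infty$ and $\dE[N^2]\le(1+o(1))\dE[N]^2$.

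\emph{Structure of $V(\pi)$ for a transposition.} Take $\pi=(a\,b)$. The displaced edges are $\{a,k\},\{b,k\}$ for $k\notin\{a,b\}$, so there are $2(n-2)$ of them; the edge $\{a,b\}$ is fixed. The displaced feature rows are the $2d$ entries of rows $a$ and $b$. Hence
\[
V(\pi)=\tfrac{\rho}{1-\rho^2}\sum_{k\neq a,b}\big(A_{ak}-A_{bk}\big)\big(B_{ak}-B_{bk}\big)\;+\;\tfrac{\eta}{1-\eta^2}\sum_{j\le d}\big(X_{aj}-X_{bj}\big)\big(Y_{aj}-Y_{bj}\big).
\]
With $U_k=(A_{ak}-A_{bk})/\sqrt2$ and $W_k=(B_{ak}-B_{bk})/\sqrt2$, the pairs $(U_k,W_k)$ are i.i.d.\ standard bivariate Gaussians with correlation $\rho$, and the feature part has the same form with correlation $\eta$. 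So $V(\pi)$ is twice a sum of $n-2$ plus $d$ independent terms of the form $\beta\,UW$.

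\emph{First moment.} For a single transposition I would compute $\dP(V(\pi)<0)$ by an exact Chernoff/tilting argument. Each term has an explicit log-MGF, $\dE e^{-\lambda \beta UW}=\big((1+\lambda\beta\rho)^2-\lambda^2\beta^2\big)^{-1/2}$. For $\beta=\rho/(1-\rho^2)$ the optimal tilt is $\lambda=1/2$, where each graph term contributes $(1-\rho^2)^{-1/2}\cdot\big(1+\tfrac{\rho^2}{4(1-\rho^2)}\cdots\big)^{-1/2}$, and similarly for the features. This gives the exponent
\[
-\tfrac{n-2}{2}\log\!\Big(1+\tfrac{\rho^2}{4(1-\rho^2)}\Big)-\tfrac d2\log\!\Big(1+\tfrac{\eta^2}{4(1-\eta^2)}\Big)\;\approx\;-\tfrac18\Big(\tfrac{\rho^2 n}{1-\rho^2}+\tfrac{\eta^2 d}{1-\eta^2}\Big)-\tfrac14\log(\cdots).
\]
The approximation is valid because both correlations are small here: the threshold is $O(\log n)$ and $d=\omega(\log^2 n)$.

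I then need the matching lower bound $\dP(V(\pi)<0)\ge \frac{c}{\sqrt{\log n}}\,e^{-\text{exponent}}$. I would get it by Bahadur--Rao, i.e.\ a local CLT under the tilted measure. The tilted variance is of order $\rho^2 n+\eta^2 d\asymp\log n$, which produces the polynomial prefactor $1/\sqrt{\log n}$; this is where the $-\log\log n$ in the hypothesis comes from. Using $|\mS_{n,2}|=\binom n2\approx n^2/2$,
\[
\dE[N]\gtrsim \frac{n^2}{\sqrt{\log n}}\exp\!\big(-\tfrac{4\log n-\log\log n-\omega(1)}{8}\cdot 4\big)\to\infty.
\]
The indicator $\mathbf 1_{\mathcal H_2}$ costs only $o(1)$ in probability, and I would handle it by bounding $\dP(V<0,\mathcal H_2^c)\le\dP(\mathcal H_2^c)$ relative to the main term. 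If that is too crude, I would choose $\mathcal H_2$ (e.g.\ norm bounds on the $A,B$ rows) so that it holds conditionally with high probability.

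\emph{Second moment.} I split the pairs $(\pi,\pi')$ of transpositions into three classes.
\begin{itemize}
\item \textbf{Disjoint supports.} $V(\pi)$ and $V(\pi')$ share only the four cross edges $\{a,c\},\{a,d'\},\ldots$, out of $n$ edges each. Conditioning on those $O(1)$ shared coordinates, which are truncated at level $\sqrt{\log n}$ on $\mathcal H_2$, shifts each tilted probability by a factor $1+o(1)$. This yields $(1+o(1))\dE[N]^2$.
\item \textbf{Sharing one vertex.} There are $O(n^3)$ such pairs. Here $V(\pi)$ and $V(\pi')$ share the row of the common vertex $a$: $n$ edge pairs and $d$ features. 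I would bound the joint probability by a joint Chernoff bound with tilts $(\lambda,\lambda)$. The correlation between the two sums is about $1/2$, so the joint exponent is about $\tfrac32$ times the single exponent, i.e.\ $n^{-3/2\cdot(1-o(1))}$ up to polylog factors. Then $n^3\cdot n^{-3+o(1)}$ must be $o(\dE[N]^2)$, where $\dE[N]^2\approx n^{0+}\cdot\omega(1)$. This is the delicate part.
\item \textbf{Identical pairs.} These contribute $\dE[N]=o(\dE[N]^2)$.
\end{itemize}

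\emph{Main obstacle.} I expect the overlapping-vertex term to be the crux. One has to compute the exact bivariate large-deviation exponent for the pair $\big(V((a\,b)),V((a\,c))\big)$ and show it exceeds $3\log n$ by a margin, including polylogarithmic factors. I would first write both sums through the common vectors $(A_{a\cdot},B_{a\cdot},X_{a\cdot},Y_{a\cdot})$ and condition on them. Given row $a$, the two events become conditionally independent, so the joint probability is $\dE\big[p_b(\text{row }a)\,p_c(\text{row }a)\big]$, which I would optimize via a single tilt on row $a$. This yields the exponent $\tfrac{3}{2}\cdot\tfrac18(\cdots)\cdot 4$. Multiplied by $n^3$, this should be $n^{3-\frac32(1-\frac{\log\log n}{4\log n})}$, which is $o(\dE[N]^2)$ precisely when $\dE[N]\to\infty$ fast enough; the $\omega(1)$ slack is used here.
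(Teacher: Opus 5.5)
Your skeleton (first moment $\to\infty$, then identical / disjoint / one-shared-vertex pairs of transpositions) is the right one. However, the step you flag as the crux rests on a wrong correlation, and as written it does not establish the bound.

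For $\pi=(a\,b)$, $\pi'=(a\,c)$, the value $1/2$ is the correlation of the linear variables $U_k=(A_{ak}-A_{bk})/\sqrt2$ and $U'_k=(A_{ak}-A_{ck})/\sqrt2$. It is not the correlation of the summands. By Isserlis, $\mathrm{Cov}(U_kW_k,U'_kW'_k)=\dE[U_kU'_k]\dE[W_kW'_k]+\dE[U_kW'_k]\dE[W_kU'_k]=\frac{1+\rho^2}{4}$, while $\mathrm{Var}(U_kW_k)=1+\rho^2$, and likewise for the features. So $V(\pi)$ and $V(\pi')$ have correlation $c\approx 1/4$: only the pieces $A_{ak}B_{ak}$ and $X_{aj}Y_{aj}$ are shared.

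This error is decisive. With $r^2=\frac{\rho^2 n}{1-\rho^2}+\frac{\eta^2 d}{1-\eta^2}\approx 4\log n$, the joint exponent is $\frac{r^2}{1+c}(1+o(1))$. If $c$ were $1/2$, the exponent would be $\frac83\log n$, and the $\Theta(n^3)$ overlapping pairs would contribute $n^{1/3+o(1)}\gg \dE[N]^2$, so the second-moment bound would be false. Your ``$\frac32$ times the single exponent'' does not follow from $c=1/2$ (it corresponds to $c=1/3$). Your final expression $n^{3-\frac32(\cdots)}$ is of order $n^{3/2}$, not small.

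With the correct $c\approx1/4$, the exponent is $\frac{16}{5}\log n\,(1+o(1))$, and the overlapping pairs contribute $n^{-1/5+o(1)}=o(1)=o(\dE[N]^2)$. That is a polynomial margin, so this term needs no polylogarithmic bookkeeping and no use of the $\omega(1)$ slack.

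This is precisely the paper's argument. It conditions on $(\mathbf B,\mathbf Y)$; writing $A=\rho B+\sqrt{1-\rho^2}Z$ and $X=\eta Y+\sqrt{1-\eta^2}Z'$, each $V(\pi)$ is then exactly Gaussian. It computes the conditional correlation on $\mathcal H_2$: $O(\sqrt{\log n/n}+\sqrt{\log n/d})$ for disjoint transpositions and $\sim 1/4$ for overlapping ones. It then bounds $\dP(G_\pi>r,G_{\pi'}>r)\le\dP(G_\pi+G_{\pi'}>2r)$. That conditioning also replaces your Bahadur--Rao and bivariate tilting estimates with plain Gaussian tails.

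Two smaller corrections follow.

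Your displayed first-moment exponent $\frac{n-2}{2}\log\bigl(1+\frac{\rho^2}{4(1-\rho^2)}\bigr)\approx\frac{\rho^2 n}{8}$ is off by a factor $4$. At the optimal tilt, $\dE[e^{-\beta UW}]=\sqrt{1-\rho^2}$ exactly for $\beta=\rho/(1-\rho^2)$. The exponent is therefore $\frac{n-2}{2}\log\frac{1}{1-\rho^2}+\frac d2\log\frac{1}{1-\eta^2}=\frac{r^2}{2}+O(\rho^2+n\rho^4+d\eta^4)=\frac{r^2}{2}+o(1)$, using $d=\omega(\log^2 n)$. Your final line uses this value only through an unexplained factor $4$.

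Bounding $\dP(V(\pi)<0,\mathcal H_2^c)\le\dP(\mathcal H_2^c)$ is useless, since $\dP(V(\pi)<0)\approx n^{-2}$. In your unconditional approach you can simply remove $\mathcal H_2$. Let $N_0$ be the count without the indicator. Then $\dE[N]\ge\dE[N_0]-\sqrt{\dE[N_0^2]\,\dP(\mathcal H_2^c)}$ and $N^2\le N_0^2$, so both bounds for $N_0$ transfer to $N$.
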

\end{proof}

\section{Almost exact and partial recovery}\label{sec:almost-exact}
In this section, we prove the results of Theorems \ref{almexc} and \ref{partial}, regarding almost exact and partial recovery.
\subsection{Achievability Result}
\begin{proof}[Proof of Theorem~\ref{almexc}]
Recall that we assume $\pi^*=\textrm{id}$. By Proposition \ref{crit}, to show that almost exact recovery is reachable, it is enough to show that with high probability, $\mathbb{P}_{\textrm{post}}\left(B(\textrm{id},r)^c\right)=o(1)$ for all $r\in(0,1)$.

Based on equation~\eqref{rtrt2}, we have, for $t \in [rn,n]$, for $\pi \in \mS_{n,t}$ 
\begin{align*}
& \mathbb{E}\left[\exp(-V(\pi))\mathbf{1}_{\mathcal{H}_1\cap\mathcal{H}_2}\right]\\
&\quad \leq  \exp\left(-\frac{1}{2}\left(\frac{\rho^2}{1-\rho^2}t\left(n-\frac{t}{2}\right)+\frac{\eta^2}{1-\eta^2}td\right)(1-o(1))\right)\\
&\quad\leq \exp\left(-\frac{1}{4}\left(\frac{\rho^2}{1-\rho^2}nt+\frac{2\eta^2}{1-\eta^2}dt\right)(1-o(1))\right)\\
&\quad \leq \exp\left(-(1+\varepsilon)t\log n(1-o(1))\right)
\end{align*}
where the last inequality follows from the possibility condition $\frac{\rho^2}{1-\rho^2}n+\frac{2\eta^2}{1-\eta^2}d\geq 4(1+\varepsilon)\log n$.

Let $r$ in $(0,1)$. We use the shorthand $P:=\mathbb{P}_{\textrm{post}}\left(B(\textrm{id},r)^c\right)$. 
\begin{align*}
\mathbb{E}\left[Z P \mathbf{1}_{\mathcal{H}_1\cap\mathcal{H}_2}\right]&=\sum_{t=rn}^{n}\sum_{\pi\in\mS_{n,t}}\mathbb{E}\left[\exp(-V(\pi))\mathbf{1}_{\mathcal{H}_1\cap\mathcal{H}_2}\right]\\
&\leq \sum_{t=rn}^{n}|\mS_{n,t}|\exp\left(-(1+\varepsilon)t\log n(1-o(1))\right)\\
&\leq \sum_{t=rn}^{n}\exp\left(-\varepsilon t\log n(1-o(1))\right)=o(1).
\end{align*}
Observe that $\log n \times \mathbb{E}\left[Z P \mathbf{1}_{\mathcal{H}_1\cap\mathcal{H}_2}\right] = o(1)$, and Markov inequality yields that 
probability at least $1-o(1)$ we have
\begin{align*}
ZP \mathbf{1}_{\mathcal{H}_1\cap\mathcal{H}_2}\leq \log n \times\mathbb{E}\left[ZP \mathbf{1}_{\mathcal{H}_1\cap\mathcal{H}_2}\right]=o(1), 
\end{align*} and since ${\mathcal{H}_1\cap\mathcal{H}_2}$ is a high probability event, this gives $ZP =o(1)$ with high probability.
Since $Z>1$ then with high probability $P=\mathbb{P}_{\textrm{post}}\left(B(\textrm{id},r)^c\right)=o(1)$. This completes the proof of achievability part.
\end{proof}
\subsection{Converse Result}
\begin{proof}[Proof of Theorem~\ref{partial}]
To prove the converse part, by Proposition \ref{crit}, we need to show that $\max_{\pi\in B(\textrm{id},r)}\mathbb{P}_{\textrm{post}}\left(B(\pi,r)\right)=o(1)$ with high probability for any $r<0.5$. Since all sets $B(\pi, r)$ for $ \pi\in B(\textrm{id},r)$ are contained in $B(\textrm{id}, 2r)$, it is sufficient to show that for any $r<1$, w.h.p., $\mathbb{P}_{\textrm{post}}\left(B(\textrm{id},r)\right)=o(1)$.
Assume, without loss of generality, that
\[
\frac{\rho^2}{1-\rho^2}n+\frac{2\eta^2}{1-\eta^2}d \,=\, 2(1-\varepsilon)\log n \, .
\]

Similar to the achievability part, for each $\pi$ in $\mS_{n,t}$ we have
\begin{align*}
& \mathbb{E}\left[\exp(-V(\pi))\mathbf{1}_{\mathcal{H}_1\cap\mathcal{H}_2}\right] \\
& \quad \leq \exp\left(-\frac{1}{4}\left(\frac{\rho^2}{1-\rho^2}nt+\frac{2\eta^2}{1-\eta^2}dt\right)(1-o(1))\right)\\
& \quad \leq \exp\left(-\frac{1-\varepsilon}{2}t\log n(1-o(1))\right)
\end{align*}

Now, we have, using the same shorthand $P:=\mathbb{P}_{\textrm{post}}\left(B(\textrm{id},r)^c\right)$
\begin{align*}
\mathbb{E}\left[ZP\mathbf{1}_{\mathcal{H}_1\cap\mathcal{H}_2}\right]&=\sum_{t=1}^{rn}\sum_{\pi\in\mS_{n,t}}\mathbb{E}\left[\exp(-V(\pi))\mathbf{1}_{\mathcal{H}_1\cap\mathcal{H}_2}\right]\\
&\leq \sum_{t=1}^{rn}|\mS_{n,t}|\exp\left(-\frac{1-\varepsilon}{2}t\log n(1-o(1))\right)\\
&\leq \sum_{t=1}^{rn}\exp\left(\frac{1+\varepsilon}{2} t\log n(1-o(1))\right)\\
&\stackrel{}{\leq} \exp\left(\frac{1+\varepsilon}{2} rn\log n(1-o(1))\right)
\end{align*}

Markov inequality yields that w.h.p.,
\begin{align*}
ZP \mathbf{1}_{\mathcal{H}_1\cap\mathcal{H}_2}&\leq \log n \times \mathbb{E}\left[ZP\mathbf{1}_{\mathcal{H}_1\cap\mathcal{H}_2}\right]\\
&\leq \exp\left(\frac{1+\varepsilon}{2} rn\log n(1-o(1))\right) \, .
\end{align*}

On the other hand, Lemma~\ref{lem:main3} shows that under the impossibility criterion of almost exact recovery we have with probability $1-o(1)$,
\begin{align*}
Z\geq \exp\left(\frac{1+\varepsilon}{2} n\log n(1-o(1))\right)
\end{align*}

Combining the last two equation it results that w.h.p. (again, ${\mathcal{H}_1\cap\mathcal{H}_2}$ is a high probability event),
\begin{align*}
P & = \mathbb{P}_{\textrm{post}}\left(B(\textrm{id},r)\right)\\ & \leq \exp\left(-(1-r)\frac{1+\varepsilon}{2} n\log n(1-o(1))\right)=o(1) \, .
\end{align*}
\end{proof}

\section*{Acknowledgments}
The authors would like to thank Louis Vassaux for useful discussions. 

\bibliography{src}
\bibliographystyle{plain}

\appendix

\section{Proof of Lemma~\ref{lem:main1}}\label{appA}
\begin{proof}
The first step in proving this lemma is to express $V^*_{G}(\pi)$ and $V^*_{F}(\pi)$ as standard quadratic forms. Accordingly, we construct the vector $v_{G}\in\mathbb{R}^{n(n-1)}$ by embedding the edge weights so that each pair of corresponding entries $A_{i,j}$ and $B_{\pi^*(i,j)}$ are placed consecutively. Similarly, we define the vector $v_{F}\in\mathbb{R}^{2nd}$ by embedding the feature information in such a way that each pair of corresponding entries $X_{i,j}$ and $Y_{\pi^*(i),j}$ appear next to each other. The vectors $v_{G}$ and $v_{F}$ are independent Gaussian random vectors with zero mean and covariance matrices $\mathbf{\Sigma}_G$ and $\mathbf{\Sigma}_F$, respectively, where both matrices are block diagonal with blocks given by
\(
\begin{bmatrix}
1 & \rho \\ 
\rho & 1
\end{bmatrix} \text{and}
\begin{bmatrix}
1 & \eta \\ 
\eta & 1
\end{bmatrix}.
\)

Now, we can represent the quadratic forms as
\(
V^*_{G}(\pi) = v^{\top}_{G} \mathbf{M}^{\pi}_G v_{G}\) and \(V^*_{F}(\pi) = v^{\top}_{F} \mathbf{M}^{\pi}_F v_{F},
\)
where $\mathbf{M}^{\pi}_G$ (resp. $\mathbf{M}^{\pi}_F$) is a block-diagonal matrix. Specifically, for each pair $(A_{i,j},B_{\pi^*(i,j)})$ (resp. $(X_{i,j},Y_{\pi^*(i),j})$), if $\pi^*(i,j)\neq\pi(i,j)$ (resp. $\pi^*(i)\neq\pi(i)$), then the corresponding block is given by
\(
\begin{bmatrix}
0 & 0.5 \\ 
0.5 & 0
\end{bmatrix},
\)
and otherwise the block is the zero matrix.

Having established these quadratic forms for Gaussian random vectors, we can apply the classical Hanson–Wright inequality~\cite{HWI} to derive concentration inequalities for $V^*_{G}$ and $V^*_{F}$.

\begin{lemma}[Hanson–Wright Inequality]
\label{lem:HW}
Let $z$ be a random vector with i.i.d. standard Gaussian entries, and let $\mathbf{A}$ be a deterministic square matrix. Then there exists a universal constant $c>0$ such that
\begin{equation*}
\mathbb{P}\!\left(\,\left|z^{\top}\mathbf{A}z-\mathbb{E}\!\left[z^{\top}\mathbf{A}z\right]\right|>\varepsilon\right)  
\leq 2 \exp\!\left(-\frac{c\, \varepsilon^{2}}{\|\mathbf{A}\|_{F}^{2}+\|\mathbf{A}\|_{\mathrm{op}}\, \varepsilon}\right).
\end{equation*}
\end{lemma}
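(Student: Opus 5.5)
The plan is to reduce the quadratic form to a weighted sum of independent centered $\chi^2_1$ variables, using the rotation invariance of the standard Gaussian, and then run a Chernoff (Bernstein-type) argument. Write $N$ for the dimension of $z$.

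\textbf{Step 1: symmetrize.} Since $z^{\top}\mathbf{A}z = z^{\top}\mathbf{S}z$ with $\mathbf{S} := (\mathbf{A}+\mathbf{A}^{\top})/2$, we may replace $\mathbf{A}$ by $\mathbf{S}$. By the triangle inequality, $\|\mathbf{S}\|_F \le \|\mathbf{A}\|_F$ and $\|\mathbf{S}\|_{\mathrm{op}} \le \|\mathbf{A}\|_{\mathrm{op}}$, so it suffices to prove the bound with $\mathbf{S}$ in place of $\mathbf{A}$ on the right-hand side.

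\textbf{Step 2: diagonalize.} Write the spectral decomposition $\mathbf{S} = \mathbf{U}\,\mathrm{diag}(\lambda_1,\dots,\lambda_N)\,\mathbf{U}^{\top}$ with $\mathbf{U}$ orthogonal. Then $g := \mathbf{U}^{\top}z$ is again a standard Gaussian vector, and
\begin{equation*}
z^{\top}\mathbf{S}z-\mathbb{E}\left[z^{\top}\mathbf{S}z\right] = \sum_{k=1}^{N}\lambda_k\,(g_k^2-1).
\end{equation*}
Moreover $\sum_k \lambda_k^2 = \|\mathbf{S}\|_F^2$ and $\max_k|\lambda_k| = \|\mathbf{S}\|_{\mathrm{op}}$.

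\textbf{Step 3: MGF bound (the only real computation).} For a standard Gaussian $g$ and $|x| \le 1/4$, we have $\mathbb{E}\, e^{x(g^2-1)} = e^{-x}(1-2x)^{-1/2}$. We need the inequality
\begin{equation*}
-x-\tfrac12\log(1-2x) \le 2x^2 .
\end{equation*}
To check it, expand $-\tfrac12\log(1-2x)-x = \sum_{k\ge 2}(2x)^k/(2k)$. Bound this by $x^2\sum_{k\ge 2}|2x|^{k-2}$, which is at most $x^2 \cdot \tfrac{1}{1-1/2} = 2x^2$ when $|x| \le 1/4$. This scalar estimate is the step I expect to need the most care, mainly in tracking constants and making sure it holds for negative $x$ as well; everything else is bookkeeping.

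By independence of the $g_k$, for $|s| \le 1/(4\|\mathbf{S}\|_{\mathrm{op}})$ this gives
\begin{equation*}
\mathbb{E}\exp\Big(s\sum_k\lambda_k(g_k^2-1)\Big) \le \exp\left(2s^2\|\mathbf{S}\|_F^2\right).
\end{equation*}

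\textbf{Step 4: Chernoff and optimization.} Markov's inequality gives, for $0 < s \le 1/(4\|\mathbf{S}\|_{\mathrm{op}})$,
\begin{equation*}
\mathbb{P}\Big(\sum_k\lambda_k(g_k^2-1)>\varepsilon\Big) \le \exp\left(-s\varepsilon+2s^2\|\mathbf{S}\|_F^2\right).
\end{equation*}
Choose $s = \min\left\{\varepsilon/(4\|\mathbf{S}\|_F^2),\, 1/(4\|\mathbf{S}\|_{\mathrm{op}})\right\}$. In the first case the exponent is $-\varepsilon^2/(8\|\mathbf{S}\|_F^2)$. In the second case $4s\|\mathbf{S}\|_F^2 \le \varepsilon$, so $2s^2\|\mathbf{S}\|_F^2 \le s\varepsilon/2$ and the exponent is at most $-s\varepsilon/2 = -\varepsilon/(8\|\mathbf{S}\|_{\mathrm{op}})$. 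In both cases the tail is at most
\begin{equation*}
\exp\left(-\tfrac18\min\left\{\varepsilon^2/\|\mathbf{S}\|_F^2,\ \varepsilon/\|\mathbf{S}\|_{\mathrm{op}}\right\}\right) \le \exp\left(-\frac{\varepsilon^2}{8\left(\|\mathbf{S}\|_F^2+\|\mathbf{S}\|_{\mathrm{op}}\varepsilon\right)}\right),
\end{equation*}
where the last step uses $\min\{a/b,\,a/c\} \ge a/(b+c)$.

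Applying the same argument to $-\mathbf{S}$ controls the lower tail, and a union bound produces the factor $2$. Finally, replacing $\mathbf{S}$ by $\mathbf{A}$ via Step 1 yields the lemma with $c = 1/8$.
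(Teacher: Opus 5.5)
Your proof is correct and gives the lemma with the explicit constant $c=1/8$. The degenerate case $\mathbf{S}=0$, where your choice of $s$ is undefined, is trivial because the centered form then vanishes identically.

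Your route differs from the paper's. The paper proves nothing here and simply defers to \cite{HWI}, i.e.\ to the general Hanson--Wright inequality for vectors with independent mean-zero sub-Gaussian coordinates. In that generality rotation invariance is unavailable. The usual proof therefore splits $z^{\top}\mathbf{A}z$ into a diagonal part, handled by a Bernstein-type bound, and an off-diagonal chaos, handled by decoupling and comparison with a Gaussian chaos, and it ends with an unspecified absolute constant.

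You instead use that $z$ is exactly standard Gaussian. Symmetrizing and rotating by $\mathbf{U}^{\top}$ reduces the problem to a weighted sum of independent centered $\chi^2_1$ variables. The scalar estimate $-x-\tfrac12\log(1-2x)\le 2x^2$ on $|x|\le 1/4$ and a two-case Chernoff optimization then finish, and you correctly checked that the series bound holds for negative $x$ too.

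Your version is less general than what is cited, but it is all the paper needs. Every application is to a quadratic form in a Gaussian vector: $z^{\top}\mathbf{\Sigma}^{1/2}\mathbf{M}\mathbf{\Sigma}^{1/2}z$ in \eqref{GHR}, or directly in the standard Gaussian vectors $b$ and $y$ in Lemma~\ref{lem:sub2}. Your symmetrization step also covers the non-symmetric matrices that appear in Lemma~\ref{lem:sub2}. So your self-contained argument could replace the citation without loss, and it has the added benefit of an explicit constant.
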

\begin{proof}
A proof can be found in~\cite{HWI}.
\end{proof}

To apply the Hanson–Wright inequality in our setting, we may replace without loss of generality $v_{G}$ with $\mathbf{\Sigma}^{1/2}_{G}z$, where $z$ is a standard Gaussian vector. Under this representation, for a fixed $\pi$ we obtain
\begin{align}
\mathbb{P}\left(\left|v_{G}^T\mathbf{M}^{\pi}_Gv_{G}-\mathbb{E}\left[v_{G}^T\mathbf{M}^{\pi}_Gv_{G}\right]\right|>\varepsilon\right)&=\mathbb{P}\!\left(\,\left|z^{\top}\mathbf{\Sigma}^{1/2}_{G}\mathbf{M}^{\pi}_G\mathbf{\Sigma}^{1/2}_{G}z-\mathbb{E}\!\left[z^{\top}\mathbf{\Sigma}^{1/2}_{G}\mathbf{M}^{\pi}_G\mathbf{\Sigma}^{1/2}_{G}z\right]\right|>\varepsilon\right)  \nonumber
\\ \label{GHR}
&\leq 2 \exp\!\left(-\frac{c\, \varepsilon^{2}}{\|\mathbf{\Sigma}^{1/2}_{G}\mathbf{M}^{\pi}_G\mathbf{\Sigma}^{1/2}_{G}\|_{F}^{2}+\|\mathbf{\Sigma}^{1/2}_{G}\mathbf{M}^{\pi}_G\mathbf{\Sigma}^{1/2}_{G}\|_{\mathrm{op}}\, \varepsilon}\right).
\end{align}
Applying the same argument to $V^*_{F}(\pi)$ yields
\begin{equation*}
\mathbb{P}\left(\left|V^*_{F}(\pi)-\mathbb{E}\left[V^*_{F}(\pi)\right]\right|>\varepsilon\right)\leq 2 \exp\!\left(-\frac{c\, \varepsilon^{2}}{\|\mathbf{\Sigma}^{1/2}_{F}\mathbf{M}^{\pi}_F\mathbf{\Sigma}^{1/2}_{F}\|_{F}^{2}+\|\mathbf{\Sigma}^{1/2}_{F}\mathbf{M}^{\pi}_F\mathbf{\Sigma}^{1/2}_{F}\|_{\mathrm{op}}\, \varepsilon}\right).
\end{equation*}
Note that, by the definitions of $\mathbf{M}^{\pi}_F$ and $\mathbf{\Sigma}_{F}$, we can bound the operator norm as
\begin{equation}
\label{t1}
\|\mathbf{\Sigma}^{1/2}_{F}\mathbf{M}^{\pi}_F\mathbf{\Sigma}^{1/2}_{F}\|_{\mathrm{op}}
= \|\mathbf{\Sigma}_{F}\mathbf{M}^{\pi}_F\|_{\mathrm{op}}
\leq \|\mathbf{\Sigma}_{F}\|_{\mathrm{op}}\, \|\mathbf{M}^{\pi}_F\|_{\mathrm{op}}
\leq \tfrac{1}{2}(1+\eta)\leq 1.
\end{equation}
Moreover, the Frobenius norm can be bounded as follows:
\begin{equation}
\label{t2}
\|\mathbf{\Sigma}^{1/2}_{F}\mathbf{M}^{\pi}_F\mathbf{\Sigma}^{1/2}_{F}\|^{2}_{F}
= \|\mathbf{\Sigma}_{F}\mathbf{M}^{\pi}_F\|^{2}_{F}
\stackrel{(a)}{=}\tfrac{d\,|D_{\pi}|}{4}(1+\eta^2)
\leq d\,|D_{\pi}|,
\end{equation}
where step $(a)$ follows from the fact that $\mathbf{\Sigma}_{F}\mathbf{M}^{\pi}_F$ is block diagonal with $d|D_{\pi}|$ nonzero blocks of the form
\(
\begin{bmatrix}
0.5\eta & 0.5 \\ 
0.5 & 0.5\eta
\end{bmatrix},
\)
and all remaining blocks equal to zero. By the same reasoning, for the graph term we obtain
\begin{align}
\label{t3}
\|\mathbf{\Sigma}^{1/2}_{G}\mathbf{M}^{\pi}_G\mathbf{\Sigma}^{1/2}_{G}\|_{\mathrm{op}} &\leq 1, \quad   \quad \quad
\|\mathbf{\Sigma}^{1/2}_{G}\mathbf{M}^{\pi}_G\mathbf{\Sigma}^{1/2}_{G}\|^{2}_{F} \leq |D^{E}_{\pi}|. 
\end{align}

We proceed first by proving the $V^*_{F}$ part of \cref{lem:main1} and then the $V^*_{G}$ part. To derive a tight bound for $V^*_{F}$, we exploit the fact that for all permutations sharing the same set $D_{\pi}$, the value of $V^*_{F}$ is identical, since it depends only on the set of nodes contained in $D_{\pi}$, rather than on the specific mapping induced by $\pi$. Therefore, we use the notation $V^*_{F}(D)$ to denote $V^*_{F}(\pi)$ for all $\pi$ such that $D_{\pi} = D$. Based on this property, we obtain

\begin{align*}
\mathbb{P}\left(\exists t\in[n],\exists \pi\in\mathcal{S}_{n,t}: \left|V^*_{F}(\pi)-\mathbb{E}\left[V^*_{F}(\pi)\right]\right|>\varepsilon\right)
&=\mathbb{P}\left(\exists t\in[n],\exists D\subseteq[n]: |D|=t, \left|V^*_{F}(D)-\mathbb{E}\left[V^*_{F}(D)\right]\right|>\varepsilon\right)\\
&=\sum_{t=1}^{n}\sum_{\substack{D\subseteq[n]\\ |D|=t}}\mathbb{P}\left(\left|V^*_{F}(D)-\mathbb{E}\left[V^*_{F}(D)\right]\right|>\varepsilon\right)\\
&\stackrel{(a)}{\leq} \sum_{t=1}^{n}\binom{n}{t}2 \exp\!\left(-\frac{c\, \varepsilon^{2}}{dt+ \varepsilon}\right)
\end{align*}
where in (a) we apply the Hanson--Wright inequality \eqref{GHR}. Now, by setting  
\(
\varepsilon = C t \sqrt{\max\!\left\{d, \log\!\left(\tfrac{en}{t}\right)\right\} \log\!\left(\tfrac{en}{t}\right)}
\)  
for some sufficiently large constant $C>0$, we obtain  
\begin{align*}
\sum_{t=1}^{n}\binom{n}{t}2 \exp\!\left(-\frac{c\, \varepsilon^{2}}{dt+ \varepsilon}\right)
&=\sum_{t=1}^{n}\binom{n}{t}2 \exp\!\left(-\frac{c\ C^2 t^2\max\{d,\log\left(\frac{en}{t}\right)\}\log\left(\frac{en}{t}\right)}{dt+ Ct\sqrt{\max\{d,\log\left(\frac{en}{t}\right)\}\log\left(\frac{en}{t}\right)}}\right)\\
&\leq \sum_{t=1}^{n}\binom{n}{t}2 \exp\left(-C't\log\left(\frac{en}{t}\right)\right)\\
&\stackrel{(a)}{\leq} \sum_{t=1}^{n} \exp\left(-C''t\log\left(\frac{en}{t}\right)\right)\\
&\stackrel{(b)}{\leq} \exp\left(-(C''-1)\log(n)\right)=o(1),
\end{align*}
where $C',C''>1$ are constants. Step $(a)$ follows from the inequality $\binom{n}{t}\leq \left(\tfrac{en}{t}\right)^t$, and step $(b)$ follows from the fact that the minimum of $t \log\!\left(\tfrac{en}{t}\right)$ over $t \in [n]$ is attained at $t=1$, yielding $\log(n)+1$.

We now turn to the proof of the $V^*_{G}$ part. The idea behind obtaining a tight bound is similar to that of $V^*_{F}$, namely, grouping together all permutations that yield identical values of $V^*_{G}$. However, the situation is more complicated than in the case of $V^*_{F}$, since $V^*_{G}(\pi)$ is determined by $D^{E}_{\pi}$ rather than $D_{\pi}$. Consequently, since permutations with the same $D_{\pi}$ do not necessarily share the same $D^{E}_{\pi}$, they may not produce the same value of $V^*_{G}$. Nevertheless, we know that for any $\pi$ and $\pi'$ such that $D_{\pi} = D_{\pi'}$, we have  
\(
\big|D^{E}_{\pi} \triangle D^{E}_{\pi'}\big| \leq \big|D_{\pi}\big|
\) since the only edges in $D^E_{\pi}$ that are not in $D^E_{\pi}$ can be pairs of nodes in $D_{\pi}$ which are at most $|D_{\pi}|/2$. Moreover, both $\big|D^{E}_{\pi}\big|$ and $\big|D^{E}_{\pi'}\big|$ are of order $O(n|D_{\pi}|)$. Therefore, the symmetric difference between $D^{E}_{\pi}$ and $D^{E}_{\pi'}$ is negligible. This implies that for all permutations with the same $D_{\pi}$, the values of $V^*_{G}(\pi)$ are approximately equal. We formalize this observation in the following lemma.

\begin{lemma}
\label{lem:l1}
For $D \subseteq [n]$, define the set of all permutations whose set of unfixed points is equal to $D$ as
\[
\mathcal{P}_{D} \vcentcolon= \{\pi \in \mS_{n} : D_{\pi} = D\},
\]
and suppose that $\tilde{\pi}$ is an arbitrary permutation in $\mathcal{P}_{D}$. If $\varepsilon>C|D|\log(|D|)$ for some large enough constant $C>0$, then we obtain
\[
\mathbb{P}\left(\exists \pi \in \mathcal{P}_{D} : \left| V^*_{G}(\pi) - \mathbb{E}\!\left[V^*_{G}(\pi)\right] \right| > \varepsilon \right)
\leq  \, \mathbb{P}\left(\left| V^*_{G}(\tilde{\pi}) - \mathbb{E}\!\left[V^*_{G}(\tilde{\pi})\right] \right| > \frac{\varepsilon}{2} \right)+\exp\left(-C'\,\varepsilon\right).
\]
\end{lemma}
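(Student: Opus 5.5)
The plan is to split $V^*_G(\pi)$ into a part that is the same for every $\pi\in\mathcal{P}_D$ and a small remainder that is controlled uniformly over $\mathcal{P}_D$. Every $\pi\in\mathcal{P}_D$ moves every edge with at least one endpoint in $D$, except possibly edges $\{i,j\}\subseteq D$ with $\pi(i)=j$ and $\pi(j)=i$ (2-cycles). Hence we can write
\[
V^*_G(\pi)=W(D)-R(\pi),\qquad W(D):=\sum_{\substack{\{i,j\}:\ \{i,j\}\cap D\neq\emptyset}}A_{i,j}B_{i,j},\qquad R(\pi):=\sum_{\{i,j\}\in T_\pi}A_{i,j}B_{i,j},
\]
where $T_\pi$ is the set of 2-cycles of $\pi$, a matching inside $D$ with $|T_\pi|\le |D|/2$. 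The same decomposition holds for the expectations, with $\dE[R(\pi)]=\rho|T_\pi|$. Applying the decomposition at $\tilde\pi$ as well gives
\[
|V^*_G(\pi)-\dE V^*_G(\pi)|\le |V^*_G(\tilde\pi)-\dE V^*_G(\tilde\pi)|+|R(\pi)-\dE R(\pi)|+|R(\tilde\pi)-\dE R(\tilde\pi)| .
\]
It therefore suffices to show
\[
\dP\Big(\exists M\ \text{matching on } D:\ \Big|\sum_{e\in M}(A_eB_e-\rho)\Big|>\varepsilon/4\Big)\le e^{-C'\varepsilon}.
\]
A union bound then yields the stated inequality, with the $\varepsilon/2$ threshold for $\tilde\pi$.

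For the uniform control over matchings, we first bound each fixed matching $M$ with $|M|=m\le |D|/2$. The sum $\sum_{e\in M}(A_eB_e-\rho)$ is a sum of $m$ i.i.d. centered sub-exponential variables, each a product of two Gaussians. Bernstein's inequality (equivalently, Hanson--Wright, \cref{lem:HW}, with a block matrix of Frobenius norm squared $\le m$ and operator norm $\le1$) gives the tail bound $2\exp(-c\min\{\varepsilon^2/|D|,\varepsilon\})$. Because $\varepsilon> C|D|\log|D|\ge |D|$, we have $\varepsilon^2/|D|\ge\varepsilon$, so this bound is $2e^{-c\varepsilon/4}$. The number of matchings on $D$ is at most $|D|^{|D|}=e^{|D|\log|D|}$. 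A union bound therefore gives $\exp(|D|\log|D|+\log 2-c\varepsilon/4)$, which is at most $e^{-C'\varepsilon}$ once $C$ is large enough that $|D|\log|D|\le c\varepsilon/8$.

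The only step that needs care is checking that the exceptional edges are exactly the 2-cycle edges inside $D$. An edge $\{i,j\}$ with $i\in D$ and $j\notin D$ is fixed only if $\pi(i)=i$, which is impossible. An edge inside $D$ is fixed only if $\pi$ swaps $i$ and $j$. Everything else is routine. A degenerate case is $|D|\le1$, where no matchings exist, and there $\varepsilon>0$ suffices trivially.
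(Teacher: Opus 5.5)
Your proof is correct and follows essentially the paper's argument. You compare every $\pi\in\mathcal{P}_D$ to the reference $\tilde\pi$ and observe that $V^*_G(\pi)-V^*_G(\tilde\pi)$ is a signed sum over at most $|D|/2$ edges in each direction; these are precisely the 2-cycle edges you identify, which the paper only asserts through $|D^E_\pi\setminus D^E_{\tilde\pi}|\le |D|/2$. You then control these sums by Hanson--Wright/Bernstein at scale $|D|\le\varepsilon$ and take a union bound over at most $|D|^{|D|}$ choices, as the paper does.

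The one difference is in your favor. You use a deterministic triangle-inequality inclusion together with unconditional tail bounds. The paper instead conditions on $\{|V^*_G(\tilde\pi)-\mathbb{E}V^*_G(\tilde\pi)|\le\varepsilon/2\}$ and then applies an unconditional tail bound to $V^*_G(\tilde\pi\setminus\pi)$. That term is a sub-sum of $V^*_G(\tilde\pi)$, so it is not independent of the conditioning event. Your route avoids this issue.
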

\begin{proof}
For $\pi,\tilde{\pi}\in\mathcal{P}_{D}$ define
\[
V^*_{G}\left(\pi\setminus\tilde{\pi}\right)\vcentcolon=\sum\limits_{\substack{1\leq i<j \leq n\\ \{i,j\}\in D^{E}_{\pi}\setminus D^{E}_{\tilde{\pi}}}}B_{i,j}A_{i,j}.
\]
Thus, for all $\pi$ in $\mathcal{P}_{D}$ it holds that $V^*_{G}(\pi)=V^*_{G}(\tilde{\pi})+V^*_{G}(\pi\setminus\tilde{\pi})-V^*_{G}(\tilde{\pi}\setminus\pi)$. As previously noted, since $\left|D^{E}_{\pi}\triangle D^{E}_{\tilde{\pi}}\right|$ is relatively small, the term $V^*_{G}(\pi\setminus\tilde{\pi})-V^*_{G}(\tilde{\pi}\setminus\pi)$ does not exhibit significant fluctuations. To make this precise, suppose that we fix $\pi\in\mathcal{P}_{D}\setminus\{\tilde{\pi}\}$, then we have
\begin{align*}
\mathbb{P}&\left(\left| V^*_{G}(\pi) - \mathbb{E}\!\left[V^*_{G}(\pi)\right] \right| > \varepsilon\ \bigg| \left| V^*_{G}(\tilde{\pi}) - \mathbb{E}\!\left[V^*_{G}(\tilde{\pi})\right] \right| \leq \frac{\varepsilon}{2} \right)\\
&=\mathbb{P}\left(\left| V^*_{G}(\tilde{\pi}) - \mathbb{E}\!\left[V^*_{G}(\tilde{\pi})\right] +V^*_{G}(\pi\setminus\tilde{\pi})-V^*_{G}(\tilde{\pi}\setminus\pi) - \mathbb{E}\!\left[V^*_{G}(\pi\setminus\tilde{\pi})-V^*_{G}(\tilde{\pi}\setminus\pi)\right]\right| > \varepsilon\ \bigg| \left| V^*_{G}(\tilde{\pi}) - \mathbb{E}\!\left[V^*_{G}(\tilde{\pi})\right] \right| \leq \frac{\varepsilon}{2} \right)\\
&\leq \mathbb{P}\left(\left| V^*_{G}(\pi\setminus\tilde{\pi}) - \mathbb{E}\!\left[V^*_{G}(\tilde{\pi\setminus\tilde{\pi}})\right]\right| > \frac{\varepsilon}{4} \right)+\mathbb{P}\left(\left| V^*_{G}(\tilde{\pi}\setminus\pi) - \mathbb{E}\!\left[V^*_{G}(\tilde{\pi}\setminus\pi)\right]\right| > \frac{\varepsilon}{4} \right)\\
&\stackrel{(a)}{\leq}2\exp\left(-\frac{c\,\varepsilon^2}{8|D|+4\varepsilon}\right)+2\exp\left(-\frac{c\,\varepsilon^2}{8|D|+4\varepsilon}\right)\leq\exp\left(-C''\,\varepsilon\right),
\end{align*}
where in (a) we use the Hanson-Wright inequality and the fact that $\left|D^{E}_{\pi}\setminus D^{E}_{\tilde{\pi}}\right|\leq|D|/2$ which enables us to improve the denominator in the exponential term, replacing $n|D|$ with $|D|$. Now, we can prove the lemma.
\begin{align*}
\mathbb{P}&\left(\exists \pi \in \mathcal{P}_{D} : \left| V^*_{G}(\pi) - \mathbb{E}\!\left[V^*_{G}(\pi)\right] \right| > \varepsilon \right)\\&\leq\mathbb{P}\left(\left| V^*_{G}(\tilde{\pi}) - \mathbb{E}\!\left[V^*_{G}(\tilde{\pi})\right] \right| > \frac{\varepsilon}{2} \right)+\mathbb{P}\left(\exists \pi \in \mathcal{P}_{D}\setminus\{\tilde{\pi}\} : \left| V^*_{G}(\pi) - \mathbb{E}\!\left[V^*_{G}(\pi)\right] \right| > \varepsilon \bigg| \left| V^*_{G}(\tilde{\pi}) - \mathbb{E}\!\left[V^*_{G}(\tilde{\pi})\right] \right| \leq\frac{\varepsilon}{2} \right)\\
&\leq \mathbb{P}\left(\left| V^*_{G}(\tilde{\pi}) - \mathbb{E}\!\left[V^*_{G}(\tilde{\pi})\right] \right| > \frac{\varepsilon}{2} \right)+\sum_{\pi\in \mathcal{P}_{D}\setminus\{\tilde{\pi}\}}\mathbb{P}\left(\left| V^*_{G}(\pi) - \mathbb{E}\!\left[V^*_{G}(\pi)\right] \right| > \varepsilon \bigg| \left| V^*_{G}(\tilde{\pi}) - \mathbb{E}\!\left[V^*_{G}(\tilde{\pi})\right] \right| \leq\frac{\varepsilon}{2} \right)\\
&\stackrel{(a)}{\leq} \mathbb{P}\left(\left| V^*_{G}(\tilde{\pi}) - \mathbb{E}\!\left[V^*_{G}(\tilde{\pi})\right] \right| > \frac{\varepsilon}{2} \right)+e^{|D|\log(|D|)} \ \mathbb{P}\left(\left| V^*_{G}(\pi) - \mathbb{E}\!\left[V^*_{G}(\pi)\right] \right| > \varepsilon \bigg| \left| V^*_{G}(\tilde{\pi}) - \mathbb{E}\!\left[V^*_{G}(\tilde{\pi})\right] \right| \leq\frac{\varepsilon}{2} \right)\\
&\stackrel{(b)}{\leq} \mathbb{P}\left(\left| V^*_{G}(\tilde{\pi}) - \mathbb{E}\!\left[V^*_{G}(\tilde{\pi})\right] \right| > \frac{\varepsilon}{2} \right)+\exp\left(-C'\varepsilon\right).
\end{align*}
Here, step $(a)$ follows from the bound $|\mathcal{P}_{D}| \leq |D|! \leq |D|^{|D|}$, 
while step (b) holds under the assumption that $\varepsilon > C |D| \log(|D|)$.
\end{proof}

Based on \cref{lem:l1}, we now prove the first inequality of \cref{lem:main1}. Suppose $\varepsilon=Ct\sqrt{n\log\left(\frac{en}{t}\right)}$ for some sufficiently large constant $C>0$.
\begin{align*}
\mathbb{P}&\left(\exists t\in[n],\exists \pi\in\mathcal{S}_{n,t}: \left|V^*_{G}(\pi)-\mathbb{E}\left[V^*_{G}(\pi)\right]\right|>\varepsilon\right)\\&=\sum_{t=1}^{n}\sum_{\substack{D\subseteq[n]\\ |D|=t}}\mathbb{P}\left(\exists \pi \in \mathcal{P}_{D} : \left| V^*_{G}(\pi) - \mathbb{E}\!\left[V^*_{G}(\pi)\right] \right| > \varepsilon \right)\\
&\stackrel{(a)}{\leq} \sum_{t=1}^{n}\sum_{\substack{D\subseteq[n]\\ |D|=t}}\mathbb{P}\left(\left| V^*_{G}(\tilde{\pi}) - \mathbb{E}\!\left[V^*_{G}(\tilde{\pi})\right] \right| > \frac{\varepsilon}{2} \right)+\exp\left(-C'\,\varepsilon\right)\\
&\leq \sum_{t=1}^{n}\binom{n}{t}2\exp\left(-\frac{c\ C^2 t^2n\log\left(\frac{en}{t}\right)}{4nt+2 Ct\sqrt{n\log\left(\frac{en}{t}\right)}}\right)+\sum_{t=1}^{n}\binom{n}{t}\exp\left(-C'Ct\sqrt{n\log\left(\frac{en}{t}\right)}\right)\\
&\stackrel{(b)}{\leq} \sum_{t=1}^{n}\exp\left(-C'' t\log\left(\frac{en}{t}\right)\right)+\sum_{t=1}^{n}\exp\left(-C''t\sqrt{n}\right)=o(1),
\end{align*}
where (a) follows from the \cref{lem:l1}, nothing that $\varepsilon>t\log n$, and in $(b)$ we used $n \geq \sqrt{n \log(en/t)}$ in the first part of RHS, and $\sqrt{n \log(en/t)} \geq \sqrt{n}$ in the second part. Combining the bounds for $V^*_{F}$ and $V^*_{G}$ completes the proof of \cref{lem:main1}.
\end{proof}

\section{Proof of Lemma~\ref{lem:main2}}\label{appB}
\begin{proof}
The key idea in proving this lemma is to condition the exponential terms on $\mathbf{B}$ and $\mathbf{Y}$. Without loss of generality, we assumed $\pi^* = \mathrm{id}$. Thus, we have $A_{i,j}=\rho B_{i,j}+\sqrt{1-\rho^2}Z_{i,j}$ and $X_{i,j}=\eta Y_{i,j}+\sqrt{1-\eta^2}Z'_{i,j}$, where $Z_{i,j}$ and $Z'_{i,j}$ are independent standard Gaussian random variables. In this case, we have
\begin{align*}
V_{G}(\pi)|\mathbf{B}&\sim\rho\sum_{\substack{1\leq i< j \leq n\\ \pi(i,j)\neq(i,j)}}B_{\pi(i,j)}B_{i,j}+\sqrt{1-\rho^2}\mathcal{N}\left(0,\sbp\right)\\
V_{F}(\pi)|\mathbf{Y}&\sim\eta\sum_{\substack{1\leq i \leq n, \ 1\leq j\leq d\\ \pi(i)\neq i}}Y_{\pi(i),j}Y_{i,j}+\sqrt{1-\eta^2}\mathcal{N}\left(0,\syp\right)
\end{align*}
where  
\begin{equation*}
\sbp\vcentcolon=\sum_{\substack{1\leq i< j \leq n\\ \pi(i,j)\neq(i,j)}}B_{\pi(i,j)}^2,\qquad \syp\vcentcolon=\sum_{\substack{1\leq i \leq n, \ 1\leq j\leq d\\ \pi(i)\neq i}}Y_{\pi(i),j}^2.
\end{equation*}

The second step is the introduction of an event $\mathcal{H}_2$, under which the following quantities are uniformly concentrated with high probability over all permutations $\pi,\pi' \in \mS_n$:
\begin{align*}
\cbp&\vcentcolon=\sum_{\substack{1\leq i< j \leq n\\ \pi(i,j)\neq(i,j)}}B_{\pi(i,j)}B_{\pi'(i,j)},\\
\cyp&\vcentcolon=\sum_{\substack{1\leq i \leq n, \ 1\leq j\leq d\\ \pi(i)\neq i}}Y_{\pi(i),j}Y_{\pi'(i),j}.
\end{align*}
In particular, for $\pi=\pi'$ we have $\cbp=\sbp$ and $\cyp=\syp$. In the following lemma, we formally construct the event $\mathcal{H}_2$ and show that it occurs with high probability. The idea of the proof is similar to the analysis of the event $\mathcal{H}_1$ in \cref{lem:main1}, where the Hanson–Wright inequality was applied. The only difference is that we can no longer exploit the fact that the random variables depend only on the set of unfixed points of $\pi$, as they are almost unique for each pair of permutations $\pi$ and $\pi'$. Hence, to obtain a uniform concentration bound over all pairs $(\pi,\pi')$, we need to slightly relax the bounds compared to \cref{lem:main1} in order to ensure this uniformity with high probability.


\begin{lemma}
\label{lem:sub2}
Let $\mathcal{H}_2$ denote the event  
\begin{align*}
\mathcal{H}_2 \vcentcolon= \Bigg\{\forall t\in[n], \forall \pi\in\mS_{n,t},\forall\pi'\in\mS_n: \left|\cbp-\mathbb{E}\left[\cbp\right]\right|&\leq Ct\sqrt{n\log\left(n\right)},\\ \ \left|\cyp-\mathbb{E}\left[\cyp\right]\right|&\leq Ct\sqrt{\max\{d,\log\left(n\right)\}\log\left(n\right)}\Bigg\},
\end{align*}
where $C>0$ is a universal constant. Then the event $\mathcal{H}_2$ occurs with probability at least $1-o(1)$.
\end{lemma}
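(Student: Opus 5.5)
We plan to follow the proof of \cref{lem:main1}: write each quantity as a Gaussian quadratic form, apply Hanson--Wright (\cref{lem:HW}), and take a union bound. The difference is that the union now runs over pairs $(\pi,\pi')$ rather than over subsets $D$. This is why the deviation scale uses $\log n$ instead of $\log(en/t)$.

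\emph{Quadratic forms.} Let $b$ be the vector of independent standard Gaussians $(B_{i,j})_{i<j}$, and let $y$ be the vector of entries of $\mathbf{Y}$. For fixed $\pi\in\mS_{n,t}$ and $\pi'\in\mS_n$ we can write
\[
C_{\mathbf{B}}(\pi,\pi')=b^\top \mathbf{M}b,\qquad C_{\mathbf{Y}}(\pi,\pi')=y^\top \mathbf{N}y,
\]
where $\mathbf{M}$ is the symmetrization of $\sum_{e\in D^E_\pi} \mathbf{e}_{\pi(e)}\mathbf{e}_{\pi'(e)}^\top$.
\begin{itemize}
\item Since $\pi$ and $\pi'$ act as bijections on edges, each row and each column of this sum has at most one nonzero entry. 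Hence $\|\mathbf{M}\|_{\mathrm{op}}\le 1$ and $\|\mathbf{M}\|_F^2\le |D^E_\pi|\le nt$.
\item In the same way, $\mathbf{N}$ has $d|D_\pi|=dt$ entries, with $\|\mathbf{N}\|_{\mathrm{op}}\le1$ and $\|\mathbf{N}\|_F^2\le dt$.
\end{itemize}
\cref{lem:HW} then gives
\[
\mathbb{P}\big(|C_{\mathbf{B}}-\mathbb{E}C_{\mathbf{B}}|>\varepsilon\big)\le 2e^{-c\varepsilon^2/(nt+\varepsilon)},
\]
and the analogous bound for $C_{\mathbf{Y}}$ with $dt$ in place of $nt$.

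\emph{Union bound.} The main obstacle is counting pairs. The number of $\pi\in\mS_{n,t}$ is at most $\binom nt t!\le n^t$, but $\pi'$ ranges over all of $\mS_n$, which has $n!$ elements. A naive union bound would need $\varepsilon^2/(nt)\gtrsim n\log n$, which is far too large.

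We will handle this by noting that $C_{\mathbf{B}}(\pi,\pi')$ depends on $\pi'$ only through its restriction to the edges in $D^E_\pi$. For $C_{\mathbf{Y}}$ it depends only on $\pi'|_{D_\pi}$, of which there are at most $n^t$ choices. So for $\mathbf{Y}$ the number of distinct forms is at most $n^{2t}$. Choosing
\[
\varepsilon=Ct\sqrt{\max\{d,\log n\}\log n}
\]
makes the exponent at least $C't\log n$ in both regimes ($\varepsilon^2/(dt)$ when $d\ge\log n$, and $\varepsilon$ otherwise). Summing $n^{2t}e^{-C't\log n}$ over $t$ gives $o(1)$ once $C$ is large.

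For $\mathbf{B}$, the edges in $D^E_\pi$ touch all vertices, so $\pi'|_{D^E_\pi}$ essentially determines all of $\pi'$. We will therefore split
\[
C_{\mathbf{B}}(\pi,\pi')=\sum_{e\in D^E_\pi}B_{\pi(e)}B_{\pi'(e)}.
\]
The edges $e$ with both endpoints outside $D_\pi$ satisfy $\pi(e)=e$. The edges with an endpoint in $D_\pi$ form a set $E_1$ of size $O(t)$ per vertex pattern.

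If this route fails, the fallback is to observe that only the quantities $\Sigma$ (the case $\pi'=\pi$) and the cross terms actually used later need uniform control. We would then restrict the event $\mathcal{H}_2$ to pairs with $D_{\pi'}\subseteq D_\pi$, or to $\pi'=\pi$, which reduces the count to $n^{2t}$. The same choice $\varepsilon=Ct\sqrt{n\log n}$ then gives exponent $\gtrsim C^2t\log n$ and a summable union bound.

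Finally, $\mathcal{H}_2$ depends only on $(\mathbf{B},\mathbf{Y})$. Under $\pi^*=\mathrm{id}$ these are independent of the innovations $Z,Z'$, which gives the required independence structure used in \cref{lem:main2}.
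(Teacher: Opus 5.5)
There is a genuine gap in the $\mathbf{B}$-part. Your $\mathbf{Y}$-part is correct, and more careful than the paper. $C_{\mathbf{Y}}(\pi,\pi')$ and its mean depend on $\pi'$ only through $\pi'|_{D_\pi}$, so at most $n^{2t}$ distinct forms arise, and Hanson--Wright survives the union bound uniformly over $\pi'\in\mathcal{S}_n$.

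Your main route for $\mathbf{B}$ does not reduce the count. Edges with both endpoints outside $D_\pi$ are fixed by $\pi$, so they are not in $D^E_\pi$, and the ``split'' is vacuous. $D^E_\pi$ still has $\Theta(nt)$ edges whose images under $\pi'$ determine $\pi'$ completely (as you note yourself), so you still face roughly $n^t\cdot n!$ distinct forms. Your fallback proves a different lemma, with $\pi'$ restricted.

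No argument can close this gap, because the $\mathbf{B}$-half of the statement is false for small $t$. Take $n$ even and $\pi=(a\;b)\in\mathcal{S}_{n,2}$. Let $\pi'$ swap $a$ and $b$, and act on $[n]\setminus\{a,b\}$ as a fixed-point-free involution $\sigma$. Then
\[
C_{\mathbf{B}}(\pi,\pi')=\sum_{k\notin\{a,b\}}\bigl(B_{b,k}B_{b,\sigma(k)}+B_{a,k}B_{a,\sigma(k)}\bigr),\qquad \mathbb{E}\bigl[C_{\mathbf{B}}(\pi,\pi')\bigr]=0 .
\]
Now choose $\sigma$, from the data, to pair consecutive order statistics of $x_k:=B_{b,k}$. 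This gives
\[
\sum_k x_k x_{\sigma(k)}=\sum_k x_k^2-\sum_r\bigl(x_{(2r-1)}-x_{(2r)}\bigr)^2\ \ge\ \sum_k x_k^2-\bigl(x_{(1)}-x_{(n-2)}\bigr)^2=n-O_{\mathbb{P}}(\sqrt{n}).
\]
Since $\sigma$ depends only on row $b$, the $a$-row sum is conditionally centred with variance $2(n-2)$, i.e.\ it is $O_{\mathbb{P}}(\sqrt{n})$. Hence some admissible $\pi'$ has $|C_{\mathbf{B}}(\pi,\pi')-\mathbb{E}C_{\mathbf{B}}(\pi,\pi')|\ge n(1-o(1))\gg 2C\sqrt{n\log n}$, and $\mathcal{H}_2$ fails with probability $1-o(1)$. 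Your naive count is in fact sharp. Uniformity over $\pi'\in\mathcal{S}_n$ forces $\varepsilon\gtrsim n\sqrt{t\log n}$, which matches $Ct\sqrt{n\log n}$ only when $t=\Theta(n)$.

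The paper's proof has exactly this hole. Its union bound counts $|\mathcal{S}_{n,t}|^2$ pairs, so it silently lets $\pi'$ range over a set of size $|\mathcal{S}_{n,t}|$ rather than over $\mathcal{S}_n$. With that restriction, its Hanson--Wright computation is the one you wrote. So your diagnosis is right, and your fallback is the appropriate repair.

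The event restricted to $D_{\pi'}\subseteq D_\pi$ has at most $n^t\,t!$ pairs per $t$, and holds w.h.p.\ by your computation. It contains $\pi'=\pi$ and $\pi'=\mathrm{id}$, which is all that \cref{lem:main2} and the exact-recovery achievability use. It also contains all pairs of derangements, which are used in the lower bound on $Z$. You should, however, state this as a corrected lemma rather than offer it as a proof of the one given.
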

\begin{proof}
The proof technique is closely related to that of \cref{lem:main1}. We define vectors $b\in\mathbb{R}^{\frac{n(n-1)}{2}}$ and $y\in\mathbb{R}^{nd}$ by vectorizing the edge information $B_{i,j}$ and the feature information $Y_{i,j}$, respectively. Therefore, $b$ and $y$ are independent standard Gaussian vectors. Analogous to \cref{lem:main1}, we can express $\cbp$ and $\cyp$ in quadratic form as
\[
\cbp=b^\top\!\left((\mathbf{1}_N\mathbf{1}_N^\top-I_N)\odot\Pi_B\right)^\top\!\Pi'_B b
\quad\text{and}\quad
\cyp=y^\top\!\left((\mathbf{1}_{N'}\mathbf{1}_{N'}^\top-I_{N'})\odot\Pi_Y\right)^\top\!\Pi'_Y y,
\]
where $\Pi_B$ and $\Pi_Y$ (resp. $\Pi'_B$ and $\Pi'_Y$) are permutation matrices associated with the permutation $\pi$ (resp. $\pi'$) for $b$ and $y$, respectively. Here, we define $N\vcentcolon=n(n-1)/2$ and $N'\vcentcolon=nd$, and $\odot$ denotes the Hadamard (elementwise) product. Note that
\begin{align*}
\left\| \left((\mathbf{1}_N\mathbf{1}_N^\top-I_N)\odot\Pi_B\right)^\top\!\Pi'_B \right\|_{\textrm{op}}&\leq \left\| (\mathbf{1}_N\mathbf{1}_N^\top-I_N)\odot\Pi_B\right\|_{\textrm{op}}\left\| \Pi'_B \right\|_{\textrm{op}}\leq \left\| \Pi_B \right\|_{\textrm{op}} \left\| \Pi'_B \right\|_{\textrm{op}}\leq 1 \\
\left\| \left((\mathbf{1}_{N'}\mathbf{1}_{N'}^\top-I_{N'})\odot\Pi_Y\right)^\top\!\Pi'_Y \right\|_{\textrm{op}}&\leq \left\| (\mathbf{1}_{N'}\mathbf{1}_{N'}^\top-I_{N'})\odot\Pi_Y\right\|_{\textrm{op}}\left\| \Pi'_Y \right\|_{\textrm{op}}\leq \left\| \Pi_Y \right\|_{\textrm{op}} \left\| \Pi'_Y \right\|_{\textrm{op}}\leq 1.
\end{align*}
Moreover, we have
\begin{align*}
\left\| \left((\mathbf{1}_N\mathbf{1}_N^\top-I_N)\odot\Pi_B\right)^\top\!\Pi'_B \right\|_{F}&= \left\| (\mathbf{1}_N\mathbf{1}_N^\top-I_N)\odot\Pi_B\right\|_{F}=\sqrt{\left|D^E_{\pi}\right|}\leq\sqrt{|D_{\pi}|n} \\
\left\| \left((\mathbf{1}_{N'}\mathbf{1}_{N'}^\top-I_{N'})\odot\Pi_Y\right)^\top\!\Pi'_Y \right\|_{F}&= \left\| (\mathbf{1}_{N'}\mathbf{1}_{N'}^\top-I_{N'})\odot\Pi_Y\right\|_{F}=\sqrt{ \left| D_{\pi}\right|d}
\end{align*}
Applying Hanson-Wright inequality (\cref{lem:HW}) and setting $\varepsilon=Ct\sqrt{n\log n}$ for $\cbp$ and $\varepsilon=Ct\sqrt{\max{d,\log n}\log n}$ for $\cyp$, for $\pi\in\mS_{n,t}$, we obtain
\begin{align*}
\mathbb{P}\left(\mathcal{H}^c_2\right)\leq n\exp\left(\log(|\mS_{n,t}|^2)-C't\log(n)\right)+n\exp\left(\log(|\mS_{n,t}|^2)-C''t\log(n)\right)=o(1).
\end{align*}
This completes the proof.


\end{proof}

Based on \cref{lem:sub2}, we can now complete the proof of \cref{lem:main2}. We have 
\begin{align*}  
\mathbb{E}\left[e^{\frac{\rho}{1-\rho^2} V_G(\pi)}\mathbf{1}_{\mathcal{H}_2}\right]  
&=\mathbb{E}\left[\mathbb{E}\left[e^{\frac{\rho}{1-\rho^2} V_G(\pi)}\mathbf{1}_{\mathcal{H}_2}\,\Big|\,\mathbf{B}\right]\right]\\  
&\stackrel{(a)}{=}\mathbb{E}\left[\exp\left(\frac{\rho^2}{1-\rho^2}\sum_{\substack{1\leq i< j \leq n\\ \pi(i,j)\neq(i,j)}}B_{\pi(i,j)}B_{i,j}+\frac{\rho^2}{2(1-\rho^2)} \sbp\right)\mathbf{1}_{\mathcal{H}_2}\right]\\  
&\stackrel{(b)}{\leq}\exp\!\left(\frac{\rho^2}{2(1-\rho^2)}\left(\left|D^E_{\pi}\right|+C\left|D_{\pi}\right|\sqrt{n\log\!\left(n\right)}\right)\right),  
\end{align*} 
where in $(a)$ we use that $\mathbf{1}_{\mathcal{H}_2}$ is $\mathbf{B}$-measurable and that $V_{G}(\pi)\mid \mathbf{B}$ is Gaussian, and in $(b)$ we apply \cref{lem:sub2} together with the fact that $\mathbb{E}\!\left[\sbp\right]=\left|D^E_{\pi}\right|$. The second part of \cref{lem:main2} is proved in the same way.
\end{proof}

\section{Proof of Lemma~\ref{lem:main3}}\label{appC}
\begin{proof}
As we mentioned, the first step of the proof is to derive a concentration bound for $\log(Z)$, which is stated in the following lemma.
\begin{lemma}
\label{lem:sub1_main3}
Suppose  $d=\omega((\log(n))^2)$ and
\(
\frac{\rho^2}{1-\rho^2}n+\frac{2\eta^2}{1-\eta^2}d \,\leq\, 2(1-\varepsilon)\log n
\). For any sequence $a_n=\Omega\left(n(\log(n))^{\frac{3}{4}}\right)$,  
\[
\left|\log(Z)-\mathbb{E}[\log(Z)]\right|\leq a_n
\]
with probability at least $1-\exp\left(-\Omega\left(n\sqrt{\log(n)}\right)\right)$.
\end{lemma}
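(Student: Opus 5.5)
We view $\log Z$ as a function of the underlying standard Gaussian vector $g=(\mathbf{Z}_0,\mathbf{Z},\mathbf{X},\mathbf{Z}')$, where $\bA$ has i.i.d. entries, $\bB=\rho\bA+\sqrt{1-\rho^2}\mathbf{Z}$ and $\bY=\eta\bX+\sqrt{1-\eta^2}\mathbf{Z}'$ (with $\pi^*=\mathrm{id}$). We then apply Gaussian concentration for Lipschitz functions. The difficulty is that $\log Z=\log\sum_\pi e^{-V(\pi)}$ is a log-sum-exp of quadratic forms, so it is not globally Lipschitz. We therefore restrict to a high-probability convex set on which the gradient is controlled, and extend from there.

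\textbf{Step 1: gradient of $\log Z$.} We have $\nabla\log Z=-\dE_{\textrm{post}}[\nabla V(\pi)]$, so
\[
\|\nabla \log Z\|\le \max_\pi\|\nabla V(\pi)\|.
\]
Since $V(\pi)$ is bilinear in the data, $\nabla V(\pi)$ is linear in $g$. For instance,
\[
\partial_{A_{ij}}V(\pi)=\tfrac{\rho}{1-\rho^2}\big(B_{ij}-B_{\pi(i,j)}\big)\mathbf{1}_{\pi(i,j)\ne\{i,j\}}+\dots
\]
This gives, uniformly in $\pi$,
\[
\|\nabla V(\pi)\|^2\lesssim \tfrac{\rho^2}{(1-\rho^2)^2}\big(\|\bA\|_F^2+\|\bB\|_F^2\big)+\tfrac{\eta^2}{(1-\eta^2)^2}\big(\|\bX\|_F^2+\|\bY\|_F^2\big).
\]
Here we use that a permutation only reindexes entries, so the sum over unfixed pairs of squared differences is at most $2(\|\bA\|_F^2+\|\bB\|_F^2)$. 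The chain rule through the linear map $g\mapsto(\bA,\bB,\bX,\bY)$ only adds an $O(1)$ factor. On the convex set
\[
K=\{\|\bA\|_F^2,\|\bB\|_F^2\le n^2,\ \|\bX\|_F^2,\|\bY\|_F^2\le 2nd\},
\]
which in terms of $g$ is an intersection of ellipsoids and has probability $1-e^{-\Omega(n^2)}-e^{-\Omega(nd)}$, we obtain the Lipschitz constant
\[
L^2\lesssim \tfrac{\rho^2}{1-\rho^2}\cdot\tfrac{n^2}{1-\rho^2}+\tfrac{\eta^2}{1-\eta^2}\cdot\tfrac{nd}{1-\eta^2}.
\]

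\textbf{Step 2: using the hypothesis to bound $L$.} The assumption $\frac{\rho^2n}{1-\rho^2}+\frac{2\eta^2 d}{1-\eta^2}\le 2\log n$ forces $\rho^2\lesssim \log n/n\to0$. For $\eta$, since $d=\omega((\log n)^2)$, it forces $\eta^2\lesssim\log n/d\to0$. Hence both $1-\rho^2$ and $1-\eta^2$ are bounded below, and $L^2\lesssim n\log n$, i.e. $L=O(\sqrt{n\log n})$. This step is where the hypotheses enter; for $\eta$ close to $1$, the factor $\frac{1}{1-\eta^2}$ would blow up.

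\textbf{Step 3: extension and concentration.} Take the Kirszbraun/McShane extension $\tilde f$ of $\log Z|_K$ to an $L$-Lipschitz function on all of $\mathbb{R}^N$. Since $K$ is convex, the gradient bound on $K$ indeed gives Lipschitz-ness on $K$. Gaussian concentration then gives
\[
\dP\big(|\tilde f-\dE\tilde f|>a_n/2\big)\le 2e^{-a_n^2/(8L^2)}=e^{-\Omega(n(\log n)^{1/2})}
\]
for $a_n=\Omega(n(\log n)^{3/4})$, since $a_n^2/L^2\gtrsim n^2(\log n)^{3/2}/(n\log n)$. It remains to show $|\dE\tilde f-\dE\log Z|=o(a_n)$. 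We bound
\[
\dE|\tilde f-\log Z|\mathbf{1}_{K^c}\le \dP(K^c)^{1/2}\big(\dE\tilde f^2+\dE(\log Z)^2\big)^{1/2}.
\]
Both second moments are crude polynomials in $n,d$: use $|\log Z|\le n\log n+\max_\pi|V(\pi)|$, with $|V(\pi)|\lesssim \tfrac{1}{1-\rho^2}\|\bA\|_F\|\bB\|_F+\dots$, and note that $\tilde f$ grows at most linearly. Since $\dP(K^c)$ is exponentially small, this contribution is negligible. Combining with $\dP(K^c)$ gives the claimed probability.

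The main obstacle is Step 1 together with controlling $\dE\tilde f$ versus $\dE\log Z$. The uniform-in-$\pi$ gradient bound must avoid any union bound over $n!$ permutations, and the deterministic Frobenius-norm argument above is the route I would try first.
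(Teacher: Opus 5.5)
Your proposal is correct and takes essentially the paper's route. You write the gradient of $\log Z$ as a posterior average of $\nabla V(\pi)$ (the paper's $(-Q+T)p$ with $T$ doubly stochastic), control it on the norm event $\mathcal{H}_3$ (your $K$), use the hypothesis to get $L^2=O(n\log n)$, and conclude by Gaussian Lipschitz concentration. In fact, your McShane extension on the convex set $K$ and your comparison of $\mathbb{E}\tilde f$ with $\mathbb{E}\log Z$ supply a step the paper skips, since the paper applies the concentration theorem to a function that is Lipschitz only on $\mathcal{H}_3$.

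One point to tighten: your crude bound $\mathbb{P}(K^c)^{1/2}\,\mathrm{poly}(n,d)$ is only negligible for $d\le e^{cn^2}$, because the graph part of $K^c$ has probability $e^{-\Theta(n^2)}$ whatever $d$ is. For larger $d$, treat the feature contribution on the graph-failure event separately, using $\tilde f,\log Z\ge 0$, the product structure of $K$, independence of graph and feature data, and $\mathbb{E}\max_\pi \frac{\eta}{1-\eta^2}\bigl(V_F(\pi)-V^*_F(\pi)\bigr)\le\log n!$. This last bound follows from Jensen and the fact that each feature likelihood ratio $e^{\frac{\eta}{1-\eta^2}(V_F(\pi)-V^*_F(\pi))}$ has mean one.
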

\begin{proof}
The proof is based on the following standard theorem on the concentration of Lipschitz functions of Gaussian random variables. We recall the following result:
\begin{theorem}[Theorem 3.16~\cite{wainwright2019high}]
\label{Th:sub_sub1}
Let $\mathbb{P}$ be any strongly log-concave distribution with parameter $\gamma>0$. Then, for any function $f:\mathbb{R}^n\to\mathbb{R}$ that is $L$-Lipschitz with respect to the Euclidean norm, we have
\[
\mathbb{P}\!\left(\left|f(X)-\mathbb{E}[f(X)]\right|\geq t\right)
\leq 2\exp\!\left(-\frac{\gamma t^2}{4L^2}\right).
\]
\end{theorem}
In our setting, if we collect all random variables $\mathbf{B}, \mathbf{A}, \mathbf{Y}, \mathbf{X}$ into a single vector $p$, this vector is a Gaussian random vector with zero mean and covariance matrix $\Sigma_p$, whose largest eigenvalue is $\lambda_{\max}(\Sigma_p)=\max\{1+\rho,1+\eta\}\leq 2$. Therefore, we have $\gamma\geq 0.5$.

Next, we compute the gradient of $\log(Z)$ with respect to the vector $p$. Throughout the computation, we abbreviate 
\[
Z=\sum_{\pi\in\mS_n}\exp(-H(\pi)).
\]
First, consider the derivatives with respect to the variables $B_{i,j}$. We have
\begin{align*}
\partial_{B_{i,j}}\log(Z)
&=\frac{\rho}{1-\rho^2}\sum_{\pi:\pi(i,j)\neq(i,j)}(A_{\pi^{-1}(i,j)}-A_{i,j})\frac{e^{-H(\pi)}}{Z}\\
&=-\frac{\rho}{1-\rho^2}A_{i,j}
+\frac{\rho}{1-\rho^2}\sum_{1\leq i'< j'\leq n}A_{i',j'}
\left(\sum_{\pi:\pi(i,j)=(i',j')}\frac{e^{-H(\pi)}}{Z}\right).
\end{align*}
Similarly, for the remaining variables, we obtain
\begin{align*}
\partial_{A_{i,j}}\log(Z)
&=-\frac{\rho}{1-\rho^2}B_{i,j}
+\frac{\rho}{1-\rho^2}\sum_{1\leq i'< j'\leq n}B_{i',j'}
\left(\sum_{\pi:\pi(i,j)=(i',j')}\frac{e^{-H(\pi)}}{Z}\right),\\
\partial_{Y_{i,j}}\log(Z)
&=-\frac{\eta}{1-\eta^2}X_{i,j}
+\frac{\eta}{1-\eta^2}\sum_{1\leq i'\leq n}X_{i',j}
\left(\sum_{\pi:\pi(i)=i'}\frac{e^{-H(\pi)}}{Z}\right),\\
\partial_{X_{i,j}}\log(Z)
&=-\frac{\eta}{1-\eta^2}Y_{i,j}
+\frac{\eta}{1-\eta^2}\sum_{1\leq i'\leq n}Y_{i',j}
\left(\sum_{\pi:\pi(i)=i'}\frac{e^{-H(\pi)}}{Z}\right).
\end{align*}

Now, if we consider the $\mathbf{B,A}$ component of $p$ as $p_G$ and the $\mathbf{Y,X}$ component as $p_F$, we can express the gradient of $\log(Z)$ as
\begin{align*}
\nabla_{\mathbf{B,A}}\log(Z)&=\frac{\rho}{1-\rho^2}(-Q_G+T_G)p_G\\
\nabla_{\mathbf{Y,X}}\log(Z)&=\frac{\eta}{1-\eta^2}(-Q_F+T_F)p_F
\end{align*}
where $Q_G$ and $Q_F$ are permutation matrices, and $T_G$ and $T_F$ are doubly stochastic matrices. Therefore, we obtain
\begin{align*}
\left\| \nabla\log(Z) \right\|^2_2&=\frac{\rho^2}{(1-\rho^2)^2}\left\| (-Q_G+T_G)p_G \right\|^2_2+\frac{\eta^2}{(1-\eta^2)^2}\left\| (-Q_F+T_F)p_F \right\|^2_2\\
&\leq \frac{\rho^2}{(1-\rho^2)^2}\left(\left\| Q_G \ p_G \right\|^2_2+\left\| T_G \ p_G \right\|^2_2\right)+\frac{\eta^2}{(1-\eta^2)^2}\left(\left\| Q_F \ p_F \right\|^2_2+\left\| T_F \ p_F \right\|^2_2\right)\\
&\stackrel{(a)}{\leq} \frac{2\rho^2}{(1-\rho^2)^2}\left\| p_G \right\|^2_2+\frac{2\eta^2}{(1-\eta^2)^2}\left\|p_F \right\|^2_2
\end{align*}
where in $(a)$ we used the fact that for any doubly stochastic matrix $A$ and arbitrary vector $q$, we have $\left\|Aq\right\|_2\leq\left\|q\right\|_2$.

Moreover, analogous to Lemma~\ref{lem:sub2} and the construction of the event $\mathcal{H}_2$, we define the event $\mathcal{H}_3$ as follows:
\[
\mathcal{H}_3 \vcentcolon= \left\{ \left\|p_G\right\|^2 \leq n^2(1+o(1)), \quad \left\|p_F\right\|^2 \leq 2nd(1+o(1)) \right\}.
\]
It can be readily verified, by an argument analogous to the proof of Lemma~\ref{lem:sub2}, that $\mathcal{H}_3$ occurs with probability at least $1-\exp\left(-\Omega\left(n\sqrt{\log(n)}\right)\right)$. Consequently, on the event $\mathcal{H}_3$, the function $\log(Z)$ is Lipschitz continuous with parameter
\begin{align*}
L^2
&=\frac{2\rho^2n^2}{(1-\rho^2)^2}(1+o(1))
+\frac{4\eta^2nd}{(1-\eta^2)^2}(1+o(1))\\
&\stackrel{(a)}{\leq} 4n\log(n)(1+o(1))
+8n\log(n)(1+o(1))\\
&\leq Cn\log(n),
\end{align*}
where in $(a)$ we use the fact that, under the assumptions of the lemma, 
$\frac{\rho^2n}{1-\rho^2} \leq 2\log(n)$ and 
$\frac{4\eta^2d}{(1-\eta^2)^2} \leq 8\log(n)$.

Finally, by combining all the arguments presented above, we obtain
\begin{align*}
\mathbb{P}\!\left(\left|f(X)-\mathbb{E}[f(X)]\right|\geq a_n\right)
&\leq \mathbb{P}(\mathcal{H}_3^c)
+ \mathbb{P}\!\left(\left|f(X)-\mathbb{E}[f(X)]\right|\geq a_n \,\cap\, \mathcal{H}_3\right)\\
&\leq \exp\left(-\Omega\left(n\sqrt{\log(n)}\right)\right) + \exp\!\left(-\frac{a_n^2}{Cn\log n}\right)\\
&= \exp\left(-\Omega\left(n\sqrt{\log(n)}\right)\right),
\end{align*}
which completes the proof.
\end{proof}

Establishing a lower bound for $\log(Z)$ based on its expectation, we now proceed to the second step in the proof of Lemma~\ref{lem:main3}, which involves deriving a lower bound for $\mathbb{E}[\log(Z)]$. 

For the proof of the second step, since Lemma~\ref{lem:main1} establishes that the values of $V^*_F(\pi)$ and $V^*_G(\pi)$ are essentially constant, it remains to show that $V_F(\pi)$ and $V_G(\pi)$ can attain large values. In fact, it is only necessary to control the dominant components of \(V_F(\pi)\) and \(V_G(\pi)\). To this end, we define the set of permutations satisfying this property as follows:
\begin{equation*}
\mathcal{T}_n := \left\{\pi \in \mathcal{S}_{n,n} : \frac{\rho}{\sqrt{1-\rho^2}}\sum_{1\leq i< j\leq n}B_{\pi(i,j)}Z_{i,j} + \frac{\eta}{\sqrt{1-\eta^2}}\sum_{\substack{1\leq i \leq n\\1\leq j\leq d }}Y_{\pi(i),j}Z'_{i,j} \,\geq\, (1-\varepsilon)n\log n \right\},
\end{equation*}
where \(Z_{i,j}\) and \(Z'_{i,j}\) denote the noise components in the decompositions \(A_{i,j} = \rho B_{i,j} + \sqrt{1-\rho^2}\,Z_{i,j}\) and \(X_{i,j} = \eta Y_{i,j} + \sqrt{1-\eta^2}\,Z'_{i,j}\), respectively.

The key quantity of interest is the size of this set, denoted by $N:=|\mathcal{T}_n|$. We then prove that $N \geq \exp\!\left(\varepsilon n \log n\right)$ with probability at least $\exp\left(-o(n\sqrt{\log(n)})\right)$. Our approach is based on the second moment method, in particular the Paley–Zygmund inequality. Recall that the general form states
\begin{equation}
\label{pz}
\dP\!\left(N \geq c\,\dE[N]\right) \;\geq\; (1-c)^2 \frac{\dE[N]^2}{\dE[N^2]},
\end{equation}
for any $0<c<1$. However, this version cannot be applied directly to our setting, since establishing the bound requires that $\dE[N]^2/\dE[N^2] \geq \exp\left(-o(n\sqrt{\log(n)})\right)$, whereas in our case this ratio is of order $\exp(-O(n\log n))$, rendering the approach ineffective for our problem.

To overcome this difficulty, we apply the conditional second moment method, as introduced in~\cite{gamarnik2022sparse} where it used to establish the all-or-nothing phenomenon for the maximum likelihood estimator in high-dimensional sparse linear regression. More specifically, instead of working directly with $N$ , we consider the conditional random variable $N \mid \mathbf{B},\mathbf{Y}$, which yields
\begin{equation*}
\dP\!\left(N \geq c\,\dE[N \mid \mathbf{B},\mathbf{Y}] \,\Big|\, \mathbf{B},\mathbf{Y}\right)  
\;\geq\; (1-c)^2 \frac{\dE[N \mid \mathbf{B},\mathbf{Y}]^2}{\dE[N^2 \mid \mathbf{B},\mathbf{Y}]}.
\end{equation*}
Taking expectation with respect to $\mathbf{B},\mathbf{Y}$ gives
\begin{equation} \label{spz}
\dP\!\left(N \geq c\,\dE[N \mid \mathbf{B},\mathbf{Y}]\right)  
\;\geq\; (1-c)^2 \, \dE\!\left[ \frac{\dE[N \mid \mathbf{B},\mathbf{Y}]^2}{\dE[N^2 \mid \mathbf{B},\mathbf{Y}]} \right].
\end{equation}

It is worth noting that the bound in~\eqref{spz} is stronger than the standard Paley–Zygmund inequality in~\eqref{pz}. Indeed, we have
\begin{equation*} \dE\!\left[ \frac{\dE[N|\mathbf{B},\mathbf{Y}]^2}{\dE[N^2|\mathbf{B},\mathbf{Y}]} \right] \dE[N^2] = \dE\!\left[ \frac{\dE[N|\mathbf{B},\mathbf{Y}]^2}{\dE[N^2|\mathbf{B},\mathbf{Y}]}\right] \dE\left[\dE[N^2|\mathbf{B},\mathbf{Y}] \right] \stackrel{(a)}{\geq} \dE\!\left[ \dE[N|\mathbf{B},\mathbf{Y}] \right]^2 = \dE[N]^2,
\end{equation*}
where $(a)$ follows from the Cauchy–Schwarz inequality. Hence,
\begin{equation*}
\dE\!\left[ \frac{\dE[N \mid \mathbf{B},\mathbf{Y}]^2}{\dE[N^2 \mid \mathbf{B},\mathbf{Y}]} \right]   
\;\geq\; \frac{\dE[N]^2}{\dE[N^2]}.
\end{equation*}

Applying the conditional second moment method yields the following.
\begin{lemma}
\label{lem:subsubsub2}
If $d=\omega((\log(n))^2)$ and \(
\frac{\rho^2}{1-\rho^2}n+\frac{2\eta^2}{1-\eta^2}d \,\leq\, 2(1-\varepsilon)\log n,
\)
then
\begin{align*}
\mathbb{E}[\log(Z)]\geq\frac{1+\varepsilon}{2} n\log(n)(1+o(1)).
\end{align*}
\end{lemma}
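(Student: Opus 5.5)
Let $N=|\mathcal{T}_n|$. The plan has three stages: show $N$ is large with non-negligible probability, convert this into a lower bound on $Z$ on that event, and then upgrade to a bound on $\mathbb{E}[\log Z]$ using \cref{lem:sub1_main3}.

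\textbf{Step 1: from $\mathcal{T}_n$ to $Z$.} For $\pi\in\mathcal{S}_{n,n}$, write $A=\rho B+\sqrt{1-\rho^2}Z$ and $X=\eta Y+\sqrt{1-\eta^2}Z'$. Then
\[
-V(\pi)=-\tfrac{\rho}{1-\rho^2}V^*_G-\tfrac{\eta}{1-\eta^2}V^*_F+\tfrac{\rho^2}{1-\rho^2}\textstyle\sum B_{\pi(i,j)}B_{i,j}+\tfrac{\eta^2}{1-\eta^2}\sum Y_{\pi(i),j}Y_{i,j}+S(\pi),
\]
where $S(\pi)$ is the quantity defining $\mathcal{T}_n$. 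The terms are controlled as follows:
\begin{itemize}
\item By \cref{lem:main1}, on $\mathcal{H}_1$ we have $V^*_G=\rho\binom n2+O(n^{3/2}\sqrt{\log n})$ and $V^*_F=\eta nd+O(n\sqrt{d\log n})$.
\item The prefactors are bounded by $\frac{\rho}{1-\rho^2}\cdot\rho n^2/2\lesssim n\log n$ and $\frac{\eta^2 nd}{1-\eta^2}\le(1-\varepsilon)n\log n$.
\end{itemize}
These terms are of order $n\log n$, so they cannot be discarded and must be tracked exactly. The cross sums $\sum B_{\pi}B$ and $\sum Y_\pi Y$ are handled by $\mathcal{H}_2$ (\cref{lem:sub2}), with mean zero and fluctuation $o(n\log n)$ when multiplied by the small prefactors. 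The hypothesis $d=\omega((\log n)^2)$ enters here: it gives $\eta^2 n\sqrt{d\log n}/(1-\eta^2)=o(n\log n)$.

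The net contribution is therefore
\[
-\tfrac{\rho^2}{2(1-\rho^2)}n^2-\tfrac{\eta^2}{1-\eta^2}nd\ \ge\ -(1-\varepsilon)n\log n\,(1+o(1)).
\]
This is the wrong sign, so the definition threshold of $\mathcal{T}_n$ must be enlarged accordingly. I would let the threshold be $\tau n\log n$ with $\tau$ chosen at the end, and require, for $\pi\in\mathcal{T}_n$,
\[
-V(\pi)\ge(\tau-(1-\varepsilon)-o(1))\,n\log n .
\]
Then $Z\ge N\exp\bigl(-V_{\min}\bigr)$ on $\mathcal{T}_n$.

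\textbf{Step 2: conditional second moment.} Conditionally on $(\mathbf{B},\mathbf{Y})$, each $S(\pi)$ is centered Gaussian with variance
\[
\sigma^2=\tfrac{\rho^2}{1-\rho^2}\Sigma_B(\pi)+\tfrac{\eta^2}{1-\eta^2}\Sigma_Y(\pi)\approx\tfrac{\rho^2n^2}{2(1-\rho^2)}+\tfrac{\eta^2nd}{1-\eta^2}\le(1-\varepsilon)n\log n,
\]
uniformly in $\pi$ on $\mathcal{H}_2$. Hence
\[
\mathbb{E}[N\mid\mathbf{B},\mathbf{Y}]\approx n!\,\exp\!\Bigl(-\frac{\tau^2n^2\log^2 n}{2\sigma^2}\Bigr),
\]
which is $\exp((1-\tfrac{\tau^2}{2(1-\varepsilon)})n\log n)$ and much larger than $\exp(\varepsilon n\log n)$ for suitable $\tau$.

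For the second moment I would sum over pairs $(\pi,\pi')$ by overlap $k=\ov(\pi,\pi')$ and by edge overlap. The conditional covariance of $S(\pi)$ and $S(\pi')$ is $\frac{\rho^2}{1-\rho^2}C_B(\pi,\pi')+\dots$, which $\mathcal{H}_2$ controls as roughly a fraction $k/n$ (respectively $(k/n)^2$) of $\sigma^2$. The bivariate Gaussian tail then gives
\[
\mathbb{E}[N^2\mid\cdot]/\mathbb{E}[N\mid\cdot]^2\le\sum_k \frac{\binom nk\,!(n-k)}{n!}\,e^{\frac{k}{n}\cdot\frac{\tau^2 n\log^2 n}{\sigma^2}(1+o(1))}.
\]
This ratio is $1+o(1)$ provided $\tau^2/(1-\varepsilon)<1$, since the $k$-th term is at most $\exp(k\log n(\tau^2/(1-\varepsilon)-1))/k!$.

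Paley–Zygmund \eqref{spz}, applied on $\mathcal{H}_2$ (a $(\mathbf{B},\mathbf{Y})$-measurable event), gives $N\ge\tfrac12\mathbb{E}[N\mid\cdot]$ with probability at least $\tfrac14(1-o(1))$.

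\textbf{Step 3: optimize and conclude.} On this event,
\[
\log Z\ge n\log n\Bigl(1-\tfrac{\tau^2}{2(1-\varepsilon)}+\tau-(1-\varepsilon)\Bigr)(1-o(1)),
\]
where the last two terms come from the energy. Maximizing at $\tau=1-\varepsilon$, just below the admissibility boundary $\tau^2<1-\varepsilon$, gives $\tfrac{1-\varepsilon}{2}+\dots$. I expect the bookkeeping to match the stated constant $\frac{1+\varepsilon}{2}$ once the variance is handled with the exact parameterization; the $1-\varepsilon$ versus $1+\varepsilon$ gap must be checked carefully, and this is where I would first recheck the constants, possibly by using the exact equality assumption $\sigma^2=(1-\varepsilon)n\log n$.

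Since this event has probability bounded below, while \cref{lem:sub1_main3} gives $|\log Z-\mathbb{E}\log Z|\le n(\log n)^{3/4}$ with probability $1-o(1)$, the two events intersect. Hence $\mathbb{E}\log Z\ge(\text{bound})-o(n\log n)$.

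\textbf{Main obstacle.} I expect the main obstacle to be the second-moment overlap sum. Near $k\approx n$ the covariance is not exactly $(k/n)\sigma^2$: it has edge-overlap corrections of order $k^2/n^2$ and fluctuations from $\mathcal{H}_2$. These must be shown not to spoil the ratio, likely by splitting into $k\le(1-\delta)n$ and $k>(1-\delta)n$ and using the crude count $n^{n-k}$ in the latter range.
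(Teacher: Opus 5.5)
Your plan follows the paper's architecture: the set $\mathcal{T}_n$ (the paper's threshold is your $\tau=1-\varepsilon$), a Paley--Zygmund inequality conditional on $(\mathbf{B},\mathbf{Y})$, and a comparison with the concentration of $\log Z$. There is no $1\pm\varepsilon$ discrepancy in Step 3. At $\tau=1-\varepsilon$ your expression $1-\frac{\tau^2}{2(1-\varepsilon)}+\tau-(1-\varepsilon)$ equals exactly $\frac{1+\varepsilon}{2}$, and this $\tau$ lies strictly inside the constraint $\tau^2<1-\varepsilon$, not at its boundary.

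The genuine gap is the claim in Step 2 that $\mathbb{E}[N^2\mid\mathbf{B},\mathbf{Y}]/\mathbb{E}[N\mid\mathbf{B},\mathbf{Y}]^2=1+o(1)$. This is false in general, for two reasons.
\begin{itemize}
\item The overlap sum blows up at small $k$. The weight $\binom{n}{k}\,!(n-k)/n!$ is $\approx 1/(e\,k!)$; there is no extra factor $n^{-k}$. With $x=\tau n\log n$ and $\sigma^2=(1-\varepsilon)n\log n$ you have $x^2/\sigma^2=c\,n\log n$ with $c=\tau^2/(1-\varepsilon)$. Let $w_F$ be the share of $\sigma^2$ coming from the features; the feature covariance alone gives correlation $\approx w_F\,k/n$. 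So the $k$-th term is $\approx n^{c w_F k}/(e\,k!)$, and the sum is $\approx \exp(n^{c w_F})$. For example, in the feature-only case at $\tau=1-\varepsilon$ it is $\exp(n^{1-\varepsilon})$.
\item $\mathcal{H}_2$ only certifies the realized covariances up to relative error $O(\sqrt{\log n/n}+\sqrt{\log n/d})$, uniformly over pairs. Multiplied by $x^2/\sigma^2=\Theta(n\log n)$, this costs a further factor $\exp(O(\sqrt{n}(\log n)^{3/2}+n(\log n)^{3/2}/\sqrt{d}))$, already at $k=0$.
\end{itemize}
Hence Paley--Zygmund gives $N\ge\frac12\mathbb{E}[N\mid\mathbf{B},\mathbf{Y}]$ only with probability $\exp(-o(n\sqrt{\log n}))$, not $\frac14$. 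Your closing step, which intersects this event with an event of probability $1-o(1)$, therefore does not go through.

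The missing idea, which is what the paper does, is to accept an exponentially small success probability and beat it with exponentially strong concentration.
\begin{itemize}
\item Bound the ratio by $\exp(o(n\sqrt{\log n}))$, splitting overlaps at $n/\sqrt{\log n}$. For small overlaps the correlation $\alpha$ vanishes and $\alpha\,x^2/\sigma^2=o(n\sqrt{\log n})$ (\cref{lem:cor}(ii)). Large overlaps are rare enough that the $m\log m$ counting loss beats the correlation gain (\cref{lem:cor}(i)).
\item Use the full strength of \cref{lem:sub1_main3}. With $a_n=Cn(\log n)^{3/4}=o(n\log n)$, its failure probability is $\exp(-\Omega(n\sqrt{\log n}))$. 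This is smaller than the Paley--Zygmund probability, so the two events do intersect.
\end{itemize}
This comparison is exactly where $d=\omega((\log n)^2)$ is needed, since it makes $n(\log n)^{3/2}/\sqrt{d}=o(n\sqrt{\log n})$. In your Step 1, $d=\omega(\log n)$ already suffices.

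Relatedly, every other event you work on must now fail with probability smaller than $\exp(-o(n\sqrt{\log n}))$. This includes your control of $V^*_G$ and $V^*_F$ in Step 1. \cref{lem:main1} as stated only guarantees probability $1-o(1)$, and it is not $(\mathbf{B},\mathbf{Y})$-measurable. Restricted to derangements, you need a Hanson--Wright bound at deviations $\delta_n n^{3/2}\sqrt{\log n}$ and $\delta_n n\sqrt{d\log n}$ with $\delta_n\to0$ slowly. Such a bound fails with probability $\exp(-\Omega(\delta_n^2 n\log n))$, which is small enough, and these deviations still contribute only $o(n\log n)$ to $-V(\pi)$.
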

\begin{proof}
Note that if we can show that 
\( N\mathbf{1}_{\mathcal{H}_1\cap\mathcal{H}_2} \geq \exp\!\left(\frac{1+\varepsilon}{2} n \log n (1+o(1))\right) \) 
with probability at least \(\exp\left(-o\left(n\sqrt{\log(n)}\right)\right)\), then, with probability at least \(\exp\left(-o\left(n\sqrt{\log(n)}\right)\right)\), we obtain
\begin{align*}
Z\mathbf{1}_{\mathcal{H}_1\cap \mathcal{H}_2}
&\geq\sum_{\pi\in\mathcal{T}_n}\exp\left(-\frac{\rho}{1-\rho^2}V^*_G(\pi)-\frac{\eta}{1-\eta^2}V^*_F(\pi)+\frac{\rho}{1-\rho^2}V_G(\pi)+\frac{\eta}{1-\eta^2}V_F(\pi)\right)\mathbf{1}_{\mathcal{H}_1\cap \mathcal{H}_2}\\
&\stackrel{(a)}{\geq} N\mathbf{1}_{\mathcal{H}_1\cap \mathcal{H}_2}\exp\left(-\frac{\rho}{1-\rho^2}V^*_G(\pi)-\frac{\eta}{1-\eta^2}V^*_F(\pi)+o(n\log(n))+(1-\varepsilon)n\log(n)\right)\\
&\stackrel{(b)}{\geq}N\mathbf{1}_{\mathcal{H}_1\cap \mathcal{H}_2}\exp\left(-\frac{\rho}{1-\rho^2}\left(\frac{\rho n^2}{2}+Cn\sqrt{n}\right)-\frac{\eta}{1-\eta^2}\left(\eta nd+Cn\sqrt{d}\right)+(1-\varepsilon)n\log(n)\right)\\
&\stackrel{(c)}{\geq}N\mathbf{1}_{\mathcal{H}_1\cap \mathcal{H}_2}\exp\left(-(1-\varepsilon)n\log(n)(1+o(1))+(1-\varepsilon)n\log(n)\right)\\
&\stackrel{(d)}{\geq}\exp\left(\frac{1+\varepsilon}{2} n\log(n)(1-o(1))\right),
\end{align*}
where $(a)$ follows from the definition of \(\mathcal{T}_n\) and the fact that 
    \[
    \frac{\rho^2}{1-\rho^2}\sum_{1\leq i< j\leq n}B_{\pi(i,j)}B_{i,j}+\frac{\eta^2}{1-\eta^2}\sum_{\substack{1\leq i\leq n \\ 1\leq j\leq d}}Y_{\pi(i),j}Y_{i,j}
    \]
    is of order \(o(n\log n)\) when \(d=\omega((\log n)^2)\); $(b)$ holds with high probability by the event $\mathcal{H}_1$ together with the facts that for all $\pi\in\mathcal{S}_{n,n}$ we have $\mathbb{E}[V^*_G(\pi)]=\rho n^2/2$ and $\mathbb{E}[V^*_F(\pi)]=\eta nd$, $(c)$ follows since $\frac{\rho^2}{1-\rho^2}n+\frac{2\eta^2}{1-\eta^2}d \,\leq\, 2(1-\varepsilon)\log n$ and $d=\omega((\log n)^2)$, and $(d)$ follows with probability at least $\exp(-O(n))$ since $N\mathbf{1}_{\mathcal{H}_1\cap \mathcal{H}_2}\geq \exp\!\big(\frac{1+\varepsilon}{2} n \log n (1-o(1))\big)$.

Therefore, by combining the result of Lemma~\ref{lem:sub1_main3}, we obtain that for some sufficiently large constant \(C>0\) it holds that
\begin{align*}
\mathbb{P}\left(\log(Z)\mathbf{1}_{\mathcal{H}_1\cap\mathcal{H}_2}\geq\mathbb{E}[\log(Z)]-Cn(\log n)^{\frac{3}{4}}\right)
&\leq \mathbb{P}\left(\log(Z)\geq\mathbb{E}[\log(Z)]-Cn(\log n)^{\frac{3}{4}}\right)\\
&\leq \exp\!\left(-\Omega\!\left(n\sqrt{\log n}\right)\right)\\
&\leq \exp\!\left(-o\!\left(n\sqrt{\log n}\right)\right)\\
&\leq \mathbb{P}\left(\log(Z)\mathbf{1}_{\mathcal{H}_1\cap\mathcal{H}_2}\geq \varepsilon n \log n (1-o(1))\right).
\end{align*}
This implies that \(\mathbb{E}[\log(Z)] \geq \frac{1+\varepsilon}{2} n \log n (1+o(1))\), which completes the proof of the lemma.

We now proceed to establish a lower bound for \(N\mathbf{1}_{\mathcal{H}_1\cap\mathcal{H}_2}\) in order to complete the proof. The first step is to derive a bound on the conditional expectation \(\mathbb{E}[N\mathbf{1}_{\mathcal{H}_1\cap\mathcal{H}_2}\mid \bby]\):
\begin{align*}
\mathbb{E}[N\mathbf{1}_{\mathcal{H}_1\cap\mathcal{H}_2}\mid\bby]
&\stackrel{(a)}{=}|\mathcal{S}_{n,n}|\ \mathbb{P}\left(\mathcal{N}\!\left(0,\frac{\rho^2\sbp}{1-\rho^2}+\frac{\eta^2\syp}{1-\eta^2}\right)\geq (1-\varepsilon)n\log n,\ \mathcal{H}_1\cap\mathcal{H}_2\right)\\
&\stackrel{(b)}{\geq}|\mathcal{S}_{n,n}|\ \mathbb{P}\left(\mathcal{N}\!\left(0,(1-\varepsilon)n\log n(1-o(1))\right)\geq (1-\varepsilon)n\log n(1-o(1))\right)\\
&\stackrel{(c)}{\sim}|\mathcal{S}_{n,n}|\ \frac{1}{\sqrt{2\pi}\!\left(2(1-\varepsilon)n\log n(1-o(1))\right)^{3/2}}\exp\!\left(-(1-\varepsilon)n\log n(1-o(1))\right)\\
&\stackrel{(d)}{=}\exp\!\left(\frac{1+\varepsilon}{2} n\log n(1-o(1))\right),
\end{align*}
where (a) follows from the definition of $\sbp$,$\syp$ and $\mathcal{T}$; (b) applies Lemma~\ref{lem:sub2} together with the fact that $d=\omega(\log(n))$; (c) follows from the asymptotic relation $\mathbb{P}(\mathcal{N}(0,1)\geq t)\sim \tfrac{1}{\sqrt{2\pi}t}\exp(-t^2/2)$; and (d) uses the fact that $|\mathcal{S}_{n,n}|=\exp\!\big(n\log n(1-o(1))\big)$.

From now on, we assume that
\[
\frac{\rho^2}{1-\rho^2}n+\frac{2\eta^2}{1-\eta^2}d \,=\, 2(1-\varepsilon)\log n,
\]
since establishing the lemma under this equality suffices, as the result then extends to the case
\[
\frac{\rho^2}{1-\rho^2}n+\frac{2\eta^2}{1-\eta^2}d \,<\, 2(1-\varepsilon)\log n.
\]
For simplicity, we introduce new Gaussian variables. For $\pi\in\mathcal{S}_{n,n}$, define
\begin{equation*}
W(\pi)\vcentcolon=\frac{\frac{\rho}{\sqrt{1-\rho^2}}\sum_{1\leq i< j\leq n}B_{\pi(i,j)}Z_{i,j} + \frac{\eta}{\sqrt{1-\eta^2}}\sum_{\substack{1\leq i \leq n\\1\leq j\leq d }}Y_{\pi(i),j}Z'_{i,j}}{\sqrt{\frac{\rho^2n^2}{2(1-\rho^2)}+ \frac{\eta^2nd}{1-\eta^2}}}\,\mathbf{1}_{\mathcal{H}_1\cap\mathcal{H}_2}\ \Big|\ \bby 
\ \stackrel{}{\sim}\ \mathcal{N}(0,1+b^{(\pi)}_n),
\end{equation*}
where, under the event $\mathcal{H}_2$, we have $b^{(\pi)}_n=O\left(\frac{1}{\sqrt{n}}+\frac{1}{\sqrt{d}}\right)$. Moreover, for $\pi,\pi'\in\mathcal{S}_{n,n}$, we obtain
\begin{align*}
\mathbb{E}[W(\pi)W(\pi')] 
&=
\frac{\tfrac{\rho^2}{1-\rho^2} \sum_{1\leq i< j\leq n}B_{\pi(i,j)}B_{\pi'(i,j)}+ \tfrac{\eta^2}{1-\eta^2}\sum_{\substack{1\leq i \leq n\\1\leq j\leq d }}Y_{\pi(i),j}Y_{\pi'(i),j}}{\tfrac{\rho^2n^2}{2(1-\rho^2)}+ \tfrac{\eta^2nd}{1-\eta^2}}\\
&=\frac{\tfrac{\rho^2}{1-\rho^2} \cbp+ \tfrac{\eta^2}{1-\eta^2}\cyp}{\tfrac{\rho^2n^2}{2(1-\rho^2)}+ \tfrac{\eta^2nd}{1-\eta^2}}\\
&\stackrel{(a)}{\leq} \frac{\tfrac{\rho^2}{1-\rho^2}\,\lvert D^E_{\pi\cap\pi'}\rvert + \tfrac{\eta^2}{1-\eta^2}\,d\,\lvert D_{\pi\cap\pi'}\rvert+o(n\sqrt{\log(n)})}{\tfrac{\rho^2n^2}{2(1-\rho^2)}+ \tfrac{\eta^2nd}{1-\eta^2}},
\end{align*}
where
\[
D_{\pi\cap\pi'} \vcentcolon= \{i\in[n]: \pi(i)=\pi'(i)\}, 
\qquad 
D^E_{\pi\cap\pi'} \vcentcolon= \{(i,j):1\leq i<j\leq n,\ \pi(i,j)=\pi'(i,j)\}.
\] and $(a)$ follows from Lemma~\ref{lem:sub2}.

In addition, we have the relation
\[
|D^E_{\pi\cap\pi'}|=\tfrac{|D_{\pi\cap\pi'}|^2}{2}+O(n).
\]
Hence, it follows that
\begin{equation}
\label{eqcor}
\mathbb{E}[W(\pi)W(\pi')]\ \leq\ \frac{|D_{\pi\cap\pi'}|}{n}+\frac{o(n\sqrt{\log(n)})}{1(1-\varepsilon)n\log(n)}.
\end{equation}

Applying the conditional Paley–Zygmund inequality~\eqref{spz} to the random variable $N\mathbf{1}_{\mathcal{H}_1\cap\mathcal{H}_2}$, it suffices to prove that
\begin{equation*}
\mathbb{E}[N^2\mathbf{1}_{\mathcal{H}_1\cap\mathcal{H}_2}\mid\bby]\ \leq\ \exp(o(n\sqrt{\log(n)}))\,\mathbb{E}[N\mathbf{1}_{\mathcal{H}_1\cap\mathcal{H}_2}\mid\bby]^2.
\end{equation*}
To this end, we write
\begin{align}
\mathbb{E}[N^2\mathbf{1}_{\mathcal{H}_1\cap\mathcal{H}_2}\mid\bby]
&=\ \ \ \ \ \,\,\sum_{\pi,\pi'\in\mathcal{S}_{n,n}}\ \ \ \ \ \ \,\, \mathbb{P}\!\left(W(\pi)>\sqrt{(1-\varepsilon)n\log n},\ W(\pi')>\sqrt{(1-\varepsilon)n\log n}\right)\nonumber\\
&=\ \ \ \,\sum_{\substack{\pi,\pi'\in\mathcal{S}_{n,n}\\|D_{\pi\cap\pi'}|=o\left(\frac{n}{\sqrt{\log(n)}}\right)}}\ \ \ \ \mathbb{P}\!\left(W(\pi)>\sqrt{(1-\varepsilon)n\log n},\ W(\pi')>\sqrt{(1-\varepsilon)n\log n}\right)\nonumber\\
&+\ \ \ \,\,\sum_{\substack{\pi,\pi'\in\mathcal{S}_{n,n}\\ |D_{\pi\cap\pi'}|=\Omega\left(\frac{n}{\sqrt{\log(n)}}\right)}}\ \ \ \ \mathbb{P}\!\left(W(\pi)>\sqrt{(1-\varepsilon)n\log n},\ W(\pi')>\sqrt{(1-\varepsilon)n\log n}\right).\label{equ1}
\end{align}
The motivation for this decomposition is as follows. Most pairs of permutations fall into the first group, where the correlation is negligible, i.e., $\mathbb{E}[W(\pi)W(\pi')]=o(1)$. In contrast, pairs in the second group may exhibit non-negligible correlation, but the number of such pairs is asymptotically negligible compared to the first group. Accordingly, we treat these two cases separately and apply the following lemma to bound each group.

\begin{lemma}
\label{lem:cor}
Let $W_1$ and $W_2$ be jointly Gaussian random variables with zero mean and variances 
$1 + b^{(1)}_n$ and $1 + b^{(2)}_n$, respectively, where $b^{(1)}_n = o(1)$ and $b^{(2)}_n = o(1)$. 
Assume their correlation coefficient is $\alpha_n \in [0,1]$. 
Then, for any sequence $t_n \to \infty$, the following statements hold:
\begin{enumerate}[(i)]
\item Generally,
\[
        \mathbb{P}(W_1 > t_n,\, W_2 > t_n)
        \leq (1 + o(1))\, 
        \frac{1 + \frac{b^{(1)}_n + b^{(2)}_n}{2} + (1+o(1))\alpha_n}
             {\sqrt{2\pi}\, t_n}
        \exp\!\left(
            -\frac{t_n^2}
            {1 + \frac{b^{(1)}_n + b^{(2)}_n}{2} + (1+o(1))\alpha_n}
        \right).
    \]
    \item Particularly, if $\alpha_n \to 0$, then for sufficiently large $n$,
    \[
        \mathbb{P}(W_1 > t_n,\, W_2 > t_n) 
        \leq \exp\left(-\frac{8t^2_n}{1+b^{(1)}_n}\right)+\exp\left(O\left(\alpha_n t^2_n\right)\right)\mathbb{P}\left(W_1>t_n\right)\mathbb{P}\left(W_2>t_n\right).
    \]
\end{enumerate}
\end{lemma}
\begin{proof}
Part (i): Since $W_1+W_2$ is a Gaussian random variable with variance $2+b^{(1)}_n+b^{(2)}_n+2\sqrt{1+b^{(1)}_n}\sqrt{1+b^{(2)}_n}\alpha_n$ we can bound the joint probability as follows:
\begin{align*}
\mathbb{P}(W_1 > t_n,\, W_2 > t_n)&\leq \mathbb{P}(W_1+W_2 > 2t_n)\\
&\stackrel{(a)}{\leq} (1 + o(1))\, 
        \frac{1 + \frac{b^{(1)}_n + b^{(2)}_n}{2} + (1+o(1))\alpha_n}
             {\sqrt{2\pi}\, t_n}
        \exp\!\left(
            -\frac{t_n^2}
            {1 + \frac{b^{(1)}_n + b^{(2)}_n}{2} + (1+o(1))\alpha_n}
        \right).
\end{align*}
where in $(a)$ we use this fact that $\sqrt{1+b^{(1)}_n}\sqrt{1+b^{(2)}_n}=1+o(1)$ since $b^{(1)}_n = o(1)$ and $b^{(2)}_n = o(1)$.

Part (ii): One can easily verify that $W_1$ and $W_2$ can be replaced by random variables $\sqrt{1+b^{(1)}_n}W$ and $\sqrt{1+b^{(2)}_n}\left(\alpha_n W+\sqrt{1-\alpha^2_n}W'\right)$ where $W$ and $W'$ are two independent standard Gaussian random variables. Moreover, for a standard Gaussian random variable like $W$ we have $\mathbb{P}\left(W>4t_n\right)\leq\exp\left(-8t^2_n\right)$. Hence,
\begin{align*}
\mathbb{P}\left(W_1>t_n, W_2>t_n\right) &\leq\mathbb{P}\left(W_1>4t_n\right)+\mathbb{P}\left(W_1>t_n\right)\mathbb{P}\left(W_2>t_n\Big|t_n<W_1\leq 4t_n\right)\\
&\leq \exp\left(-\frac{8t^2_n}{1+b^{(1)}_n}\right)+\mathbb{P}\left(W_1>t_n\right)\mathbb{P}\left(\sqrt{1+b^{(2)}_n}W'>\frac{t_n-4\sqrt{1+b^{(2)}_n}\alpha_n t_n}{\sqrt{1-\alpha^2_n}}\right)\\
&\leq \exp\left(-\frac{8t^2_n}{1+b^{(1)}_n}\right)+\mathbb{P}\left(W_1>t_n\right)\mathbb{P}\left(W_2>t_n-O\left(\alpha_n t_n\right)\right)\\
& \leq \exp\left(-\frac{8t^2_n}{1+b^{(1)}_n}\right)+\exp\left(O\left(\alpha_n t^2_n\right)\right)\mathbb{P}\left(W_1>t_n\right)\mathbb{P}\left(W_2>t_n\right).
\end{align*}
\end{proof}

Before applying Lemma~\ref{lem:cor}, we first establish bounds on the number of pairs of permutations in each group, as stated in the following lemma.

\begin{lemma}
\label{lem:comb}
\begin{enumerate}[(i)]
    \item For $|\mS_{n,n}|$, the number of permutations satisfying $D_{\textrm{id}\ \cap \pi}=\varnothing$, we have
    \begin{equation*}
    |\mS_{n,n}|=\frac{n!}{e}\left(1+O\!\left(\frac{1}{(n+1)!}\right)\right).
    \end{equation*}        
    \item For each $\pi\in\mS_{n,n}$, let
    \[
    \mathcal{R}_{\pi}\vcentcolon=\{\pi'\in\mS_{n,n} : D_{\pi\cap \pi'}=\varnothing\}.
    \]
    Then
    \begin{equation*}
    |\mathcal{R}_{\pi}|=\frac{n!}{e^2}\left(1+O\!\left(\frac{1}{(n+1)!}\right)\right).
    \end{equation*}    
\end{enumerate}
\end{lemma}

\begin{proof}

\textit{Part (i):} There is a well-known problem in combinatorics, called the derangement problem, which concerns the number of permutations of $n$ objects such that none of them is mapped to itself. If we define, for $i,j\in[n]$, the set
\[
\mathcal{J}_{i\to j}\vcentcolon=\{\pi\in\mS_n : \pi(i)=j\},
\]
then
\[
\mathcal{S}_{n,n}=\left(\bigcup_{i\in[n]}\mathcal{J}_{i\to i}\right)^c.
\]
Therefore, by the inclusion–exclusion principle, we obtain
\begin{align*}
|\mathcal{S}_{n,n}|&=n!-\left(\sum_{\varnothing \neq I \subseteq \{1,\ldots,n\}} 
(-1)^{|I|+1} \left| \bigcap_{i \in I} \mathcal{J}_{i\to i} \right|\right) \\
&=n! \left(\sum_{k=0}^{n}\frac{(-1)^k}{k!}\right) \\
&=\frac{n!}{e}\left(1+O\!\left(\frac{1}{(n+1)!}\right)\right).
\end{align*}

\textit{Part (ii):} Similar to Part (i), for an arbitrary $\pi\in\mS_{n,n}$ we can rewrite $\mathcal{R}_\pi$ as
\begin{equation*}
\mathcal{R}_{\pi}=\left(\bigcup_{i\in[n]}\bigl(\mathcal{J}_{i\to i}\cup \mathcal{J}_{i\to \pi(i)}\bigr)\right)^c.
\end{equation*}
Since $\pi\in\mathcal{S}_{n,n}$, we have that for all $i\in[n]$, the sets $\mathcal{J}_{i\to i}$ and $\mathcal{J}_{i\to \pi(i)}$ are disjoint. Hence, for $I\subseteq[n]$,
\begin{equation*}
\left| \bigcap_{i \in I} \bigl(\mathcal{J}_{i\to i}\cup \mathcal{J}_{i\to \pi(i)}\bigr) \right|=2^{|I|}\binom{n}{|I|}(n-|I|)!.
\end{equation*}
Therefore, by applying the inclusion–exclusion principle, we obtain
\begin{align*}
|\mathcal{R}_{\pi}|
&=n! \left(\sum_{k=0}^{n}\frac{(-2)^k}{k!}\right) \\
&=\frac{n!}{e^2}\left(1+O\!\left(\frac{1}{(n+1)!}\right)\right).
\end{align*}
\end{proof}

Now, according to Lemma~\ref{lem:comb}, we can derive a lower bound on the number of pairs of permutations such that $|D_{\pi\cap\pi'}|\leq m$ for $m\in\{0,\ldots,n\}$:
\begin{align*}
\left|\{\pi,\pi'\in\mathcal{S}_{n,n} : |D_{\pi\cap\pi'}|\leq m \}\right|&=\sum_{\pi\in\mathcal{S}_{n,n}}\left(\sum_{t=0}^{m}\sum_{I\subseteq[n]: \ |I|=t}\left|\{\pi'\in\mathcal{S}_{n,n} : D_{\pi\cap\pi'}=I\}\right|\right)\\
&\stackrel{}{\geq} |\mathcal{S}_{n,n}|\sum_{t=0}^{m}\binom{n}{t}\frac{(n-t)!}{e^2}\left(1+O\left(\frac{1}{(n+1)!}\right)\right)\\
&\geq \frac{(n!)^2}{e^3}\left(1+O\left(\frac{1}{(n+1)!}\right)\right)\sum_{t=0}^{m}\frac{1}{t!}\\
&\stackrel{(a)}{\geq} \frac{(n!)^2}{e^2} \left(1+\frac{c}{( m+1)!}\right),
\end{align*}
where in $(a)$, $0<c<1$ is constant and we used approximation of $e$ by its Taylor series. This, in turn, implies an upper bound for the number of pairs of permutations such that $|D_{\pi\cap\pi'}|>m$:
\begin{align}
\label{numberr}
\left|\{\pi,\pi'\in\mathcal{S}_{n,n} : |D_{\pi\cap\pi'}|> m\}\right|=|\mathcal{S}_{n,n}|^2-\left|\{\pi,\pi'\in\mathcal{S}_{n,n} : |D_{\pi\cap\pi'}|\leq m\}\right|\leq\frac{(n!)^2}{e^2}\ \frac{c}{(m+1)!}.
\end{align}
This helps us to bound the number of permutations in the second group in~\eqref{equ1}. On the other hand, the number of pairs of permutations in the first group in~\ref{equ1} can also be simply upper bounded as
\begin{align*}
\left|\{\pi,\pi'\in\mathcal{S}_{n,n} : |D_{\pi\cap\pi'}|\leq \frac{\sqrt{n}}{\log(n)}\}\right|\leq |\mathcal{S}_{n,n}|^2= \frac{(n!)^2}{e^2}\left(1+O\left(\frac{1}{(n+1)!}\right)\right).
\end{align*}

Now we turn to Lemma~\ref{lem:cor} and use it to bound~\eqref{equ1}. For each pair $\pi,\pi'\in\mS_{n,n}$ such that $|D_{\pi\cap\pi'}|=o\left( \frac{n}{\sqrt{\log(n)}}\right)$, from~\eqref{eqcor} we have
\begin{align*}
\mathbb{E}[W(\pi)W(\pi')]\leq \frac{o\left( \frac{n}{\sqrt{\log(n)}}\right)}{n}+\frac{o(n\sqrt{\log(n)})}{(1-\varepsilon)n\log(n)}\to 0.
\end{align*} 
Moreover, 
\begin{align*}
\mathbb{E}[W(\pi)W(\pi')]\left(\sqrt{(1-\varepsilon)n\log(n)}\right)^2\leq (1+o(1))(1-\varepsilon)o\left(n\sqrt{\log(n)}\right)+o\left(n\sqrt{\log(n)}\right)=o\left(n\sqrt{\log(n)}\right).
\end{align*}
Hence, based on Lemma~\ref{lem:cor} part $(ii)$, we have:
\begin{align*}
&\quad\ \sum_{\substack{\pi,\pi'\in\mathcal{S}_{n,n}\\|D_{\pi\cap\pi'}|=o\left( \frac{n}{\sqrt{\log(n)}}\right)}}\mathbb{P}\!\left(W(\pi)>\sqrt{(1-\varepsilon)n\log n},\ W(\pi')>\sqrt{(1-\varepsilon)n\log n}\right)\\
&\leq \sum_{\substack{\pi,\pi'\in\mathcal{S}_{n,n}\\|D_{\pi\cap\pi'}|=o\left( \frac{n}{\sqrt{\log(n)}}\right)}} e^{-\frac{8(1-\varepsilon)n\log(n)}{(1-o(1))}}+\exp\left(\scriptstyle o\left(n\sqrt{\log(n)}\right)\right)\mathbb{P}\!\left(W(\pi)>\sqrt{(1-\varepsilon)n\log n}\right)\mathbb{P}\!\left(W(\pi')>\sqrt{(1-\varepsilon)n\log n}\right)\\
&\stackrel{}{\leq}|\mathcal{S}_{n,n}|^2 e^{-\frac{8(1-\varepsilon)n\log(n)}{(1-o(1))}}+\exp\left(\scriptstyle o\left(n\sqrt{\log(n)}\right)\right)\sum_{\scriptscriptstyle\pi,\pi'\in\mathcal{S}_{n,n}}\mathbb{P}\!\left(W(\pi)>\sqrt{(1-\varepsilon)n\log n}\right)\mathbb{P}\left(W(\pi')>\sqrt{(1-\varepsilon)n\log(n)}\right)\\
&\stackrel{(a)}{=}o(1)+\exp\left(o\left(n\sqrt{\log(n)}\right)\right)\mathbb{E}[N\mathbf{1}_{\mathcal{H}_1\cap\mathcal{H}_2}|\mathbf{B},\mathbf{Y}]^2=\exp\left(o\left(n\sqrt{\log(n)}\right)\right)\mathbb{E}[N\mathbf{1}_{\mathcal{H}_1\cap\mathcal{H}_2}|\mathbf{B},\mathbf{Y}]^2,
\end{align*}
where in $(a)$ we used the fact that $|\mathcal{S}_{n,n}|^2=\exp(2n\log(n)(1-o(1)))$ and $\mathbb{E}[N\mathbf{1}_{\mathcal{H}_1\cap\mathcal{H}_2}|\bby]^2=\sum_{\substack{\pi,\pi'\in\mathcal{S}_{n,n}}}\mathbb{P}\left(W(\pi)>\sqrt{(1-\varepsilon)n\log(n)}\right)\mathbb{P}\left(W(\pi')>\sqrt{(1-\varepsilon)n\log(n)}\right)$.

The only remained part is to control the second group in \eqref{equ1}. As explained before, this group consists of pair of permutations with strong correlation but the size of the whole group is small which helps to control it and show that it is of order of $o\left(\mathbb{E}[N\mathbf{1}_{\mathcal{H}_1\cap\mathcal{H}_2}|\bby]^2\right)$. More precisely,
\begin{align*}
&\frac{1}{\mathbb{E}[N\mathbf{1}_{\mathcal{H}_1\cap\mathcal{H}_2}|\bby]^2}\sum_{\substack{\pi,\pi'\in\mathcal{S}_{n,n}\\ |D_{\pi\cap\pi'}|=\Omega\left(\frac{n}{\sqrt{\log(n)}}\right)}}\mathbb{P}\!\left(W(\pi)>\sqrt{(1-\varepsilon)n\log n},\ W(\pi')>\sqrt{(1-\varepsilon)n\log n}\right)\\
& \stackrel{(a)}{\leq}\exp\left(-(1+\varepsilon)n\log n (1-o(1))\right)\hspace{-0.7cm}\sum_{m=\Omega\left(\frac{n}{\sqrt{\log(n)}}\right)}\hspace{-0.7cm}\exp\left((2n\log n -m\log m)(1-o(1))\right)\exp\left(-\frac{(1-\varepsilon)n\log n (1-o(1))}{1 + \frac{b^{(\pi)}_n + b^{(\pi')}_n}{2} + (1+o(1))\alpha_n}\right)\\
&\stackrel{(b)}{\leq} \sum_{m=\Omega\left(\frac{n}{\sqrt{\log(n)}}\right)} \exp\left((2n\log n -m\log m)(1-o(1))-\frac{(1-\varepsilon)n\log n (1-o(1))}{1+(1+o(1))\frac{m}{n}}-(1+\varepsilon)n\log n (1-o(1))\right)\\
&\leq \sum_{m=\Omega\left(\frac{n}{\sqrt{\log(n)}}\right)} \exp\left(\frac{(1-\varepsilon)(1+o(1))m\log n}{1+(1+o(1))\frac{m}{n}}-m\log m\right)\\
&\stackrel{(c)}{\leq} \sum_{m=\Omega\left(\frac{n}{\sqrt{\log(n)}}\right)} \exp\left(\frac{(1-\varepsilon)(1+o(1))m\log n}{1+(1+o(1))\frac{m}{n}}-m\log n (1-o(1))\right)\\
&\stackrel{}{\leq} \sum_{m=\Omega\left(\frac{n}{\sqrt{\log(n)}}\right)} \exp\left(\left(\frac{(1-\varepsilon)(1+o(1))}{1+(1+o(1))\frac{m}{n}}-1\right)m\log n (1+o(1))\right)\stackrel{(d)}{=}o(1),
\end{align*} \luca{à reprendre ici....}
where $(a)$ follows from these facts that $\mathbb{E}[N\mathbf{1}_{\mathcal{H}_1\cap\mathcal{H}_2}|\bby]\geq \exp\!\left(\frac{1+\varepsilon}{2} n \log n (1+o(1))\right)$ and for all pair of permutations with $|D_{\pi\cap\pi'}|=m$ we can use the same bound based on Lemma~\ref{lem:cor} and the number of all them is based on equation~\eqref{numberr} is of order of $\exp\left((2n\log n -m\log m)(1-o(1))\right)$. In $(b)$ we use this fact that for pair of permutations that $|D_{\pi\cap\pi'}|=m$ the correlation coefficient equals $\alpha_n=m/n$, also since $b^{(\pi)})_n=O(1/\sqrt{n}+1/\sqrt{d})$ (similarly for $b^{(\pi')})_n$) and $d=\omega(\log n)$ the term $b^{(\pi)})_n+b^{(\pi')})_n$ is of order of $o(m/n)$ for $m=\Omega(n/\sqrt{\log n})$. Inequality $(c)$ holds since $m=\Omega(n/\sqrt{\log n})$ and therefore $m\log m=m\log n (1-o(1))$. $(d)$ is true since the all possible value for $m$ is at most $n$ and the exponential term for all $m=\Omega(n/\sqrt{\log n})$ is of order of $-\theta(m\log n)$.

Based on the proposed bounds for the two groups in \eqref{equ1}, we can conclude that 
\begin{align*}
\mathbb{E}[N^2\mathbf{1}_{\mathcal{H}_1\cap\mathcal{H}_2}\mid\bby]\leq (1+o(1))\exp\left(o\left(n\sqrt{\log n}\right)\right)\mathbb{E}[N\mathbf{1}_{\mathcal{H}_1\cap\mathcal{H}_2}\mid\bby].
\end{align*}
Therefore, the Paley-Zygmund inequality \eqref{spz} proves the Lemma~\ref{lem:subsubsub2}
\end{proof}

Finally, based on Lemma~\ref{lem:sub1_main3} and Lemma~\ref{lem:subsubsub2}, Lemma~\ref{lem:main3} is proved.

\end{proof}

\section{Proofs of \cref{sec:exact}}\label{app:proofs_exact}

\subsection{Proof of Lemma \ref{lem:sec:exact:1}}
\begin{proof}[Proof of Lemma \ref{lem:sec:exact:1}]
We take $\alpha_0=\frac{\varepsilon}{1+\varepsilon}$ and first deal with the case $2 \leq t \leq \alpha_0 n$.

Observe that
\begin{align*}
V(\pi)&\stackrel{(a)}{=}\frac{\rho^2}{1-\rho^2}\left(\sum_{\substack{1\leq i< j \leq n\\ \pi(i,j)\neq(i,j)}}B^2_{i,j}-B_{\pi(i,j)}B_{i,j}\right)-\frac{\rho}{\sqrt{1-\rho^2}}\left(\sum_{\substack{1\leq i< j \leq n\\ \pi(i,j)\neq(i,j)}}\left(B_{\pi(i,j)}-B_{i,j}\right)Z_{i,j}\right)\\
&+\frac{\eta^2}{1-\eta^2}\left(\sum_{\substack{1\leq i \leq n, \ 1\leq j\leq d\\ \pi(i)\neq i}}Y^2_{i,j}-Y_{\pi(i),j}Y_{i,j}\right)-\frac{\eta}{\sqrt{1-\eta^2}}\left(\sum_{\substack{1\leq i \leq n, \ 1\leq j\leq d\\ \pi(i)\neq i}}\left(Y_{\pi(i),j}-Y_{i,j}\right)Z'_{i,j}\right)
\end{align*}
where in $(a)$ we used these facts that $A_{i,j}=\rho B_{i,j}+\sqrt{1-\rho^2}Z_{i,j}$ and $X_{i,j}=\eta Y_{i,j}+\sqrt{1-\eta^2}Z'_{i,j}$ where $Z_{i,j}$ and $Z'_{i,j}$ independent standard Gaussian random variables.

Now, if we consider event $\mathcal{H}_2$, defined in Lemma~\ref{lem:sub2} for each $\pi$ in $\mS_{n,t}$ we have
\begin{align*}
\sum_{\substack{1\leq i< j \leq n\\ \pi(i,j)\neq(i,j)}}B^2_{i,j}-B_{\pi(i,j)}B_{i,j}&\stackrel{}{\geq}|D^{E}_{\pi}|-2C|D_{\pi}|\sqrt{n\log n}\\
&\stackrel{(a)}{\geq} t\left(n-\frac{t}{2}\right)(1-o(1))
\end{align*}
with high probability. In $(a)$ we used the fact that $|D^E_{\pi}|\geq |D_{\pi}|(n-|D_{\pi}|/2)$. Similarly, we have with high probability
\begin{align*}
\sum_{\substack{1\leq i \leq n, \ 1\leq j\leq d\\ \pi(i)\neq i}}Y^2_{i,j}-Y_{\pi(i),j}Y_{i,j}\geq td(1-o(1))
\end{align*}
where we used this fact that $d=\omega(\log n)$.

Moreover, for the $Z_{i,j}$ part on the event $\mathcal{H}_2$ we have
\begin{align*}
\sum_{\substack{1\leq i< j \leq n\\ \pi(i,j)\neq(i,j)}}\left(B_{\pi(i,j)}-B_{i,j}\right)Z_{i,j}&=\mathcal{N}\left(0,\sum_{\substack{1\leq i< j \leq n\\ \pi(i,j)\neq(i,j)}}\left(B_{\pi(i,j)}-B_{i,j}\right)^2\right)\\
&=\mathcal{N}\left(0,\sum_{\substack{1\leq i< j \leq n\\ \pi(i,j)\neq(i,j)}}B^2_{\pi(i,j)}+B^2_{i,j}-2B_{\pi(i,j)}B_{i,j}\right)\\
&=\mathcal{N}\left(0,2t\left(n-\frac{t}{2}\right)(1+o(1))\right)
\end{align*}
Similarly, for the $Z'_{i,j}$ part on on the event $\mathcal{H}_2$ we have
\begin{align*}
\sum_{\substack{1\leq i \leq n, 1\leq j\leq d \\ \pi(i)\neq i}}\left(Y_{\pi(i),j}-Y_{i,j}\right)Z'_{i,j}=\mathcal{N}\left(0,2td(1+o(1))\right)
\end{align*}
Based on these observations, for each $\pi$ in $\mS_{n,t}$ we have
\begin{align} 
\dP\left(V(\pi)<0 \cap \mathcal{H}_2\right)
&\leq \dP\left(\mathcal{N}\left(0,\left(\frac{2 \rho^2 t\left(n-\frac{t}{2}\right)}{1-\rho^2}+\frac{2 \eta^2 td}{1-\eta^2}\right)(1+o(1))\right)>\left(\frac{\rho^2 t\left(n-\frac{t}{2}\right)}{1-\rho^2}+\frac{\eta^2 td}{1-\eta^2}\right)(1-o(1))\right) \nonumber \\
&\leq \dP\left(\mathcal{N}(0,1)>\sqrt{\frac{1}{2}\left(\frac{\rho^2}{1-\rho^2} t\left(n-\frac{t}{2}\right)+\frac{\eta^2}{1-\eta^2} td\right)}(1-o(1))\right) \nonumber \\
&\stackrel{(a)}{\leq} \exp\left(-\frac{1}{4}\left(\frac{\rho^2}{1-\rho^2} t\left(n-\frac{t}{2}\right)+\frac{\eta^2}{1-\eta^2} td\right)(1+o(1))\right) \label{ppee2_align} \\
&\stackrel{(b)}{\leq} \exp\left(-\frac{1}{4}\left(\frac{\rho^2}{1-\rho^2} tn\left(1-\frac{\alpha_0}{2}\right)+\frac{\eta^2}{1-\eta^2} td\left(1-\frac{\alpha_0}{2}\right)\right)(1+o(1))\right) \nonumber \\
&\stackrel{(c)}{\leq} \exp\left(-(1+\varepsilon)\left(1-\frac{\alpha_0}{2}\right) t \log n (1+o(1))\right) \nonumber
\end{align}
where in $(a)$ we used the bound $\dP\left(\mathcal{N}(0,1)>x\right)\leq\exp(-x^2/2)$ , in $(b)$ follows from $t\leq \alpha_0 n$, and $(c)$ follows from $\frac{\rho^2n}{1-\rho^2} + \frac{\eta^2d}{1-\eta^2} \geq 4(1 + \varepsilon) \log n$.

Now, using union bound, we obtain
\begin{align*}
\dP \left( \bigcup_{2\leq t \leq \alpha_0 n} \mathcal{E}_t \right)& \leq \dP\left(\mathcal{H}_2^c\right) + \dP \left( \bigcup_{2\leq t \leq \alpha_0 n} \mathcal{E}_t \cap \mathcal{H}_2\right)\\
&\leq o(1)+ \sum_{t=2}^{\lfloor\alpha_0 n\rfloor} |\mS_{n,t}| \ \exp\left(-(1+\varepsilon)\left(1-\frac{\alpha_0}{2}\right)t\log n(1+o(1))\right)\\
&\stackrel{(a)}{\leq} o(1)+\sum_{t=2}^{\lfloor\alpha_0 n\rfloor}  \exp\left(t\log n-(1+\varepsilon)\left(1-\frac{\alpha_0}{2}\right)t\log n(1+o(1))\right)\\
&\stackrel{(b)}{=}o(1)+\sum_{t=2}^{\lfloor\alpha_0 n\rfloor}  \exp\left(-\frac{\varepsilon}{2}t\log n(1+o(1))\right)\\
&\leq o(1)+\frac{\exp\left(-\frac{\varepsilon}{2}\log n(1+o(1))\right)}{1-o(1)}=o(1),
 \end{align*} where in $(a)$ we used the bound $|\mS_{n,t}|\leq n^t= \exp(t\log n)$ and $(b)$ follows from $\alpha_0=\frac{\varepsilon}{1+\varepsilon}$.

We now deal with the remaining case $t >\alpha_0 n$. In this case, proving $\dP_{\textrm{post}}\left(B(\textrm{id},\alpha_0)^c\right)=o(1)$ with high probability will result in $V(\pi)<0$ for all $\pi \in \cS_{n,t}$ with $t >\alpha_0 n$ with high probability, since $Z>1$. For each $\pi$ in $\mS_{n,t}$ we have
\begin{align}
\dE&\left[\exp(-V(\pi))\mathbf{1}_{\mathcal{H}_1\cap\mathcal{H}_2}\right]=\dE\left[\exp\left(-\frac{\rho}{1-\rho^2}\left(V^*_{G}(\pi)-V_{G}(\pi)\right)+\frac{\eta}{1-\eta^2}\left(V^*_{F}(\pi)-V_{F}(\pi)\right)\right)\mathbf{1}_{\mathcal{H}_1\cap\mathcal{H}_2}\right]\nonumber\\
&\stackrel{(a)}{\leq}\exp\left(-\left(\frac{\rho^2t\left(n-\frac{t}{2}\right)}{1-\rho^2}+\frac{\eta^2td}{1-\eta^2}\right)(1-o(1))\right)\dE\left[\exp\left(\frac{\rho}{1-\rho^2}V_{G}(\pi)+\frac{\eta}{1-\eta^2}V_{F}(\pi)\right)\mathbf{1}_{\mathcal{H}_2}\right]\nonumber\\
&\stackrel{(b)}{=}\exp\left(-\left(\frac{\rho^2t\left(n-\frac{t}{2}\right)}{1-\rho^2}+\frac{\eta^2td}{1-\eta^2}\right)(1-o(1))\right)\dE\left[\exp\left(\frac{\rho}{1-\rho^2}V_{G}(\pi)\right)\mathbf{1}_{\mathcal{H}_2}\right]\dE\left[\exp\left(\frac{\eta}{1-\eta^2}V_{F}(\pi)\right)\mathbf{1}_{\mathcal{H}_2}\right]\nonumber\\ \label{rtrt2}
&\stackrel{(c)}{\leq} \exp\left(-\frac{1}{2}\left(\frac{\rho^2}{1-\rho^2}t\left(n-\frac{t}{2}\right)+\frac{\eta^2}{1-\eta^2}td\right)(1-o(1))\right)  \\
&\stackrel{(d)}{\leq} \exp\left(-\frac{t}{4}\left(\frac{\rho^2}{1-\rho^2}n+\frac{\eta^2}{1-\eta^2}d\right)(1-o(1))\right)\nonumber\\
&\stackrel{(e)}{\leq} \exp\left(-(1+\varepsilon)t\log n(1-o(1))\right)\nonumber
\end{align}
where in $(a)$ we used the definition of event $\mathcal{H}_1$, in $(b)$ we used the independence between $V_G(\pi)$ and $V_F(\pi)$, $(c)$ follows from the definition of event $\mathcal{H}_2$, $(d)$ is based on $n-t/2\geq n/2$ and $(e)$ follows from the fact that $\frac{\rho^2n}{1-\rho^2} + \frac{\eta^2d}{1-\eta^2} \geq 4(1 + \varepsilon) \log n$.

Now, we can control the posterior distribution. To this end, first we control its expectation.
\begin{align*}
\dE\left[Z\dP_{\textrm{post}}\left(B(\textrm{id},\alpha_0)^c\right)\mathbf{1}_{\mathcal{H}_1\cap\mathcal{H}_2}\right]&=\sum_{t=\alpha_0n}^{n}\sum_{\pi\in\mS_{n,t}}\dE\left[\exp(-V(\pi))\mathbf{1}_{\mathcal{H}_1\cap\mathcal{H}_2}\right]\\
&\leq \sum_{t=\alpha_0n}^{n}|\mS_{n,t}|\exp\left(-(1+\varepsilon)t\log n(1-o(1))\right)\\
&\leq \sum_{t=\alpha_0n}^{n}\exp\left(-\varepsilon t\log n(1-o(1))\right)=o(1).
\end{align*}

Next, Markov inequality yields
\begin{align*}
\dP&\left(Z\dP_{\textrm{post}}\left(B(\textrm{id},\alpha_0)^c\right)> \log n \dE\left[Z\dP_{\textrm{post}}\left(B(\textrm{id},\alpha_0)^c\right)\right]\right)\\ &\leq \dP\left(\left(\mathcal{H}_1\cap\mathcal{H}_2\right)^c\right)+\dP\left(Z\dP_{\textrm{post}}\left(B(\textrm{id},\alpha_0)^c\right)> \log n \dE\left[Z\dP_{\textrm{post}}\left(B(\textrm{id},\alpha_0)^c\right)\right]\cap\left(\mathcal{H}_1\cap\mathcal{H}_2\right)\right)\\
&\leq o(1)+\frac{1}{\log n}=o(1).
\end{align*}
Therefore, with probability at least $1-o(1)$ we have
\begin{align*}
Z\dP_{\textrm{post}}\left(B(\textrm{id},\alpha_0)^c\right)\leq \log n \dE\left[Z\dP_{\textrm{post}}\left(B(\textrm{id},\alpha_0)^c\right)\right]=o(1).
\end{align*}
Since $Z>1$ this means $\dP_{\textrm{post}}\left(B(\textrm{id},\alpha_0)^c\right)=o(1)$ with probability at least $1-o(1)$.

This shows that $\dP \left( \bigcup_{t>\alpha_0 n} \mathcal{E}_t \right)$ is also of order of $o(1)$ and completes the proof of the achievability of exact recovery.
\end{proof}

\subsection{Proof of Lemma \ref{lem:sec:exact:2}}
\begin{proof}[Proof of Lemma \ref{lem:sec:exact:2}]
To show these two statements, without loss of generality we assume that
\begin{equation*}
\frac{\rho^2}{1-\rho^2}n+\frac{\eta^2}{1-\eta^2}d=4 \log n-\log \log n-a_n,
\end{equation*}
where $a_n=\omega(1)$ and $a_n=o(\log\log n)$.

For the first moment part, similar to what happened in equation~\eqref{ppee2_align} for $t=2$ we have

\begin{align*}
\dE[N]&\stackrel{}{\geq} \binom{n}{2}(1-o(1))  \dE\biggl[
\dP\Bigl(
\mathcal{N}(0,1) \geq \sqrt{\frac{\rho^2}{1-\rho^2}n+ \frac{\eta^2}{1-\eta^2}d}
\Bigr)
\biggr]\\
&=\binom{n}{2} (1-o(1)) \dE\biggl[
\dP\Bigl(
\mathcal{N}(0,1) \geq \sqrt{4\log n-\log\log n-a_n}
\Bigr)
\biggr]\nonumber\\\nonumber
 & \sim \frac{n^2}{4 \sqrt{2 \pi} \sqrt{\log n}} \exp \left(-2 \log n+\frac{\log \log n}{2}+\frac{a_n}{2}\right) \\\nonumber & =\frac{1}{4 \sqrt{2 \pi}} \exp \left(\frac{a_n}{2}\right) \rightarrow \infty,
\end{align*}

For the second moment analysis, we expand the second moment as follows:
\begin{equation}
\label{qq1}
\dE[N^2]=\dE[N]+\sum_{ \substack{\pi ,\pi '\in\mathcal{S}_{n,2}\\\pi\cap\pi'=\varnothing}} \dP\left(V(\pi)<0, V\left(\pi'\right)<0, \mathcal{H}_2\right)+\sum_{ \substack{\pi ,\pi '\in\mathcal{S}_{n,2}\\\pi\cap\pi'\neq\varnothing}} \dP\left(V(\pi)<0, V\left(\pi'\right)<0, \mathcal{H}_2\right).
\end{equation}
Moreover, each term in the above summations can be expressed as:
\begin{equation*}
\dP\left(V(\pi)<0, V\left(\pi^{\prime}\right)<0, \mathcal{H}_2\right)=\dP\left(G_{\pi}>r,G_{\pi'}>r, \mathcal{H}_2\right),
\end{equation*}
where $r=\sqrt{\frac{\rho^2}{1 - \rho^2} n + \frac{\eta^2}{1 - \eta^2} d}=(1+o(1))\sqrt{4\log n-\log\log n-a_n}.$, and $G_\pi$ and $G_\pi'$ are two standard Gaussian random variables with the following covariance
\begin{align*}
c_n=\frac{\frac{\rho^2}{1-\rho^2}\sum\limits_{(i,j)\in D^E_{\pi}\cap D^E_{\pi'}}\left(B_{\pi(i,j)}-B_{i,j}\right)\left(B_{\pi'(i,j)}-B_{i,j}\right)+\frac{\eta^2}{1-\eta^2}\sum\limits_{j\in[d],i\in D_{\pi}\cap D_{\pi'}}\left(Y_{\pi(i),j}-Y_{i,j}\right)\left(Y_{\pi'(i),j}-Y_{i,j}\right)}{\sqrt{\frac{\rho^2}{1-\rho^2}\hspace{-0.5cm}\sum\limits_{\substack{1\leq i< j \leq n\\ \pi(i,j)\neq(i,j)}}\hspace{-0.5cm}\left(B_{\pi(i,j)}-B_{i,j}\right)^2+\frac{\eta^2}{1-\eta^2}\hspace{-0.5cm}\sum\limits_{\substack{i\in[n],j\in[d]\\ \pi(i)\neq i}}\hspace{-0.5cm}\left(Y_{\pi(i),j}-Y_{i,j}\right)^2}\sqrt{\frac{\rho^2}{1-\rho^2}\hspace{-0.5cm}\sum\limits_{\substack{1\leq i< j \leq n\\ \pi'(i,j)\neq(i,j)}}\hspace{-0.5cm}\left(B_{\pi(i,j)}-B_{i,j}\right)^2+\frac{\eta^2}{1-\eta^2}\hspace{-0.5cm}\sum\limits_{\substack{i\in[n],j\in[d]\\ \pi'(i)\neq i}}\hspace{-0.5cm}\left(Y_{\pi(i),j}-Y_{i,j}\right)^2}}
\end{align*}

Since if $\pi\cap\pi'=\varnothing$ then $|D_{\pi}\cap D_{\pi'}|=0$ and $|D^E_{\pi}\cap D^E_{\pi'}|=4$ and if $\pi\cap\pi'\neq\varnothing$ then $|D_{\pi}\cap D_{\pi'}|=1$ and $|D^E_{\pi}\cap D^E_{\pi'}|=n-2$ along with property of event $\mathcal{H}_2$ we have 
\begin{enumerate}[(i)]
\item If $\pi\cap\pi'=\varnothing$ then
\begin{align*}
c_n&\leq \frac{\frac{\rho^2}{1 - \rho^2} \left(4+C\sqrt{n\log n}\right) + \frac{\eta^2}{1 - \eta^2} C\sqrt{d\log n}}{\frac{\rho^2}{1 - \rho^2} \left(4n-C\sqrt{n\log n}\right) + \frac{\eta^2}{1 - \eta^2} \left(4d-C\sqrt{d\log n}\right)}\\
&\leq \frac{4+C\sqrt{n\log n}}{4n-C\sqrt{n\log n}}+\frac{ C\sqrt{d\log n}}{4d-C\sqrt{d\log n}}=O\left(\sqrt{\frac{\log n}{n}}+\sqrt{\frac{\log n}{d}}\right).
\end{align*}
\item If $\pi\cap\pi'\neq\varnothing$ then
\begin{align*}
c_n&\leq \frac{\frac{\rho^2}{1 - \rho^2} \left(n-2+C\sqrt{n\log n}\right) + \frac{\eta^2}{1 - \eta^2} \left(d+C\sqrt{d\log n}\right)}{\frac{\rho^2}{1 - \rho^2} \left(4n-C\sqrt{n\log n}\right) + \frac{\eta^2}{1 - \eta^2} \left(4d-C\sqrt{d\log n}\right)}\\
&\leq \frac{n-2+C\sqrt{n\log n}}{4n-C\sqrt{n\log n}}+\frac{ d+C\sqrt{d\log n}}{4d-C\sqrt{d\log n}}\sim \frac{1}{4}.
\end{align*}
\end{enumerate}

For the first summation, toting that \( d = \omega\left(\left(\log n\right)^2\right) \), we obtain that \( c_n r \to 0 \). Hence, based on lemma~\ref{lem:cor} part (ii) we can bound the first summation as follows:
\begin{align*}
\sum_{ \substack{\pi ,\pi '\in\mathcal{S}_{n,2}\\\pi\cap\pi'=\varnothing}} \dP\left(V(\pi)<0, V\left(\pi^{\prime}\right)<0, \mathcal{H}_2\right)&=\sum_{ \substack{\pi ,\pi '\in\mathcal{S}_{n,2}\\\pi\cap\pi'=\varnothing}}\dP\left(G_\pi>r,G_\pi'>r\right)\\
&\kern-8em\leq (1-o(1))\binom{n}{2}\binom{n-2}{2}\left[C'e^{-8r^2}+(1-o(1))\dP\left(G_\pi>r\right)\dP\left(G_\pi'>r\right)\right]\\
&\kern-8em\leq (1+o(1))\dE[N]^2.
\end{align*} 

For the second summation in \eqref{qq1}, applying Lemma \ref{lem:cor} part (i) we obtain:
\begin{align*}
\sum_{ \substack{\pi ,\pi '\in\mathcal{S}_{n,2}\\\pi\cap\pi'=\varnothing}} \dP\left(V(\pi)<0, V\left(\pi^{\prime}\right)<0, \mathcal{A}\right)&=\sum_{ \substack{\pi ,\pi '\in\mathcal{S}_{n,2}\\\pi\cap\pi'=\varnothing}}\dP\left(G_\pi>r,G_\pi'>r\right)\\
&\kern-8em\leq (1-o(1)) \frac{n(n-1)}{2} \times 2(n-2) \times\left[(1+o(1)) \frac{1+c_n}{\sqrt{2 \pi} r} \exp \left(-\frac{r^2}{1+c_n}\right)\right]\\
& \kern-8em\leq C^{\prime \prime} n^3 \log ^{-1 / 2}(n) \exp \left(-\frac{16}{5} \log n+o(\log n)\right)=o(1)=o\left(\dE[Y]^2\right).
\end{align*} 
This completes the proof of the fact that \(\dE[N^2]\leq(1+o(1))\dE[N]^2\). 
\end{proof}

\end{document}